%% file: main.tex
\documentclass{article}

\usepackage{microtype}
\usepackage{graphicx}
\usepackage{subfigure}
\usepackage{booktabs} 

\usepackage[main, final]{neurips_2026}
\usepackage{hyperref}

\usepackage{amsmath}

\usepackage{amssymb}
\usepackage{mathtools}
\usepackage{amsthm}
\usepackage{xcolor}
\usepackage{comment}
\usepackage{amssymb}
\usepackage[capitalize,noabbrev]{cleveref}
\usepackage{algorithm}
\usepackage{algorithmic}
\usepackage{pgfplots}
\pgfplotsset{compat=1.18}
\definecolor{famFT}{HTML}{3B78B8}
\definecolor{famRob}{HTML}{D55E00}
\definecolor{famInf}{HTML}{009E73}

\theoremstyle{plain}
\newtheorem{theorem}{Theorem}[section]

\newtheorem{corollary}[theorem]{Corollary}
\theoremstyle{definition}
\newtheorem{definition}[theorem]{Definition}
\newtheorem{assumption}[theorem]{Assumption}
\theoremstyle{remark}
\newtheorem{remark}[theorem]{Remark}

\newenvironment{squishenumerate}
  {\begin{list}{\arabic{enumi}.}{%
    \usecounter{enumi}%
    \setlength{\itemsep}{0pt}%
    \setlength{\parsep}{0pt}%
    \setlength{\topsep}{0pt}%
    \setlength{\parskip}{0pt}%
    \setlength{\labelwidth}{.5in}%
    \setlength{\labelsep}{0.05in}%
    \setlength{\leftmargin}{.2in}}}
  {\end{list}}

\usepackage[textsize=tiny]{todonotes}

\title{You Can’t Have It Both Ways:
Concept Entanglement Limits Diffusion Model Unlearning}

\newif\ifcomments
\commentstrue        

\newcommand{\ali}[1]{\textcolor{orange}{[ali: #1]}}
\newcommand{\varun}[1]{\textcolor{red}{[Varun: #1]}}

\newcommand{\rs}{r_s}
\author{%
  Yian Wang,~
  Ali Ebrahimpour-Boroojeny,~
  Hari Sundaram,~ 
  Varun Chandrasekaran 
  \\[0.5em]
  University of Illinois Urbana-Champaign \\
  Urbana, IL, USA
}

\begin{document}
\maketitle


\input{sections/00Abstract}
\input{sections/01Introduction}

\input{sections/02Related}

\input{sections/03Motivation}

\input{sections/04Proof}

\input{sections/05Experiment}
\input{sections/06Discussion}
\input{sections/07Limitation}
\newpage
\bibliography{references}   
\bibliographystyle{plainnat}
\newpage
\appendix
{\large \begin{center} {\bf Appendix} \end{center}}
\input{sections/app/01}
\input{sections/app/02}
\input{sections/app/03}
\input{sections/app/04}
\input{sections/app/05}
\input{sections/app/06}

\input{sections/app/07}
\input{sections/app/08}
\input{sections/app/09}
\input{sections/app/10}
\input{sections/app/11}
\input{sections/app/13}
\input{sections/app/14}
\input{sections/app/15}

\end{document}

%% file: sections/00Abstract.tex
\begin{abstract}
Concept unlearning in text-to-image diffusion models aims to suppress a target concept (e.g., \texttt{horse}) while preserving related but distinct content (e.g., \texttt{donkey}), yet existing methods either leak under indirect prompts or visibly degrade other concepts. We show that these failure modes stem from the geometry of concept representations rather than from any particular algorithm. Formalizing concepts as activation-space regions, we prove that the overlap between a target and other concepts lower-bounds the damage any robust erasure must inflict on them, with the trade-off scaling linearly in the degree of overlap. Across thirteen unlearning methods, including methods designed to preserve non-target concepts, no method achieves both strong erasure and strong neighbor preservation: STEREO nearly eliminates indirect leakage but cuts neighbor generation by more than 75\%, while sparse inference-time methods preserve neighbors but leak. Damage increases with our overlap measure, monotonically so for STEREO; the $\kappa$-scaling reproduces on SDXL, and neighbor-selective damage recurs on FLUX. Perfect unlearning is the wrong target for entangled concepts; methods should be evaluated on the Pareto frontier our theorem establishes.
\end{abstract}

%% file: sections/01Introduction.tex
\section{Introduction}
\label{sec:intro}
Text-to-image diffusion models~\citep{ho2020denoising, rombach2022high, saharia2022photorealistic} have rapidly become the dominant generative paradigm for high-fidelity image synthesis, with deployments ranging from creative tools to commercial APIs.
Privacy concerns and copyrights have motivated active research on \emph{concept unlearning}~\citep{gandikota2023erasing}: given a trained model and an undesirable concept (e.g., an object category, an artistic style, or a specific identity), the goal is to suppress the concept while preserving the model's general capabilities. In this work, we focus on object-category concepts, where benchmarks and detectors are most mature.
A growing body of work has proposed methods for this task, ranging from fine-tuning-based approaches that permanently alter model weights~\citep{gandikota2023erasing,fan2023salun,zhang2024defensive,srivatsan2025stereo} to inference-time interventions that block concept-specific activations without modifying the underlying model~\citep{li2024get,lyu2024one,cywinski2025saeuron}. These methods are typically evaluated on two families of metrics: \emph{unlearning accuracy} (UA), measuring whether the target concept has been successfully suppressed, and \emph{retain accuracy} (RA), measuring whether the model's performance on non-target concepts remains intact.

A tacit assumption in this evaluation paradigm is that UA and RA can be optimized independently. However, our empirical evidence suggests this assumption is violated (\cref{sec:motivation}). Methods achieving strong erasure (high UA), such as STEREO~\citep{srivatsan2025stereo}, often degrade generation quality on semantically related concepts (low RA).
Methods with better retention, such as SAeUron~\citep{cywinski2025saeuron}, leave the target concept reachable through prompts that omit its name but supply correlated context (e.g., generating a horse from ``a jockey at the racetrack''), a phenomenon we call \emph{concept leakage}, resulting in low UA.
These observations indicate a fundamental tension: \emph{unlearning and retention are inherently coupled when concepts share representational structure}, a coupling visible directly in the model's internal representations, where semantically related concepts (\texttt{horse}/\texttt{pony}/\texttt{donkey}) occupy overlapping regions of activation space while isolated concepts (\texttt{castle}) sit well-separated (\cref{fig:umap_kappa}). 
This shows that concept activation regions are real geometric structures whose overlap reflects semantic similarity, motivating our formal notion of concept activation regions in~\cref{sec:tradeoff}\footnote{The formal definition of concept activation regions is in Appendix~\ref{app:concept_regions}; full empirical construction details and per-concept clustering statistics are in Appendix~\ref{app:activation_regions}.}.
Two failure modes follow from this geometry: \emph{concept leakage}, where the target concept reappears under indirect prompts, and \emph{collateral forgetting}, degraded generation of semantically related but distinct concepts. We show that these are not shortcomings of individual algorithms but dual symptoms of overlapping concept regions, and that any robust erasure method must trade them off.

\begin{figure*}[t]
\centering
\begin{minipage}[c]{0.55\linewidth}
\centering
\includegraphics[width=\linewidth]{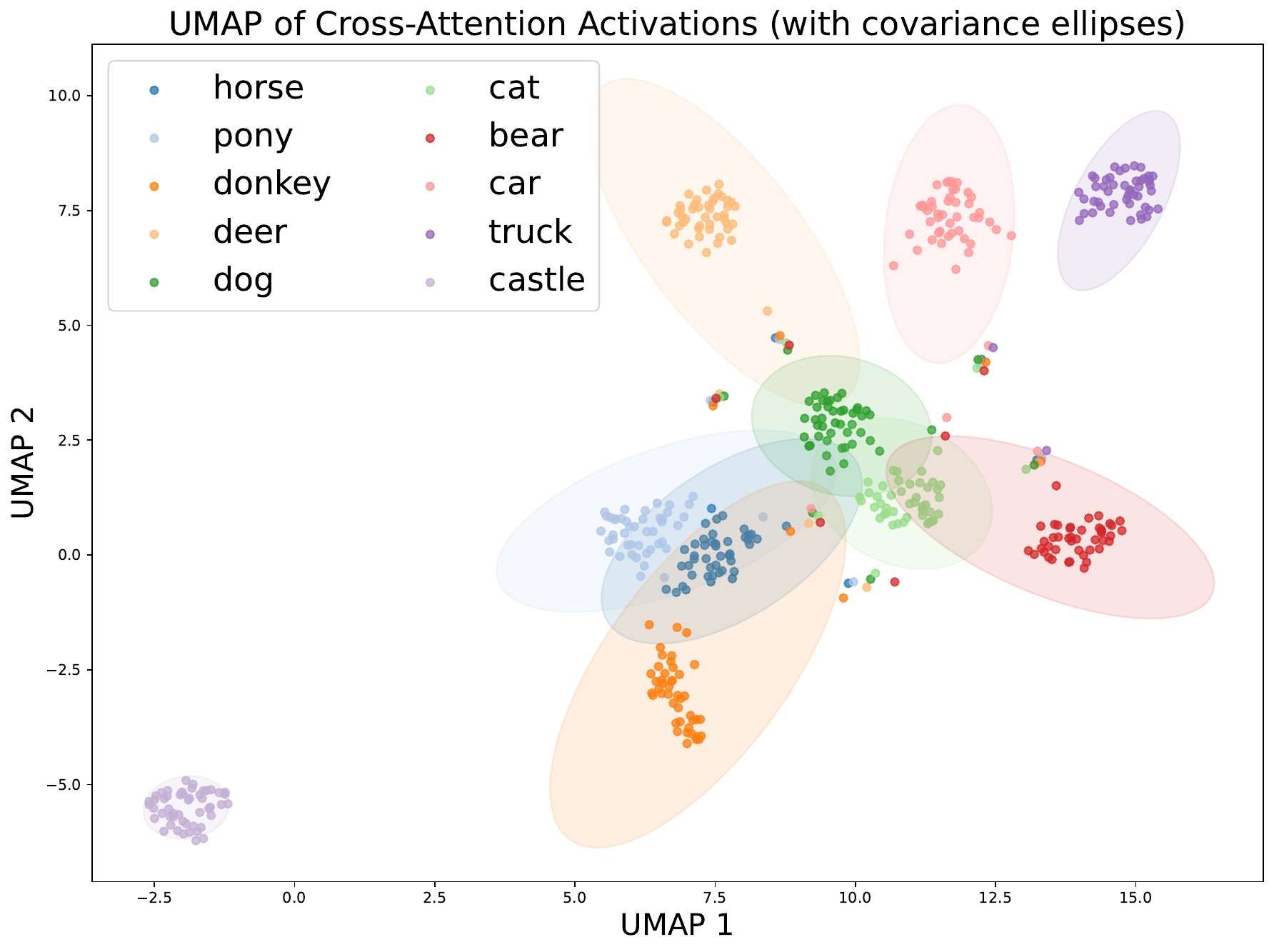}

\vspace{6pt}

\small
\setlength{\tabcolsep}{6pt}
\begin{tabular}{lcl}
\toprule
Concept & $\hat\kappa$ & Neighbors \\
\midrule
dog    & 0.89 & puppy, labrador \\
bear   & 0.82 & grizzly, panda \\
horse  & 0.81 & donkey, foal, mare \\
cat    & 0.77 & kitten, lynx \\
castle & 0.27 & fortress, palace, citadel \\
\bottomrule
\end{tabular}
\end{minipage}\hfill
\begin{minipage}[c]{0.42\linewidth}
\caption{\textbf{Concepts occupy coherent, partially overlapping regions in activation space.}
\emph{Top-left:} UMAP projection of cross-attention activations at \texttt{mid\_block.attentions.0} (step 25/50) for 10 concepts in Stable Diffusion v1.4 (visualization pool; full evaluation pool in~\Cref{app:vocab}); each point is one prompt and ellipses show covariance contours. Semantically related concepts (\texttt{horse}/\texttt{pony}/\texttt{donkey}) cluster together, while isolated concepts (\texttt{castle}) are well-separated, at the bottom left of the figure.
\emph{Bottom-left:} Estimated entanglement coefficient $\hat\kappa$ (fraction of the target's activation region shared with semantic neighbors; see~\cref{def:overlap}). The Neighbors column lists representative discovered neighbors; full per-target neighbor sets are reported in~\cref{app:kappa_estimation}. Animal subtypes have high $\hat\kappa$ and dense neighbor sets; \texttt{castle} has low $\hat\kappa$ despite a comparable number of architectural neighbors.
Ellipses are $(\mathbf z-\boldsymbol\mu_c)^\top\boldsymbol\Sigma_c^{-1}(\mathbf z-\boldsymbol\mu_c)\le 4$ for the mean and covariance of each concept's projected points; they are visualization aids only and are not used to compute $\hat\kappa$, which is estimated in the original activation space. }
\label{fig:umap_kappa}
\label{fig:umap}
\label{tab:kappa}
\end{minipage}
\end{figure*}

\paragraph{Contributions.} 
\textbf{(i)} We give a geometric account of concept entanglement, showing that the literature's two failure modes, 
concept leakage and collateral forgetting, are dual symptoms of overlapping concept regions in cross-attention activation space (quantified by an entanglement coefficient $\hat\kappa$), rather than separate algorithmic shortcomings as treated in prior work~\citep{gandikota2023erasing,zhang2024defensive,srivatsan2025stereo,cywinski2025saeuron}. \textbf{(ii)} Building on this account,
we derive a geometric lower-bound for concept unlearning under empirically-verified assumptions: any method achieving $\varepsilon$-robust erasure must incur utility preservation error $\gamma \geq \kappa(1-\varepsilon)/L$ (\Cref{thm:tradeoff}), with strict impossibility at $\varepsilon = 0$ whenever $\kappa > 0$ (\Cref{cor:impossibility}); this result reframes unlearning as a Pareto problem rather than a dual-objective one. \textbf{(iii)} We validate the predicted $\kappa$-scaled trade-off across thirteen methods on Stable Diffusion v1.4 (all methods on five targets, four methods on five additional targets that fill the intermediate $\hat\kappa$ range), with $\kappa$-scaling reproduced on SDXL, using a new Neighbor Preservation metric tied to the activation-level $\gamma$ in our theorem.

%% file: sections/02Related.tex
\section{Related Work}
\label{sec:background}

Machine unlearning in diffusion models aims to remove a target concept $c$ (e.g., \textit{horse}) from a pretrained model $\mathcal{D}_\theta$
while preserving overall generative capabilities. The goal is an updated model $\mathcal{D}_{\hat{\theta}}$ that suppresses $c$ on target-associated prompts while maintaining quality on unrelated prompts, following prior work on concept erasure in diffusion models. Background on latent diffusion models is in Appendix~\ref{app:background}.

\paragraph{Closed-form editing.} UCE~\citep{gandikota2024unified} and RECE~\citep{gong2024reliable} modify cross-attention projections through analytic updates, without gradient training. RECE additionally searches for embeddings that regenerate the target and removes them in closed form.

\textbf{Fine-tuning methods.}
A large body of work erase concepts by directly modifying the weights of a pretrained diffusion model. Erased Stable Diffusion (ESD)~\cite{gandikota2023erasing} uses negative guidance to align target-token prompts with a neutral anchor. Subsequent work improves the precision by editing cross-attention representations~\citep{wang2025ace} or updating salient concept-related parameters~\citep{fan2023salun,wu2024scissorhands}. CPE~\citep{lee2025concept} trains residual attention gates with an anchoring loss, and TRUST~\citep{kori2026selective} selectively fine-tunes target-associated neurons under Hessian-based regularization. More recent robust methods such as AdvUnlearn~\cite{zhang2024defensive} and STEREO~\cite{srivatsan2025stereo} further defend against prompt-based regeneration attacks.
However, these methods still largely define erasure through target tokens, prompts, or localized parameter subsets. They therefore suppress direct target prompts well, but struggle with \emph{concept leakage} through correlated context; when made more aggressive, their edits can propagate through shared representations and cause \emph{collateral damage} to semantic neighbors.

\textbf{Inference-time methods.}
A complementary line of work avoids modifying $\mathcal{D}_{\theta}$ and instead intervenes during generation by manipulating text embeddings~\citep{li2024get}, projecting prompt tokens away from a target subspace (SAFREE, \citealp{yoon2025safree}), adding safety guidance to the denoising process (SLD, \citealp{schramowski2023safe}), inserting lightweight concept-blocking adapters~\citep{lyu2024one}, or ablating sparse-autoencoder features~\citep{cywinski2025saeuron}. These methods are computationally efficient and often preserve broad model behavior. Intervening at inference time does not by itself make a method conservative: depending on how broadly the intervention displaces activations, these methods span the full range from strong erasure with substantial neighbor damage to strong preservation with residual leakage (Section~\ref{subsec:tradeoff_real}).

\paragraph{Preservation-oriented methods.} Several methods explicitly target the collateral damage we study. CPE anchors non-target concepts during erasure, TRUST restricts updates to target-associated neurons, and SAFREE adapts its filtering to preserve prompt fidelity. Adversarial preservation~\citep{bui2024erasing} identifies and protects the concepts most sensitive to erasure, Receler~\citep{huang2024receler} and MACE~\citep{lu2024mace} localize edits through lightweight erasers and LoRA fusion, and meta-unlearning~\citep{gao2025meta} prevents relearning of erased concepts. We evaluate CPE, TRUST, and SAFREE; they reach more favorable operating points but do not escape the trade-off (Section~\ref{subsec:tradeoff_real}).

\paragraph{Positioning.} That aggressive erasure degrades utility is well
documented, and the preservation-oriented methods above aim to reduce it;
localization work such as \citet{zarei2026localizing} identifies where concepts
are represented. Neither models target-neighbor overlap or derives limits on
unlearning. Our claim is narrower and structural: robust erasure necessarily
degrades related concepts \emph{because} their representations overlap, a limit
we quantify through $\kappa$, prove unavoidable (Theorem~\ref{thm:tradeoff},
Corollary~\ref{cor:impossibility}), and turn into a frontier-based evaluation
protocol. Across all three families, methods implicitly treat concepts as
localized and independently suppressible; this assumption fails whenever
activation regions overlap.

%% file: sections/03Motivation.tex
\section{Motivation}
\label{sec:motivation} 
Current concept unlearning methods are designed to suppress a concept token (e.g., ``\texttt{horse}'') 
so as to remove the corresponding visual concept without substantially affecting other concepts. This design implicitly assumes that visual concepts are represented as localized, token-aligned features in the model's joint text-image space.
In this section, we present empirical evidence that this assumption fails: 
in practice, unlearning a target concept produces 
\emph{collateral damage} to the concepts with which it is entangled. 
\footnote{We define entanglement formally in \cref{subsec:definitions} based on activation regions in the model's cross-attention space.}
The severity of this damage is coupled to both the strength of the erasure applied to the target concept and the extent of its entanglement with other concepts.
This coupling is the phenomenon our theoretical framework analyzes in~\cref{sec:tradeoff}.



The training mechanisms of text-to-image diffusion models make entanglement inevitable.
First, diffusion models condition on prompt embeddings produced by text encoders that often exhibit bag-of-words behavior~\citep{yuksekgonul2022and}. As a result, prompts that omit a concept token (e.g., \texttt{horse}) but retain its characteristic context (e.g., \texttt{jockey}, \texttt{saddle}, \texttt{racetrack})
can still induce activations associated with that concept (see~\cref{fig:bow_context_span}). This implies that the visual concept is not localized in a single token, but is instead distributed across multiple correlated context tokens. Second, semantically related concepts share much of this contextual structure (e.g., \emph{a horse in a field''} vs.\ \emph{a donkey in a field''}), which leads to overlapping activation regions in the model’s internal representation space.
Whether this overlap is already present in text space depends on the encoder. Under our estimator, SDXL's CLIP text embeddings place nine of ten concepts in overlapping regions, whereas the T5 encoder of FLUX shows no measurable overlap for any of our five targets (Appendix~\ref{app:textspace}). Entanglement in the denoiser therefore need not be inherited from the text encoder; its source and severity depend on both the text representation and the denoiser computation. We treat this comparison as descriptive rather than as a causal decomposition.

\Cref{fig:concept_leakage} illustrates an example of the trade-off due to this entanglement. It shows two unlearning methods using \texttt{horse} as the erasure target.
Method that perform localized erasure (ESD~\cite{gandikota2023erasing}) successfully suppress generation when the target token is named, but regenerate the concept through indirect prompts that name only context words, which we call \emph{concept leakage}. 
Methods that erase more aggressively (STEREO~\cite{srivatsan2025stereo}) suppress this leakage but produce visibly degraded outputs for some semantically similar concepts, such as \texttt{donkey} and \texttt{zebra}, that are entangled with the target concept (see~\cref{tab:kappa}). We call this effect \emph{collateral damage}.

Table~\ref{tab:main_results_avg} quantifies this coupling. Averaged over five targets, STEREO achieves near-complete erasure (UA $=99.8$, IRR $=3.5$) but retains only NP $=18.0$ on semantic neighbors, whereas SEOT preserves neighbors (NP $=85.1$) but leaves an indirect recovery rate of $24.5$. The damage also tracks entanglement: STEREO's NP falls monotonically from $31.96$ on \texttt{castle} ($\hat\kappa=0.27$) to $1.81$ on \texttt{dog} ($\hat\kappa=0.89$; Section~\ref{subsec:kappa_scaling}).

\begin{figure*}[htbp]
\centering
\begin{minipage}[c]{0.5\linewidth}
\centering
\setlength{\tabcolsep}{4pt}
\renewcommand{\arraystretch}{0.9}
\begin{tabular}{c c c c}
 & \textbf{Direct} & \textbf{Indirect} & \textbf{Neighbor} \\
\midrule
\textbf{ESD} &
\includegraphics[width=0.28\linewidth]{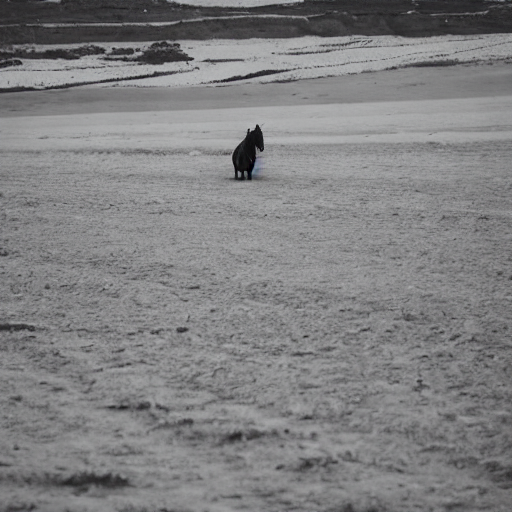} &
\includegraphics[width=0.28\linewidth]{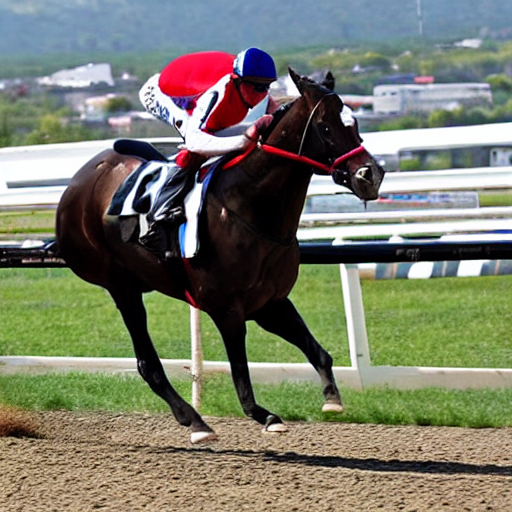} &
\includegraphics[width=0.28\linewidth]{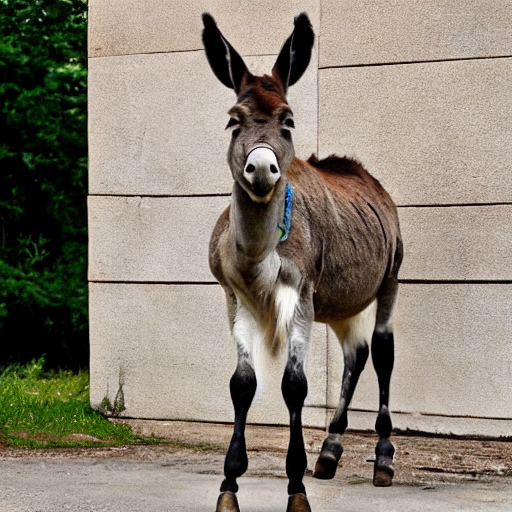} \\
\textbf{STEREO} &
\includegraphics[width=0.28\linewidth]{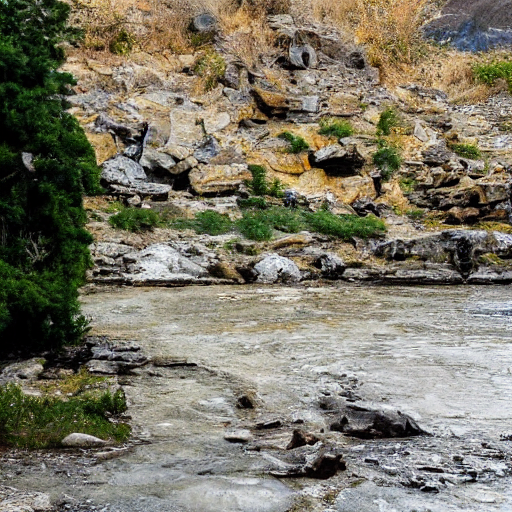} &
\includegraphics[width=0.28\linewidth]{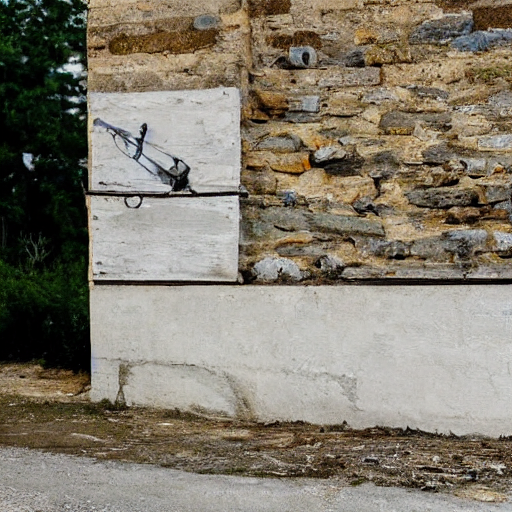} &
\includegraphics[width=0.28\linewidth]{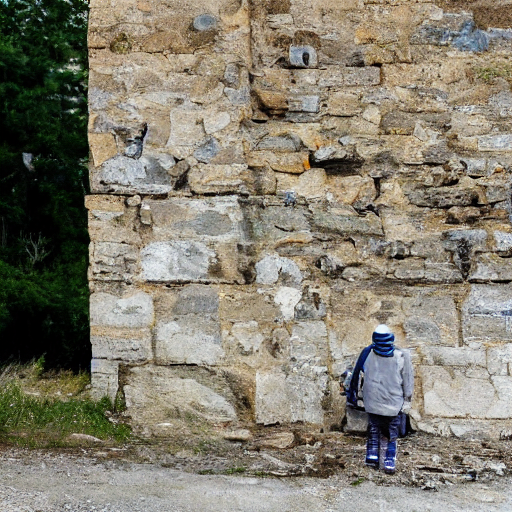} \\
\end{tabular}
\end{minipage}%
\hfill
\begin{minipage}[c]{0.38\linewidth}
\caption{
\textbf{Concept entanglement manifests as two coupled effects.}
Each row erases {horse} and is probed with three prompt types: \emph{Direct} (mentions ``\texttt{horse}''), \emph{Indirect} (uses correlated context like \texttt{jockey} but omits ``\texttt{horse}''), and \texttt{Neighbor} (semantically similar concept \texttt{donkey}). ESD leaks horses through indirect prompts but preserves donkey; STEREO suppresses leakage but destroys donkey. We formalize this duality in~\cref{sec:tradeoff}.
}
\label{fig:concept_leakage}
\end{minipage}
\end{figure*}

%% file: sections/04Proof.tex
\section{The Robustness-Retention Trade-off: A Formal Impossibility Result}
\label{sec:tradeoff}

We now formalize the coupling observed in~\cref{sec:motivation} as a provable lower bound on the trade-off between erasure robustness and concept preservation. The governing quantity is the \emph{concept overlap coefficient} $\kappa$, which measures the fraction of the target concept's activation region shared with the remaining concepts.

\subsection{Setup and Definitions}
\label{subsec:definitions}

Let $\mathcal{D}_\theta$ be a pretrained text-to-image diffusion model with U{-}Net noise predictor $\boldsymbol{\epsilon}_\theta$ and text encoder $f_\psi : \mathcal{P} \to \mathbb{R}^d$, where $\mathcal{P}$ is the space of text prompts. 
For a prompt $p \in \mathcal{P}$, we denote its embedding by $\mathbf{e}_p = f_\psi(p) \in \mathbb{R}^d$, and write $p_\theta(\mathbf{x} \mid p)$ for the conditional distribution over images induced by $\mathcal{D}_\theta$.
Let $\mathcal{D}_{\hat\theta}$ denote a concept-erased model (CEM) obtained by applying an unlearning procedure. 
We correspondingly write $\boldsymbol{\epsilon}_{\hat\theta}$ for the edited U-Net and $p_{\hat\theta}(\mathbf{x} \mid p)$ for the induced conditional distribution. 

\noindent{\bf Concept activation regions.}
For a fixed probe layer $\ell^*$ and denoising timestep $t^*$, let $\boldsymbol{\Phi}^{\ell^*}_\theta(p)$ denote the mean-pooled cross-attention activation produced by prompt $p \in \mathcal{P}$. 
For a concept $c \in \mathcal{C}$, we define its \emph{activation region} at layer $\ell^*$ as
\begin{equation}
    \mathcal{R}_c (\ell^*)
    \;=\;
    \Bigl\{
        \boldsymbol{\Phi}^{\ell^*}_\theta(p)
        \;\Big|\;
        p \in \mathcal{P},\;
        \mathbb{O}_{\mathcal{X}}\!\left(\mathbf{x}_{\theta}(p),\, c\right) = 1
    \Bigr\}
    \;\subseteq\; \mathbb{R}^{d_{\ell^*}},
\end{equation}
where (i) $\mathbb{O}_{\mathcal{X}} : \mathcal{X} \times \mathcal{C} \to \{0,1\}$ is an oracle indicating whether concept $c$ is present in image $x$, (ii) $\mathbf{x}_{\theta}(p) \sim p_{\theta}(\cdot \mid p)$ denotes an image sampled from $\mathcal{D}_\theta$ conditioned on prompt $p$, and (iii) $d_{\ell^*}$ is the per-token channel dimension at layer at layer $\ell^*$.
We also define the closed $\rho$-neighborhood of $\mathcal{R}_c$ as:
\begin{equation}
    \mathcal{R}_c^{(\rho)} (\ell^*)
    \;=\;
    \bigl\{\mathbf{h} \in \mathbb{R}^{d_{\ell^*}} \;\big|\;
    \inf_{\mathbf{r}  \in \mathcal{R}_c}\|\mathbf{h} - \mathbf{r}\|_2 \leq \rho\bigr\}
\end{equation}

When clear from context, we drop $\ell^*$ and simply write $\boldsymbol{\Phi}_\theta(p)$ and $\mathcal{R}_c^{(\rho)}$ for brevity.
The key quantity governing the trade-off is the degree of representational overlap between the target concept and other concepts, defined as follows.

\begin{definition}
\label{def:overlap}
The \emph{entanglement coefficient} of target concept $c_u$ with fattening radius $\rho > 0$ is
\[
    \kappa(c_u,\,\rho)
    \;=\;
    \frac{
        \mathrm{vol}\!\left(
            \mathcal{R}_{c_u}^{(\rho)}
            \cap
            \bigcup_{c' \in \mathcal{C} \setminus \{c_u\}} \mathcal{R}_{c'}^{(\rho)}
        \right)
    }{
        \mathrm{vol}\!\left(\mathcal{R}_{c_u}^{(\rho)}\right)
    },
\]
where $\mathrm{vol}(\cdot)$ denotes the $n$-dimensional volume of subsets of $\mathbb{R}^n$.
\end{definition}

\noindent Intuitively, $\kappa \in [0,1]$ measures what fraction of the target concept's representational footprint overlaps with that of other concepts in the model. When $\kappa = 0$, the concept is fully disentangled and can in principle be erased without collateral damage. When $\kappa > 0$, any edit that displaces activations in $\mathcal{R}_{c_u}$ will inevitably disturb overlapping portions of other concepts' activation regions.
This formulation aggregates overlap across all concepts. In practice, however, the contribution to $\kappa$ is dominated by concepts that occupy nearby regions of the activation space and distant concepts contribute negligibly to the intersection. 

\noindent{\bf Quantifying entanglement.}
\Cref{tab:kappa} reports $\hat\kappa$, an empirical lower-bound estimator of the entanglement coefficient $\kappa$ defined above, for five target concepts.
Each $\hat\kappa(c_u)$ is computed as the fraction of activation samples from $c_u$ that lie within a fattening radius $\rho_{c_u}$ of any activation from a discovered neighbor concept; the full estimation procedure, choice of $\rho_{c_u}$, and per-target results are detailed in~\cref{app:kappa_per_target}.

\paragraph{Neighborhood construction.} Neighbors are discovered automatically in the original activation space in two stages. For each target $c_u$ we extract activations for ten semantically plausible candidates and three unrelated controls, and keep as neighbors $\mathcal{N}(c_u)$ the candidates whose centroid cosine distance to $c_u$ falls below the 25th percentile of pairwise centroid distances in that pool (Definition~\ref{def:semantic_neighborhood}); the controls must fall outside $\mathcal{N}(c_u)$. $\hat\kappa$ is then computed from sample-level coverage, not from centroids: it is the fraction of target samples lying within the target's median intra-concept distance $\rho_{c_u}$ of some neighbor sample. Because $\hat\kappa$ measures union coverage, enlarging $\mathcal{N}(c_u)$ can only weakly increase it, but cardinality does not determine it: \texttt{horse} and \texttt{castle} have seven and six neighbors yet $\hat\kappa=0.81$ and $0.27$. The estimate is also stable in practice: for \texttt{horse}, \texttt{mare} alone gives $0.78$ and adding \texttt{pony} gives $0.81$, after which five further neighbors change nothing; moving the percentile threshold from the 10th to the 50th changes $|\mathcal{N}|$ from 3 to 9 for \texttt{horse} and from 1 to 9 for \texttt{castle} while leaving $\hat\kappa$ at $0.81$ and $0.27$ (Appendix~\ref{app:kappa-ablation}).

We now define the two desiderata that any unlearning method must satisfy:

\begin{definition}[$\varepsilon$-robust unlearning]
\label{def:robust}
The CEM $\mathcal{D}_{\hat\theta}$ achieves \emph{$\varepsilon$-robust unlearning} of concept $c_u$ if, for all prompts $p \in \mathcal{P}$,
\begin{equation}
    \Pr_{\mathbf{x} \sim p_{\hat\theta}(\cdot \mid p)}
\!\left[
\mathbb{O}_{\mathcal{X}}(\mathbf{x} ,\, c_u) = 1
\right]
\;\leq\; \varepsilon.
\end{equation}
\end{definition}
\noindent See~\cref{app:concept_regions} for more details on the oracle.

\begin{definition}[$\gamma$-concept preservation]
\label{def:neighbor}
Let $\mathcal{P}_{c'} \subseteq \mathcal{P}$ denote the set of prompts associated with concept $c'$, and let $\mathcal{P}_{-u}$ denote the \emph{non-target prompt distribution}, defined as the uniform mixture
$\mathcal{P}_{-u} \;=\; \tfrac{1}{|\mathcal{C}\setminus\{c_u\}|}\sum_{c' \in \mathcal{C}\setminus\{c_u\}} \mathcal{P}_{c'}$.
The CEM $\mathcal{D}_{\hat\theta}$ achieves \emph{$\gamma$-concept preservation} with respect to layer ${\ell^*}$ if
\begin{equation}
    \mathbb{E}_{p \sim \mathcal{P}_{-u}}\!\left[\bigl\|\boldsymbol{\Phi}^{\ell^*}_{\hat\theta}(p) - \boldsymbol{\Phi}^{\ell^*}_\theta(p)\bigr\|_2\right]
    \;<\; \gamma.
\end{equation}
\end{definition}

\subsection{Main Result: The Trade-off Lower Bound}
\label{subsec:main_result}

In this section, we show that $\varepsilon$-robust unlearning and $\gamma$-concept preservation are fundamentally in tension.
We begin by stating two mild regularity assumptions that formalize how perturbations in activation space affect generation.

\begin{assumption}[Lipschitz continuity of generation]\label{asm:lip}
For a model $\theta'$ and prompt $p$, let
$g_c(\theta',p)=\Pr_{x\sim p_{\theta'}(\cdot\mid p)}[\mathbb{O}_{\mathcal{X}}(x,c)=1]$.
For each concept $c\in\mathcal{C}$ there is a constant $L$ such that, for all
prompts $p,p'\in\mathcal{P}$ and all $\theta',\theta''\in\{\theta,\hat\theta\}$,
\begin{equation}
  \bigl|g_c(\theta',p)-g_c(\theta'',p')\bigr|
  \le L\,\bigl\|\boldsymbol{\Phi}_{\theta'}(p)-\boldsymbol{\Phi}_{\theta''}(p')\bigr\|_2 .
\end{equation}
\end{assumption}
\noindent\emph{Interpretation.} A single probe layer is not injective, and in a
U-Net the decoder also receives encoder activations through skip connections, so
we do not treat $\boldsymbol{\Phi}_\theta(p)$ as a sufficient statistic for generation.
Assumption~\ref{asm:lip} is a smoothness condition along prompt- and
model-dependent quantities: $L$ is an effective constant of the full network
that absorbs changes in the probe activation, the skip activations, and other
hidden states, rather than a property of the probe layer in isolation.
Appendix~\ref{app:lipschitz} estimates $L$ for $\theta'=\theta''=\theta$ over
natural prompt variations; the proof uses the case $\theta'=\theta$,
$\theta''=\hat\theta$, $p'=p$.

\begin{assumption}[Minimum displacement for erasure]
\label{asm:edit}
Any unlearning edit $\theta \mapsto \hat{\theta}$ that achieves $\varepsilon$-robust unlearning of $c_u$ must displace activations within $\mathcal{R}_{c_u}$ by at least $\Delta>0$:
\begin{equation}
\label{eq:delta_lower}
    \forall p \in \mathcal{P}
    \text{ such that }
    \boldsymbol{\Phi}_{\theta}(p)\in \mathcal{R}_{c_u}:
    \quad
    \|\boldsymbol{\Phi}_{\hat{\theta}}(p)-\boldsymbol{\Phi}_{\theta}(p)\|_2
    \geq
    \Delta .
\end{equation}
\end{assumption}
We empirically verify Assumption~\ref{asm:lip} across natural prompt variations and Assumption~\ref{asm:edit} across six unlearning methods erasing \texttt{horse}, including the gradient-based SalUn and TRUST; all measured pairs satisfy Lipschitz with a small empirical constant, 
and displacement propagates to other concepts through shared activation structure.
Full details are in~\cref{app:lipschitz,app:displacement}.

\begin{theorem}[Robustness-Retention trade-off]
\label{thm:tradeoff}
Let Assumptions~\ref{asm:lip} and~\ref{asm:edit} hold with constants $L$ and $\Delta$.
Let $\rho > 0$ be the fattening radius from Definition~\ref{def:overlap}.
Suppose $\mathcal{D}_{\hat\theta}$ achieves $\varepsilon$-robust unlearning of $c_u$
(Definition~\ref{def:robust}).
Then the aggregate collateral displacement over non-target concepts satisfies
\begin{equation}
    \mathbb{E}_{p\sim \mathcal{P}_{-u}}
    \left[
\|\boldsymbol{\Phi}_{\hat{\theta}}(p)-\boldsymbol{\Phi}_{\theta}(p)\|_2
    \right]
    \geq
    \kappa(c_u,\rho)\cdot \Delta .
\end{equation}
Consequently, achieving $\gamma$-concept preservation(Definition~\ref{def:neighbor}) requires
\begin{equation}
\label{eq:tradeoff}
    \boxed{
    \gamma \;\geq\; \kappa(c_u,\rho)\cdot\frac{1-\varepsilon}{L}.
    }
\end{equation}
Equivalently, when $\kappa(c_u,\rho) > 0$, achieving both $\varepsilon$-robustness and $\gamma$-concept preservation simultaneously requires
\begin{equation}
\label{eq:tradeoff_eps}
    \varepsilon \;\geq\; 1 - \frac{L\gamma}{\kappa(c_u,\rho)}.
\end{equation}
\end{theorem}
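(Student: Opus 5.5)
The plan has three stages: lower-bound the collateral displacement by $\kappa\Delta$ using overlap and Assumption~\ref{asm:edit}; lower-bound $\Delta$ by $(1-\varepsilon)/L$ using Assumption~\ref{asm:lip}; then combine the two and rearrange to get \eqref{eq:tradeoff} and \eqref{eq:tradeoff_eps}.

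\textbf{Step 1: collateral displacement via overlap.} Let $\mathcal{O}=\mathcal{R}_{c_u}^{(\rho)}\cap\bigcup_{c'\neq c_u}\mathcal{R}_{c'}^{(\rho)}$. By Definition~\ref{def:overlap}, $\mathcal{O}$ occupies a $\kappa(c_u,\rho)$ fraction of the target's fattened footprint.

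I would argue that prompts $p\sim\mathcal{P}_{-u}$ whose original activation $\boldsymbol{\Phi}_\theta(p)$ lands in $\mathcal{O}$ inherit the displacement lower bound $\Delta$ from Assumption~\ref{asm:edit}. Two points need justifying here. First, $\mathcal{O}$ sits inside the $\rho$-neighborhood of $\mathcal{R}_{c_u}$, not inside $\mathcal{R}_{c_u}$ itself. So I would either read Assumption~\ref{asm:edit} as holding on $\mathcal{R}_{c_u}^{(\rho)}$, or let $\rho$ be small relative to the edit scale. Second, the mass that $\mathcal{P}_{-u}$ places on $\mathcal{O}$ must be at least $\kappa$. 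I would get this by identifying volume fractions with probability mass, i.e.\ assuming non-target activations are distributed at least uniformly with respect to volume on the overlap region.

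Given both points, and since the integrand is nonnegative, I would restrict the expectation to the event $\{\boldsymbol{\Phi}_\theta(p)\in\mathcal{O}\}$:
\[
\mathbb{E}_{p\sim\mathcal{P}_{-u}}\bigl[\|\boldsymbol{\Phi}_{\hat\theta}(p)-\boldsymbol{\Phi}_\theta(p)\|_2\bigr]\ \ge\ \Pr[\boldsymbol{\Phi}_\theta(p)\in\mathcal{O}]\cdot\Delta\ \ge\ \kappa(c_u,\rho)\,\Delta .
\]

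\textbf{Step 2: calibrating $\Delta$ through Lipschitz continuity.} Take any $p$ with $\boldsymbol{\Phi}_\theta(p)\in\mathcal{R}_{c_u}$. By the definition of the activation region, the oracle fires on the original model's sample, so I would treat $g_{c_u}(\theta,p)=1$; taking a ``for some sample'' definition as probability one is itself a convention I would make explicit. Robustness (Definition~\ref{def:robust}) gives $g_{c_u}(\hat\theta,p)\le\varepsilon$.

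Applying Assumption~\ref{asm:lip} with $\theta'=\theta$, $\theta''=\hat\theta$, $p'=p$ yields
\[
1-\varepsilon\ \le\ L\,\|\boldsymbol{\Phi}_{\hat\theta}(p)-\boldsymbol{\Phi}_\theta(p)\|_2 .
\]
Hence $\Delta=(1-\varepsilon)/L$ is an admissible minimal displacement in Assumption~\ref{asm:edit}.

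\textbf{Step 3: combining and rearranging.} Substituting Step 2 into Step 1 gives an expected displacement of at least $\kappa(1-\varepsilon)/L$. Definition~\ref{def:neighbor} requires this quantity to be $<\gamma$, which gives $\gamma\ge\kappa(c_u,\rho)(1-\varepsilon)/L$, i.e.\ \eqref{eq:tradeoff}. When $\kappa>0$, dividing by $\kappa$ and rearranging gives \eqref{eq:tradeoff_eps}.

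\textbf{Main obstacle.} The hard part is Step 1, and specifically the passage from a Lebesgue-volume ratio to a probability under $\mathcal{P}_{-u}$. That step needs a density or uniformity condition, which I would state explicitly as an implicit modeling hypothesis. The extension of Assumption~\ref{asm:edit} from $\mathcal{R}_{c_u}$ to its $\rho$-fattening also needs justifying. I would try handling it with a triangle inequality plus Lipschitz slack, which gives $\Delta-O(\rho)$. If that fails, I would take the fattened version of the assumption as given.
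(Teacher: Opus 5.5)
Your proposal is correct and matches the paper's proof, with the first two stages in swapped order: the paper's Step~1 is your Lipschitz calibration giving the floor $(1-\varepsilon)/L$ on $\mathcal{R}_{c_u}$, its Step~2 is your restriction of the expectation to $\{\boldsymbol{\Phi}_\theta(p)\in\mathcal{O}\}$, and its Step~3 is the same rearrangement. The paper also needs the two extra hypotheses you isolate: it states an explicit ``mass-volume regularity'' condition $\Pr_{p\sim\mathcal{P}_{-u}}[\boldsymbol{\Phi}_\theta(p)\in\mathcal{O}]\ge\kappa(c_u,\rho)$, and it extends Assumption~\ref{asm:edit} to $\mathcal{R}_{c_u}^{(\rho)}$ ``by continuity of the displacement field,'' so your explicit treatment of both (including the $O(\rho)$ slack) is, if anything, more careful than the original.
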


\begin{proof}[Proof sketch]
Assumptions~\ref{asm:lip} and~\ref{asm:edit} together force every activation in $\mathcal{R}_{c_u}$ to be displaced by at least $\Delta \geq (1-\varepsilon)/L$ under any $\varepsilon$-robust edit. By Definition~\ref{def:overlap}, a $\kappa(c_u,\rho)$-fraction of $\mathcal{R}_{c_u}^{(\rho)}$ lies in the support of $\mathcal{P}_{-u}$, so this displacement propagates to at least a $\kappa(c_u,\rho)$-fraction of $\mathcal{P}_{-u}$'s mass, yielding $\mathbb{E}_{p\sim\mathcal{P}_{-u}}[\|\Phi_{\hat\theta}(p)-\Phi_\theta(p)\|_2] \geq \kappa(c_u,\rho)(1-\varepsilon)/L$. Definition~\ref{def:neighbor} then gives the bound on $\gamma$. The full proof including the regularity assumption is in Appendix~\ref{app:proof}.
\end{proof}

\begin{remark}[Scope: any unlearning mechanism]\label{rem:scope}
Theorem~\ref{thm:tradeoff} makes no assumption about how $\hat\theta$ is
obtained. $\mathcal{D}_{\hat\theta}$ is any model achieving $\varepsilon$-robust
unlearning, so the bound applies equally to activation interventions,
closed-form weight edits, gradient-based fine-tuning such as SalUn and TRUST,
and inference-time methods, provided the resulting model displaces target
activations as in Assumption~\ref{asm:edit}. We verify this for
gradient-based methods in Appendix~\ref{app:displacement}: TRUST, for example,
displaces target activations by $15.82\pm3.82$, neighbors by $8.82\pm4.88$, and
controls by only $3.10\pm1.36$.
\end{remark}

\subsection{Consequences and Interpretation}
\label{subsec:consequences}

\noindent{\bf Limitations of simultaneous robust erasure and utility preservation.}
Based on~\Cref{thm:tradeoff}, simultaneous perfect erasure and perfect retention is infeasible whenever concepts are entangled:
\begin{corollary}[Impossibility result]
\label{cor:impossibility}
If $\kappa(c_u,\rho) > 0$,
then no unlearning method can simultaneously achieve $\varepsilon = 0$ (perfect erasure) and $\gamma = 0$ (perfect neighbor preservation).
\end{corollary}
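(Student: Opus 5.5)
The plan is to derive the corollary directly from Theorem~\ref{thm:tradeoff} by contradiction. Suppose some concept-erased model $\mathcal{D}_{\hat\theta}$ achieves both $\varepsilon = 0$ (perfect erasure, Definition~\ref{def:robust} with $\varepsilon=0$) and $\gamma = 0$ preservation. Definition~\ref{def:neighbor} uses a strict inequality, $\mathbb{E}[\cdot] < \gamma$. An expectation of a nonnegative quantity can never be strictly below $0$, so literal $\gamma=0$ preservation is already vacuous. To make the statement meaningful, I will read ``$\gamma = 0$'' as ``the expected collateral displacement equals zero,'' which is equivalent to $\gamma$-preservation holding for every $\gamma>0$. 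I will mention both readings and handle the substantive one.

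Under that reading, I apply the first conclusion of Theorem~\ref{thm:tradeoff}, which holds under Assumptions~\ref{asm:lip} and~\ref{asm:edit}. It gives
\[
\mathbb{E}_{p\sim\mathcal{P}_{-u}}\bigl[\|\boldsymbol{\Phi}_{\hat\theta}(p)-\boldsymbol{\Phi}_\theta(p)\|_2\bigr]\;\ge\;\kappa(c_u,\rho)\,\Delta .
\]
Since $\Delta>0$ by Assumption~\ref{asm:edit} and $\kappa(c_u,\rho)>0$ by hypothesis, the right-hand side is strictly positive. This contradicts zero expected displacement. Equivalently, I can use the boxed bound~\eqref{eq:tradeoff}: with $\varepsilon=0$ it reads $\gamma\ge\kappa(c_u,\rho)/L>0$, where $L$ is the finite Lipschitz constant of Assumption~\ref{asm:lip}. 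So no $\gamma$ below $\kappa/L$ is attainable, and in particular no arbitrarily small $\gamma$ is. Alternatively, substituting $\gamma=0$ into~\eqref{eq:tradeoff_eps} forces $\varepsilon\ge 1$, which contradicts $\varepsilon=0$.

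The only delicate point is justifying $\kappa\Delta>0$, or equivalently $\kappa/L>0$. This needs $L<\infty$, which Assumption~\ref{asm:lip} supplies. It also needs the nondegeneracy built into the theorem: that $\varepsilon$-robust erasure genuinely forces displacement $\Delta\ge(1-\varepsilon)/L=1/L$ on $\mathcal{R}_{c_u}$. That in turn requires the original model to actually generate $c_u$, i.e.\ $g_{c_u}(\theta,p)=1$ on prompts whose activations lie in $\mathcal{R}_{c_u}$. I would state this explicitly as inherited from the theorem's proof rather than reprove it, since it is where the positive constant comes from.
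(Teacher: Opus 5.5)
Your proposal is correct and matches the paper's proof, which sets $\varepsilon=0$ in Eq.~\eqref{eq:tradeoff} to get $\gamma\ge\kappa(c_u,\rho)/L>0$. Your other two points are valid refinements the paper does not spell out: the strict inequality in Definition~\ref{def:neighbor} makes literal $\gamma=0$ preservation vacuous, and the route via $\mathbb{E}[\cdot]\ge\kappa(c_u,\rho)\,\Delta>0$ needs only $\Delta>0$ from Assumption~\ref{asm:edit}, not the $(1-\varepsilon)/L$ floor, so the two justifications are not strictly "equivalent."
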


\begin{proof}
Setting $\varepsilon = 0$ in Eq.~\eqref{eq:tradeoff} gives 
$\gamma \geq \kappa(c_u,\rho)/L > 0$
under the stated conditions.
\end{proof}

\noindent{\bf The Pareto frontier.}
Eq.~\eqref{eq:tradeoff_eps} defines a \emph{feasibility boundary} in the $(\varepsilon, \gamma)$ plane: no method can simultaneously achieve an erasure robustness below $\varepsilon$ and a neighbor preservation error below $\gamma$ unless the pair $(\varepsilon, \gamma)$ lies above the curve
$\gamma = \kappa \cdot \frac{1-\varepsilon}{L}.$
Different unlearning methods correspond to different operating points on or above this curve: aggressive methods achieve small $\varepsilon$ at the cost of large $\gamma$, occupying the upper-left of the frontier; conservative methods accept large $\varepsilon$ to maintain small $\gamma$, occupying the lower-right.
In Section~\ref{sec:experiment}, we verify that the empirically observed operating points across thirteen methods are consistent with this predicted boundary.

%% file: sections/05Experiment.tex
\section{Experiments}
\label{sec:experiment}
We design our experiments to answer three questions, each tied to a component of our contribution:


\begin{squishenumerate}
    \item[\textbf{Q1.}] Does the robustness-retention trade-off predicted by~\cref{thm:tradeoff} manifest empirically across a broad set of unlearning methods? (\cref{subsec:tradeoff_real})
    \item[\textbf{Q2.}] Does the severity of the trade-off vary with the estimated entanglement coefficient $\hat\kappa$, as predicted by the theorem's dependence on $\kappa$? (\cref{subsec:kappa_scaling})
    \item[\textbf{Q3.}] Do the Lipschitz continuity (\cref{asm:lip}) and minimum displacement (\cref{asm:edit}) assumptions underlying~\cref{thm:tradeoff} hold empirically? (\cref{app:lipschitz,app:displacement})
\end{squishenumerate}

\subsection{Experimental Setup}
\label{subsec:setup}

\noindent{\bf Base model and target concepts.}
We use CompVis Stable Diffusion v1.4~\citep{rombach2022high} as the base model $\mathcal{D}_\theta$. Following~\citep{gandikota2023erasing}, all generation uses 50 DDIM steps with guidance scale 7.5; 
full details in~\cref{app:implementation}.
We select five target concepts spanning the range $\hat\kappa \in [0.27, 0.89]$ identified in~\cref{tab:kappa}: 
four high-$\hat\kappa$ animal-subtype concepts (\texttt{horse}, \texttt{dog}, \texttt{cat}, \texttt{bear}, $\hat\kappa \in [0.77, 0.89]$) and one low-$\hat\kappa$ architectural concept (\texttt{castle}, $\hat\kappa = 0.27$). Each target is evaluated against its discovered concept neighborhood and three control concepts (\cref{tab:kappa_per_target}); per-target prompt sets are detailed in~\cref{app:eval_pipeline}.
To densify the intermediate range we add five targets, \texttt{snake}, \texttt{fish}, \texttt{tree}, \texttt{butterfly}, and \texttt{car} ($\hat\kappa=0.37$ to $0.83$), evaluated with four representative methods (Appendix~\ref{app:additional_concepts}). Cross-architecture experiments on SDXL and FLUX are described in Appendix~\ref{app:arch}.


\noindent{\bf Unlearning methods.}
We benchmark 13 methods across three intervention families: closed-form editing
(UCE~\citep{gandikota2024unified}, RECE~\citep{gong2024reliable});
fine-tuning
(ESD~\citep{gandikota2023erasing}, SalUn~\citep{fan2023salun},
EDiff~\citep{wu2024erasediff}, CPE~\citep{lee2025concept},
TRUST~\citep{kori2026selective}, AdvUnlearn~\citep{zhang2024defensive},
STEREO~\citep{srivatsan2025stereo}); and inference-time methods
(SEOT~\citep{li2024get}, SAeUron~\citep{cywinski2025saeuron},
SLD~\citep{schramowski2023safe}, SAFREE~\citep{yoon2025safree}).
AdvUnlearn and STEREO incorporate adversarial training to target the aggressive-erasure end of the trade-off. Method-specific configurations are detailed in~\cref{app:implementation}.

\paragraph{Experimental data and sources.} We use no existing labeled image test set. All images are generated by Stable Diffusion v1.4 or its unlearned variants from four controlled prompt families per target: direct prompts naming the target, indirect recovery prompts built from correlated context that excludes the target name, neighbor prompts naming concepts in $\mathcal{N}(c)$, and control prompts naming unrelated concepts. Correlated context terms are mined from target-matched LAION caption text~\citep{schuhmann2022laion}; LAION images are not used. For each target we generate around 200 images per prompt family (exact counts in~\cref{app:reproducibility}), all with 50 DDIM steps, guidance scale $7.5$, and fixed seeds shared across the original and unlearned models. Prompt files, seeds, the concept vocabulary, and discovered neighborhoods are released with the code (Appendices~\ref{app:implementation} and~\ref{app:evaluation_setup}).


\paragraph{Evaluation.} We report six metrics spanning the two axes of
Theorem~\ref{thm:tradeoff} (full definitions in Appendix~\ref{app:metric_defs}).
Every image is classified by \texttt{openai/clip-vit-base-patch32} over a fixed
65-concept vocabulary (Appendix~\ref{app:vocab}); each concept's text
representation averages normalized CLIP embeddings of four templates, and the
predicted label $\hat y(x)$ is the concept with the highest cosine similarity.
With $\mathcal{X}_{\text{dir}}(c)$ and $\mathcal{X}_{\text{ind}}(c)$ the
images from direct and indirect recovery prompts,
\begin{equation}
\mathrm{UA}(c)=1-\frac{1}{|\mathcal{X}_{\text{dir}}(c)|}\sum_{x\in\mathcal{X}_{\text{dir}}(c)}\mathbf{1}[\hat y(x)=c],
\qquad
\mathrm{IRR}(c)=\frac{1}{|\mathcal{X}_{\text{ind}}(c)|}\sum_{x\in\mathcal{X}_{\text{ind}}(c)}\mathbf{1}[\hat y(x)=c].
\end{equation}
IRR is our worst-case proxy for $\varepsilon$. On labeled images of the five
targets, the classifier attains a micro-averaged TPR of $97.0\%$ and FPR of
$0.55\%$ (Appendix~\ref{app:classifier}). We additionally report In-domain Retain Accuracy (IRA), the fraction of neighbor and control images classified as the concept they were meant to show.
\emph{Retention metrics} quantify collateral damage. 
\emph{Neighbor Preservation} (NP), is the classifier accuracy on generations from prompts naming semantically related concepts (e.g., \texttt{donkey}, \texttt{pony} when erasing \texttt{horse}). We additionally report \emph{Control Preservation} (CP) on unrelated concepts and $\mathrm{DamageGap} = \mathrm{CP} - \mathrm{NP}$, which isolates neighbor-selective damage from uniform degradation. All metrics are reported both absolutely and as relative changes from $\mathcal{D}_\theta$ to enable cross-concept comparison.

\subsection{The Trade-off is Real and Consistent}
\label{subsec:tradeoff_real}
We first test whether the robustness-retention trade-off predicted by~\Cref{thm:tradeoff} manifests empirically across methods. \Cref{tab:main_results_avg} reports each method's performance averaged over the five target concepts.
Per-concept breakdowns are provided in~\cref{app:per_concept_tables} 
\begin{table*}[t]
\centering
\small
\setlength{\tabcolsep}{2.5pt}
\begin{tabular}{llcccccc}
\toprule
& & \multicolumn{2}{c}{\textbf{Erasure}} & \multicolumn{3}{c}{\textbf{Retention}} & \\
\cmidrule(lr){3-4} \cmidrule(lr){5-7}
\textbf{Method} & \textbf{Type} & UA$\uparrow$ & IRR$\downarrow$ & IRA$\uparrow$ & NP$\uparrow$ & CP$\uparrow$ & DamageGap$\downarrow$ \\
\midrule
Original    & --  & $0.0 \pm 0.0$ & $58.7 \pm 38.1$  & $70.45 \pm 12.6$  & $100.0 \pm 0.0$   & $100.0 \pm 0.0$   & -- \\
\midrule
UCE    & Closed-form & $80.6 \pm 18.4$ & $10.5 \pm 23.4$ & $67.1 \pm 12.4$ & $88.6 \pm 25.1$ & $99.3 \pm 3.1$  & $10.7 \pm 25.4$ \\
RECE   & Closed-form & $99.4 \pm 0.9$  & $3.0 \pm 4.0$   & $43.3 \pm 12.1$ & $56.1 \pm 10.6$  & $93.5 \pm 2.0$  & $37.4 \pm 10.2$ \\
\midrule
ESD     & FT  & $80.7 \pm 11.1$  & $20.9 \pm 14.2$  & $59.2 \pm 12.7$  & $80.6 \pm 13.5$  & $97.0 \pm 1.6$  & $16.4 \pm 14.5$  \\
SalUn   & FT  & $93.7 \pm 5.1$  & $12.8 \pm 11.5$  & $46.5 \pm 8.5$  & $58.4 \pm 13.0$  & $90.2 \pm 8.9$  & $31.8 \pm 13.1$  \\
EDiff   & FT  & $79.4 \pm 12.2$  & $17.4 \pm 12.4$  & $55.6 \pm 12.3$  & $73.4 \pm 12.1$  & $95.8 \pm 3.2$  & $22.4 \pm 10.9$  \\
CPE    & FT          & $99.6 \pm 0.9$  & $4.7 \pm 9.4$   & $53.4 \pm 12.4$ & $71.9 \pm 18.1$  & $99.6 \pm 2.6$ & $27.7 \pm 19.3$ \\
TRUST    & FT          & $94.2 \pm 2.2$  & $6.5 \pm 3.5$   & $53.2 \pm 13.7$ & $76.4 \pm 9.1$  & $98.0 \pm 5.0$ & $21.6 \pm 12.2$ \\
\midrule
AdvUnlearn & Robust FT   & $98.5 \pm 1.7$  & $9.0 \pm 11.5$   & $55.4 \pm 13.9$  & $71.5 \pm 7.5$ & $99.4 \pm 0.9$  & $27.9 \pm 7.3$  \\
STEREO  & Robust FT & $99.8 \pm 3.1$  & $3.5 \pm 3.5$   & $19.7 \pm 10.0$  & $18.0 \pm 12.0$  & $70.8 \pm 27.4$  & $52.8 \pm 22.7$  \\
\midrule
SEOT    & Inf. & $82.6 \pm 13.2$  & $24.5 \pm 13.8$  & $59.3 \pm 15.0$  & $85.1 \pm 13.8$  & $98.3 \pm 1.3$ & $13.6 \pm 13.0$  \\
SAeUron$^\dagger$ & Inf. & $78.4 \pm 5.5$ & $27.6 \pm 12.7$ & $42.1 \pm 31.8$ & $59.0 \pm 37.8$ & $89.2 \pm 10.8$ & $30.2 \pm 27.2$  \\
SLD    & Inf. & $99.6 \pm 0.9$  & $0.5 \pm 0.9$   & $35.0 \pm 8.1$  & $58.8 \pm 16.8$  & $89.1 \pm 4.1$  & $30.3 \pm 16.3$ \\
SAFREE & Inf. & $91.4 \pm 11.4$ & $8.4 \pm 10.2$  & $39.9 \pm 8.6$  & $64.7 \pm 16.0$  & $85.7 \pm 12.2$ & $21.0 \pm 19.4$ \\
\bottomrule
\end{tabular}
\caption{\textbf{Averaged evaluation of unlearning methods on the robustness-retention trade-off.} Results are averaged over the five target concepts (\texttt{dog}, \texttt{bear}, \texttt{horse}, \texttt{cat}, \texttt{castle}) with $\hat\kappa \in [{0.27}, {0.89}]$. Cells report mean $\pm$ sample standard deviation across concepts. UA: Unlearning Accuracy; IRR: Indirect Recovery Rate; IRA/CP: In-domain/Cross-domain Retain Accuracy; NP: Neighbor Preservation (Eq.~\eqref{eq:neighbor_pres}); DamageGap: composite retention-damage summary. The \emph{Original} row reports baseline rates for the pretrained $\mathcal{D}_\theta$. A few runs use non-standard image counts; see the per-concept table captions in~\cref{app:per_concept_tables}. $^\dagger$SAeUron is averaged over four concepts (\texttt{dog}, \texttt{bear}, \texttt{horse}, \texttt{cat}); its released sparse autoencoder features do not cover \texttt{castle} (see~\cref{app:per_concept_castle}).}
\label{tab:main_results_avg}
\end{table*}

\noindent{\bf No method escapes the trade-off.}
\Cref{tab:main_results_avg} reveals a consistent pattern across thirteen methods: no method simultaneously achieves high erasure strength (high UA, low IRR) and high neighbor retention. Averaged over concepts, no method reaches both UA $>90$ and NP $>80$. The most balanced operating points come from methods designed to preserve non-target concepts, yet they remain on the frontier: TRUST reaches UA $94.2$ at NP $76.4$, and UCE attains the highest NP ($88.6$) but with weaker and highly variable erasure (UA $80.6\pm18.4$). Methods that achieve near-complete erasure (RECE, CPE, SLD, STEREO; UA $\geq 99.4$) retain NP between $18.0$ and $71.9$. Mechanism family does not determine the operating point: the inference-time method SLD attains the lowest IRR ($0.5$) at NP $58.8$, whereas SEOT, also an inference-time method, preserves neighbors (NP $85.1$) but leaks (IRR $24.5$). Algorithm design thus determines where a method lies on the frontier, while concept entanglement constrains the frontier it faces, consistent with Corollary~\ref{cor:impossibility}. Appendix~\ref{app:method_analysis} interprets these methods as operating points induced by their effective displacement $\Delta$, and~\cref{fig:pareto_per_concept} plots every per-concept operating point of the seven original methods.

\noindent{\bf The trade-off is structural, not a measurement artifact.}
The DamageGap column spans more than 40 points across methods (from UCE at $10.7$ to STEREO at $52.8$), and the NP column spans more than 70 points (from STEREO at $18.0$ to UCE at $88.6$). If the trade-off were driven by noise or evaluation instability, we would expect method-level differences to be dominated by concept-level variation; instead, method identity is the primary determinant of operating region, with concept identity modulating \emph{severity} of the trade-off at each operating point. We characterize this concept-level scaling next (\cref{subsec:kappa_scaling}) and provide a detailed per-concept view for \texttt{horse} in~\cref{app:per_concept_tables}.

\subsection{Semantic Density Governs Trade-off Severity}
\label{subsec:kappa_scaling}
\Cref{thm:tradeoff} predicts that trade-off severity scales with the entanglement coefficient $\kappa$: target concepts whose activation regions overlap substantially with other concepts should suffer greater collateral damage under robust erasure. We test this across five target concepts with estimated $\hat\kappa$ span from $0.27$ (\texttt{castle}) to $0.89$ (\texttt{dog}); see~\cref{tab:kappa} and~\cref{app:kappa_estimation} for the estimation procedure.

\noindent{\bf $\hat\kappa$ predicts collateral damage on neighbors.}
\Cref{fig:kappa_np_scaling} plots NP against $\hat\kappa$ for the available method-concept pairs among the seven original methods, and the table on the right reports STEREO's NP and in-domain retention ratio ordered by ascending $\hat\kappa$.
Both decrease monotonically as $\hat\kappa$ increases, consistent with the linear $\kappa$-dependence in Eq.~\eqref{eq:tradeoff}. \Cref{fig:kappa_np_scaling} reveals three patterns: robust fine-tuning methods (red squares) exhibit the steepest $\hat\kappa$-dependence, with STEREO tracking $\hat\kappa$ monotonically and AdvUnlearn similarly decreasing; inference-time methods (green triangles) show only weak $\hat\kappa$-dependence because their effective displacement $\Delta$ is too small for the $\kappa$-scaling to dominate; standard fine-tuning methods (blue circles) occupy the middle band. This stratification reflects a structural principle: $\hat\kappa$ governs the lower bound on collateral damage, while the method's effective $\Delta$ determines how close it operates to that bound.
\begin{figure}[t]
\centering
\begin{minipage}[c]{0.45\linewidth}
\centering
\includegraphics[width=\linewidth]
{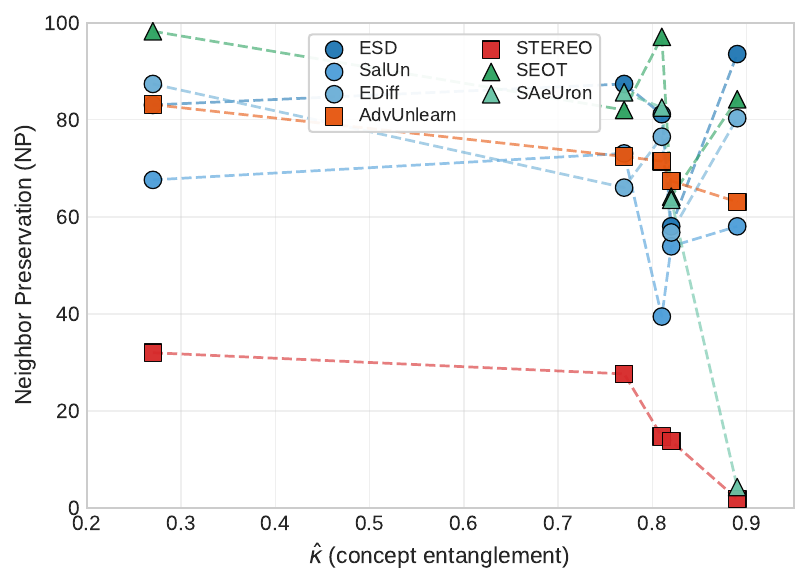}
\end{minipage}\hfill
\begin{minipage}[c]{0.45\linewidth}
\centering
\scriptsize
\setlength{\tabcolsep}{5pt}
\begin{tabular}{lccc}
\toprule
\textbf{Concept} & $\hat\kappa$ & NP$\uparrow$ & IRA-ratio$\uparrow$ \\
\midrule
\texttt{castle} & 0.27 & 31.96 & 0.55 \\
\texttt{cat}    & 0.77 & 27.60 & 0.32 \\
\texttt{horse}  & 0.81 & 14.71 & 0.29 \\
\texttt{bear}   & 0.82 & 13.75 & 0.25 \\
\texttt{dog}    & 0.89 & \phantom{0}1.81 & 0.06 \\
\midrule
\multicolumn{2}{l}{$\rs$} & $-1.0$ & $-1.0$ \\
\bottomrule
\end{tabular}
\end{minipage}
\caption{\textbf{Neighbor preservation scales inversely with $\hat\kappa$.} \emph{Left:} NP for each (method, concept) pair among the seven original methods; dashed lines connect each method's points in order of $\hat\kappa$ and serve as visual guides, not fitted trends. \emph{Right:} STEREO's NP and in-domain retention ratio across the five concepts ($\rs=-1.0$ for both).}
\label{fig:kappa_np_scaling}
\label{tab:kappa_scaling}
\vspace{-2mm}
\end{figure}

\paragraph{Direct test of Eq.~\eqref{eq:tradeoff}.} Eq.~\eqref{eq:tradeoff} predicts that, at matched erasure strength, a larger $\kappa$ forces a larger collateral displacement $\gamma$. STEREO provides this controlled comparison: it reaches UA $=100$ on four of five targets and $93.0$ on \texttt{cat}. Measured directly in activation space (Appendix~\ref{app:displacement_kappa}, Table~\ref{tab:displacement}), its mean neighbor displacement rises from $75.9$ on \texttt{castle} ($\hat\kappa=0.27$) to $234.2$, $233.4$, $240.9$, and $257.0$ on \texttt{cat}, \texttt{horse}, \texttt{bear}, and \texttt{dog} ($\rs=0.90$). All five fine-tuning methods show a positive correlation, four of them at $\rs\geq0.70$, and ESD is strictly monotone ($\rs=1.00$). Inverting Eq.~\eqref{eq:tradeoff} at each STEREO operating point yields $L_{\text{tight}}\in[0.0032,0.0035]$ across concepts, a $7.8\%$ spread, consistent with a single local Lipschitz constant for a method operating near the bound. Table~\ref{tab:main_results_avg} therefore establishes the trade-off, while this analysis tests the $\kappa$-dependence of the bound in the units in which it is stated; NP serves as a probability-level proxy for $\gamma$.

\paragraph{Denser coverage of $\hat\kappa$.} To test whether the trend depends on \texttt{castle} alone or on an animal versus architecture split, we add \texttt{snake}, \texttt{fish}, \texttt{tree}, \texttt{butterfly}, and \texttt{car} ($\hat\kappa=0.37$, $0.52$, $0.63$, $0.66$, $0.83$) and evaluate EDiff, ESD, SalUn, and STEREO (Appendix~\ref{app:additional_concepts}). Method-averaged NP decreases monotonically with $\hat\kappa$, from $65.0$ to $58.8$ ($\rs=-1.0$), and every within-method correlation is negative (EDiff $-0.4$, ESD $-0.7$, SalUn $-0.2$, STEREO $-0.7$). With five additional targets these correlations are descriptive, not individually significant; they show that the direction of the effect persists across animal, plant, and vehicle concepts, including a low-$\hat\kappa$ animal and a high-$\hat\kappa$ vehicle. The $\hat\kappa$ ordering is stable across probe layers ($\rs\geq0.80$) and denoising timesteps ($\rs=0.93$ and $0.96$ at steps 10 and 40 relative to step 25), although absolute values vary (Appendix~\ref{app:layer_sensitivity}). A discussion of why $\hat\kappa$ reflects activation overlap rather than semantic category is in Appendix~\ref{app:kappa_estimation}.

\paragraph{Beyond Stable Diffusion v1.4.} On SDXL, whose cross-attention layers admit $\hat\kappa$, ESD's neighbor preservation decreases monotonically with entanglement: NP $=90.9$, $85.0$, and $77.1$ for \texttt{cat}, \texttt{castle}, and \texttt{dog} ($\hat\kappa_{\text{SDXL}}=0.65$, $0.72$, $0.92$; $\rs=-1.0$ over three concepts). The most entangled target is also the least erased (UA $=77$ for \texttt{dog} versus $99$ for \texttt{cat}), so the ordering is not driven by erasure strength. Absolute $\hat\kappa$ shifts across architectures (\texttt{castle} is $0.27$ on SD v1.4 and $0.72$ on SDXL), but $\hat\kappa$ and observed damage agree within each architecture. On the MMDiT model FLUX, joint text-image attention leaves no cross-attention block analogous to our probe, and a proxy estimator saturates near $1$ even for unrelated control pairs, so $\kappa$-scaling on MMDiT remains untested. The phenomenon itself recurs: ESD on FLUX reduces NP to $35.6$ to $40.5$ on \texttt{dog}, \texttt{cat}, and \texttt{castle} while preserving unrelated controls (CP $=95.3$ to $98.9$), i.e., neighbor-selective rather than global degradation (Appendix~\ref{app:arch}).

%% file: sections/06Discussion.tex
\section{Conclusion}
\label{sec:conclusion}
We show that the tension between robust concept erasure and utility preservation in diffusion models is a structural consequence of representation geometry, not merely a limitation of existing methods.
\Cref{thm:tradeoff} establishes a $\kappa$-scaled lower bound on collateral damage, and our experiments show that existing methods, including those designed to preserve non-target concepts, occupy different points on the resulting robustness-retention frontier, and the frontier becomes steeper as $\hat\kappa$ increases.

\paragraph{Implications.} Three consequences follow. First, $\hat\kappa$ is a pre-deployment diagnostic: estimated from base-model activations alone, it indicates which targets admit clean erasure and which require accepting collateral damage, before any unlearning is run (\cref{app:clean-erasure}). Second, benchmarks should report erasure and retention jointly, on neighbor sets selected by activation overlap, because reporting either axis alone hides where a method sits on the frontier. Third, method design should be $\kappa$-aware: rather than suppressing broad adversarial subspaces, methods should restrict displacement to the directions needed for erasure.

%% file: sections/07Limitation.tex
\section{Limitations}
\label{sec:limitations}

\paragraph{Scope of empirical evaluation.} All thirteen methods are evaluated on five targets on Stable Diffusion v1.4; five further targets are evaluated with four methods. Cross-architecture evidence is limited: SDXL covers three concepts with one method, and on FLUX we can show neighbor-selective damage but cannot resolve $\kappa$, so $\kappa$-scaling on MMDiT remains open. Our targets are object concepts; artistic styles, identities, and harmful-content concepts remain untested.

\paragraph{Theory and $\hat\kappa$ estimation.} Theorem~\ref{thm:tradeoff} applies to any unlearning mechanism, but its constants are effective ones: $L$ captures the full network, including skip connections, and is estimated only for the prompt variations of Appendix~\ref{app:lipschitz}. $\hat\kappa$ is an empirical lower-bound proxy for $\kappa$ whose absolute value depends on the probe layer, the denoising timestep (e.g., $\hat\kappa(\texttt{castle})=0.77$ at step 10), and the candidate pool; our claims rely on its ranking, which is stable across the settings we test. The text-space comparison is not a causal decomposition of where entanglement arises.

%% file: sections/app/01.tex
\section{Formal Definition of Concept Activation Regions}
\label{app:concept_regions}
\subsection{Background: latent diffusion models}\label{app:background}
We briefly introduce the latent diffusion notation used throughout the paper, since our analysis of concept activation regions and unlearning-induced perturbations is defined over the model's latent, text-conditioned denoising process.

\textbf{Text-to-image latent diffusion models.}
We consider a text-to-image latent diffusion model (LDM) of the form introduced in Stable Diffusion~\citep{rombach2022high}.
An input image $\mathbf{x} \in \mathcal{X}$ is first mapped by a pretrained variational autoencoder encoder $E$ into a latent representation 
$\mathbf{z} = E(x) \in \mathcal{Z}$, and reconstructed by a decoder $V : \mathcal{Z} \rightarrow \mathcal{X}$. 

Given a latent $\mathbf{z}$, the forward diffusion process corrupts it with Gaussian noise to obtain $\mathbf{z}_t$ at timestep $t \in \{1,\dots,T\}$. 
A time-conditioned U-Net $\boldsymbol{\epsilon}_\theta$ is then trained to predict the noise $\boldsymbol{\epsilon}$ 
added to $\mathbf{z}_t$. 
In text-conditioned generation, a prompt $p \in \mathcal{P}$ is encoded by a text encoder $f_{\psi}$ into text features $\mathbf{e}_p = f_{\psi}(p)$, and these features condition the U-Net through cross-attention layers. 
The standard latent diffusion objective is
\begin{equation}
\label{eq:ldm_loss}
\mathcal{L}_{\mathrm{LDM}}
=
\mathbb{E}_{x,\epsilon,t}
\left[
\left\|
\boldsymbol{\epsilon} - \boldsymbol{\epsilon}_\theta(\mathbf{z}_t, t, \mathbf{e}_p)
\right\|_2^2
\right],
\end{equation}
where $\epsilon \sim \mathcal{N}(0,I)$ 
At inference time, generation starts from Gaussian noise and iteratively denoises toward a clean latent, which is finally decoded into an image.

\subsection{Concept activation regions}
This appendix provides the formal construction of the concept activation function $\boldsymbol{\Phi}_\theta$ and activation region $\mathcal{R}_c$ used throughout~\cref{sec:tradeoff}. The compressed version in the main text specifies these informally; the definition here is the formal object used in the proofs of~\cref{thm:tradeoff}.

\begin{definition}[Concept activation function and region]
\label{def:activation}
Fix a UNet layer $\ell^*$ and denoising timestep $t^* \in \{1,\ldots,T\}$. Let $p \in \mathcal{P}$ denote a text prompt and $\mathbf{e}_p = f_\psi(p)$ be its text embedding. The \emph{concept activation function} of the pretrained model is
\begin{equation}
    \boldsymbol{\Phi}_{\theta, \ell^*,t^*}(p) \;=\; \mathrm{A}_{\theta}(\mathbf{e}_p,\ell^*,t^*) \in \mathcal{H},
\end{equation}
where $\mathrm{A}_{\theta}(\mathbf{e}_p, \ell^*, t^*)$ denotes the cross-attention output 
of $\epsilon_\theta$ at layer $\ell^*$ and timestep $t^*$, and $\mathcal{H} := \mathbb{R}^{d_{\ell^*}}$ is the corresponding activation space. 
Throughout, we suppress the dependence on $\ell^*$ 
and $t^*$ and write $\boldsymbol{\Phi}_\theta(p)$ for brevity.
Analogously, $\boldsymbol{\Phi}_{\hat{\theta}}(p)$ denotes the activation function of the CEM $\mathcal{D}_{\hat\theta}$.

For a concept $c \in \mathcal{C}$, its \emph{activation region} is
\begin{equation}
    \mathcal{R}_c
    \;=\;
    \Bigl\{
        \boldsymbol{\Phi}_{\theta}(p)
        \;\Big|\;
        p \in \mathcal{P},\;
        \mathbb{O}_{\mathcal{X}}\!\left(\mathbf{x}_{\theta}(p),\, c\right) = 1
    \Bigr\}
    \;\subseteq\; \mathcal{H},
\end{equation}
where $\mathbb{O}_{\mathcal{X}} : \mathcal{X} \times \mathcal{C} \to \{0,1\}$ is an oracle indicating whether concept $c$ is present in image $\mathbf{x}$, and $\mathbf{x}_{\theta}(p) \sim p_{\theta}(\cdot \mid p)$ denotes an image sampled from $\mathcal{D}_\theta$ conditioned on prompt $p$.
\end{definition}

\paragraph{On the oracle classifier.}
The predicate $c \in \text{image}(\mathbf{x})$ is specified as an oracle classifier rather than a specific model because our results are independent of the particular choice. Any reasonable concept detector induces a concept region $\mathcal{R}_c$, and the tradeoff in~\cref{thm:tradeoff} holds with respect to whichever classifier is fixed. In~\cref{sec:experiment}, we instantiate this oracle with a CLIP-based classifier, detailed in~\cref{app:evaluation_setup}. 

%% file: sections/app/02.tex
\section{Empirical Validation of Concept Activation Regions}
\label{app:activation_regions}
This section provides empirical support for~\cref{def:activation} by showing that the cross-attention activations used in our formalization form coherent geometric clusters in practice.
\paragraph{Setup.}
To verify that concept activation regions are well-defined empirical objects, we extract cross-attention activations from Stable Diffusion v1.4 at the mid-block layer (\texttt{mid\_block.attentions.0}) at step $t^*{=}25$ of 50 DDIM steps for 10 concepts, each prompted with 50 shared templates differing only in the concept word.
\texttt{horse}, \texttt{pony}, \texttt{donkey}, \texttt{deer}, \texttt{dog}, \texttt{cat}, \texttt{bear}, \texttt{car}, \texttt{truck}, and \texttt{castle}.
This visualization pool is chosen to be illustrative rather than comprehensive: it deliberately spans three semantically distinct domains (animals, vehicles, architecture) so that the qualitative geometry of concept regions (semantically related concepts clustering while distinct domains separate) is visually apparent. Several concepts in this pool (\texttt{deer}, \texttt{car}, \texttt{truck}) are not part of the evaluation vocabulary in~\cref{tab:vocab}; they appear only to anchor the visualization's domain diversity. Conversely, $\hat\kappa$-estimation and all CLIP-based evaluation metrics use the full $65$-concept pool of~\cref{tab:vocab}, of which the visualization pool is not a strict subset.
For each concept, we generate 50 prompts using a shared set of templates that differ only in the concept word, so that differences in activation primarily reflect the concept token rather than unrelated prompt variation.

For each prompt $p$, we compute the activation
\begin{equation}
    \boldsymbol{\Phi}_{\theta}(p) \;=\; \mathrm{A}_{\theta}(\mathbf{e}_p,\ell^*,t^*) \in \mathcal{H},
\end{equation}
flatten the resulting cross-attention tensor into a vector in $\mathcal{H}$, and project all activations to two dimensions using UMAP for visualization.

\paragraph{Observations.}
\Cref{fig:umap} shows that activations induced by prompts containing the same concept form concentrated local clusters, rather than being arbitrarily scattered in activation space. This supports the view that each concept corresponds to a coherent region in the model's internal representation space.

The figure also reveals non-uniform distances between concepts. Some concepts occupy nearby regions (for example, \texttt{horse}, \texttt{pony}, and \texttt{donkey} form a tight cluster), while others, such as \texttt{castle}, are clearly separated from the animal concepts. Importantly, we do not use these visualizations to infer semantic relatedness from scratch. Rather, the figure shows that the activation geometry is structured and that concepts commonly regarded as related in the data domain tend to induce nearby activations under shared prompt templates.

These results do not by themselves prove overlap in the strict set-theoretic sense used later in Section~4. Instead, they provide empirical evidence for the weaker claim needed to motivate our definitions: concept-conditioned activations are localized, stable enough to visualize, and often organized into neighboring regions rather than isolated points. This makes it meaningful to model a concept through an activation region $\mathcal{R}_c$ and to reason about neighborhood structure in the induced activation space.

\paragraph{Choice of probe layer and timestep.}
We probe the cross-attention output at \texttt{mid\_block.attentions.0} at denoising step $t^*=25$ of $50$ throughout~\cref{app:activation_regions,app:kappa_estimation,app:displacement,app:lipschitz}, and at \texttt{up\_blocks.1.attentions.1} (same timestep) for the direct displacement measurements in~\cref{app:displacement_kappa}. The probe layer $\ell^*$ is a parameter of our framework (\cref{def:activation}), not a property of the model, and~\cref{thm:tradeoff} holds for any chosen $(\ell^*, t^*)$ at which Assumptions~\ref{asm:lip} and~\ref{asm:edit} are satisfied. We selected mid-block as the default probe based on a sensitivity analysis (\cref{app:layer_sensitivity}) over six candidate cross-attention layers spanning the U-Net depth: the $\hat\kappa$ ordering across our five target concepts is stable across all probed layers ($\rs \geq 0.80$ vs.\ mid-block), but mid-block produces the cleanest bimodal separation between high-$\hat\kappa$ and low-$\hat\kappa$ concepts and the tightest within-cluster spread among the animal subtypes. Step $t^*=25$ (the midpoint of the DDIM trajectory) corresponds to the regime in which large-scale layout and object identity are committed in latent diffusion models. The up-block layer in~\cref{app:displacement_kappa} was selected because it lies downstream of the cross-attention edits applied by fine-tuning methods and therefore reflects the cumulative effect of those edits; \cref{app:layer_sensitivity} confirms that the $\hat\kappa$ ranking at this layer is consistent with mid-block.

%% file: sections/app/03.tex
\section{Estimation of the Entanglement Coefficient $\hat\kappa$}
\label{app:kappa_estimation}

The entanglement coefficient $\kappa(c_u, \rho)$ defined in~\cref{def:overlap} involves the volume of high-dimensional activation regions, which is intractable to compute directly. We therefore estimate $\kappa$ via a sample-based procedure that restricts the overlap measurement to a discovered \emph{semantic neighborhood} $\mathcal{N}(c_u) \subseteq \mathcal{C} \setminus \{c_u\}$ rather than the full union over remaining concepts. Because $\mathcal{N}(c_u)$ is a subset of the union appearing in~\cref{def:overlap}, this neighborhood-restricted estimate is a \emph{lower bound} on $\kappa(c_u, \rho)$. 
We argue, and verify in~\cref{app:kappa_per_target}, that this lower bound is tight in practice: across the five targets, the activation overlap with non-neighbors is negligible, so the union over $\mathcal{C} \setminus \{c_u\}$ is dominated by the union over $\mathcal{N}(c_u)$. 
Throughout this section we work with cosine distance $d_{\cos}(\mathbf{a}, \mathbf{b}) = 1 - \langle \mathbf{a}, \mathbf{b} \rangle / (\|\mathbf{a}\|\|\mathbf{b}\|)$, which is the standard choice in representation analysis: it is invariant to activation magnitude (which is dominated by token-position effects under mean-pooling) and captures directional similarity, the property most closely tied to concept identity. 
The centroid-distance matrix used to motivate neighborhood structure in~\cref{sec:tradeoff} (\cref{fig:distance_matrix}) uses the same metric, ensuring internal consistency between our motivation and our formalization.

\paragraph{On the relationship between cosine and $\ell_2$ metrics.}
\Cref{thm:tradeoff} bounds the $\ell_2$ displacement $\|\boldsymbol{\Phi}_{\hat\theta}(p) - \boldsymbol{\Phi}_{\theta}(p)\|_2$, since this quantity measures a physical perturbation to the activation vector. Our $\kappa$ estimator, in contrast, uses cosine distance to characterize region membership. These two choices are compatible: for activations normalized to the unit sphere, $\|\mathbf{a} - \mathbf{b}\|_2^2 = 2 \cdot d_{\cos}(\mathbf{a}, \mathbf{b})$, so an $L$-Lipschitz function under the $\ell_2$ metric satisfies $|f(a)-f(b)|\le L\sqrt{2\,d_{\cos}(a,b)}$ on the unit sphere, i.e., it is H\"older continuous with exponent $1/2$ in cosine distance. The theorem is stated and proved entirely in $\ell_2$; cosine distance is used only to decide region membership in the estimator, and since both metrics induce the same fattened regions up to a monotone re-parameterization of $\rho$, this choice does not affect the bound. 

\paragraph{Activation extraction.}
For each concept $c$, we generate images from 50 shared prompt templates (differing only in the concept word) using the base model (Stable Diffusion v1.4) with DDIM sampling (50 steps, guidance scale 7.5). At step $t^*{=}25$, we extract the cross-attention output at \texttt{mid\_block.attentions.0}, take the text-conditioned component, and mean-pool over spatial dimensions to obtain a single activation vector $\boldsymbol{\Phi}_\theta(p) \in \mathbb{R}^d$ per prompt.

\paragraph{Neighborhood discovery.}
We restrict the overlap measurement in~\cref{def:overlap} to a finite set of candidate concepts that are close to $c_u$ in activation space. We formalize this set as follows.

\begin{definition}[Semantic neighborhood]
\label{def:semantic_neighborhood}
For each target $c_u$, let $\mathcal{C}_{c_u}=\{c_u\}\cup\mathcal{C}^{\text{cand}}_{c_u}\cup\mathcal{C}^{\text{ctrl}}_{c_u}$
be a target-specific pool, where $\mathcal{C}^{\text{cand}}_{c_u}$ contains ten
semantically plausible candidate neighbors of $c_u$ and $\mathcal{C}^{\text{ctrl}}_{c_u}$
contains three unrelated control concepts. For each
$c\in\mathcal{C}_{c_u}$, let $\bar{\mathbf{a}}_c=\frac{1}{|\mathcal{S}_c|}\sum_{\mathbf{a}\in\mathcal{S}_c}\mathbf{a}$ be the
centroid of its activation samples. Let $\delta_{c_u}$ be the 25th percentile of
all pairwise centroid cosine distances within $\mathcal{C}_{c_u}$. The semantic
neighborhood of $c_u$ is
\begin{equation}
\mathcal{N}(c_u)=\bigl\{c\in\mathcal{C}^{\text{cand}}_{c_u}:
d_{\cos}(\bar{\mathbf{a}}_{c_u},\bar{\mathbf{a}}_c)<\delta_{c_u}\bigr\},
\end{equation}
and every control is required to fall outside $\mathcal{N}(c_u)$.
\end{definition}
We emphasize that $\delta_{c_u}$ and $\mathcal{N}(c_u)$ are \emph{estimation-side} constructs: they appear only in the empirical estimator $\hat\kappa$ below and play no role in the theoretical bound of~\cref{thm:tradeoff}, which is stated entirely in terms of $\kappa(c_u, \rho)$. Because $\mathcal{N}(c_u) \subseteq \mathcal{C} \setminus \{c_u\}$, restricting the overlap computation to $\mathcal{N}(c_u)$ yields a lower bound on the true $\kappa$. The discovered neighborhoods are reported in~\cref{tab:kappa_per_target}.


\paragraph{Overlap estimation.}
For a target concept $c_u$ with discovered neighbors $\mathcal{N}(c_u)$, we estimate $\hat\kappa$ as the fraction of $c_u$'s activation samples that fall within radius $\rho$ of any neighbor's activation samples:
\begin{equation}
    \hat\kappa(c_u) \;=\; \frac{1}{|\mathcal{S}_{c_u}|} \sum_{\mathbf{a} \in \mathcal{S}_{c_u}} \mathbf{1}\!\left[\min_{c' \in \mathcal{N}(c_u)} \min_{\mathbf{b} \in \mathcal{S}_{c'}} d_{\cos}(\mathbf{a}, \mathbf{b}) \leq \rho\right],
\end{equation}
where $\mathcal{S}_c = \{\boldsymbol{\Phi}_\theta(p_i)\}_{i=1}^{50}$ denotes the set of activation samples for concept $c$, and $d_{\cos}$ is the cosine distance. 
The fattening radius $\rho_{c_u}$ is set to the median intra-concept pairwise cosine distance within $c_u$:
\begin{equation}
    \rho_{c_u} \;=\; \mathrm{median}\!\left\{d_{\cos}(\mathbf{a}, \mathbf{a}') \;\middle|\; \mathbf{a}, \mathbf{a}' \in \mathcal{S}_{c_u},\; \mathbf{a} \neq \mathbf{a}'\right\}.
\end{equation}
This choice calibrates the overlap radius to the natural spread of the target concept's activation region: a neighbor activation qualifies as overlapping only if it is as close to some target activation as typical target activations are to each other.

\paragraph{Validation via centroid distance.}
To independently verify that the neighborhoods discovered in the previous step reflect genuine semantic structure rather than noise, we visualize the pairwise centroid cosine distances between all ten concept activation regions. \Cref{fig:distance_matrix} shows the resulting distance matrix, reordered by hierarchical clustering on the centroid distances themselves. Two block structures are immediately visible. The equine cluster (\texttt{horse}$\leftrightarrow$\texttt{pony}: $0.158$; \texttt{horse}$\leftrightarrow$\texttt{donkey}: $0.329$; \texttt{pony}$\leftrightarrow$\texttt{donkey}: $0.271$) 
exhibit small inter-concept distances, 
consistent with the high $\hat\kappa$ values reported in~\cref{tab:kappa}. 
And semantically isolated concepts such as \texttt{castle} have centroid distances exceeding $0.65$ to all animal concepts, consistent with castle's low $\hat\kappa$.

\begin{figure}[htbp]
\centering
\includegraphics[width=0.75\linewidth]{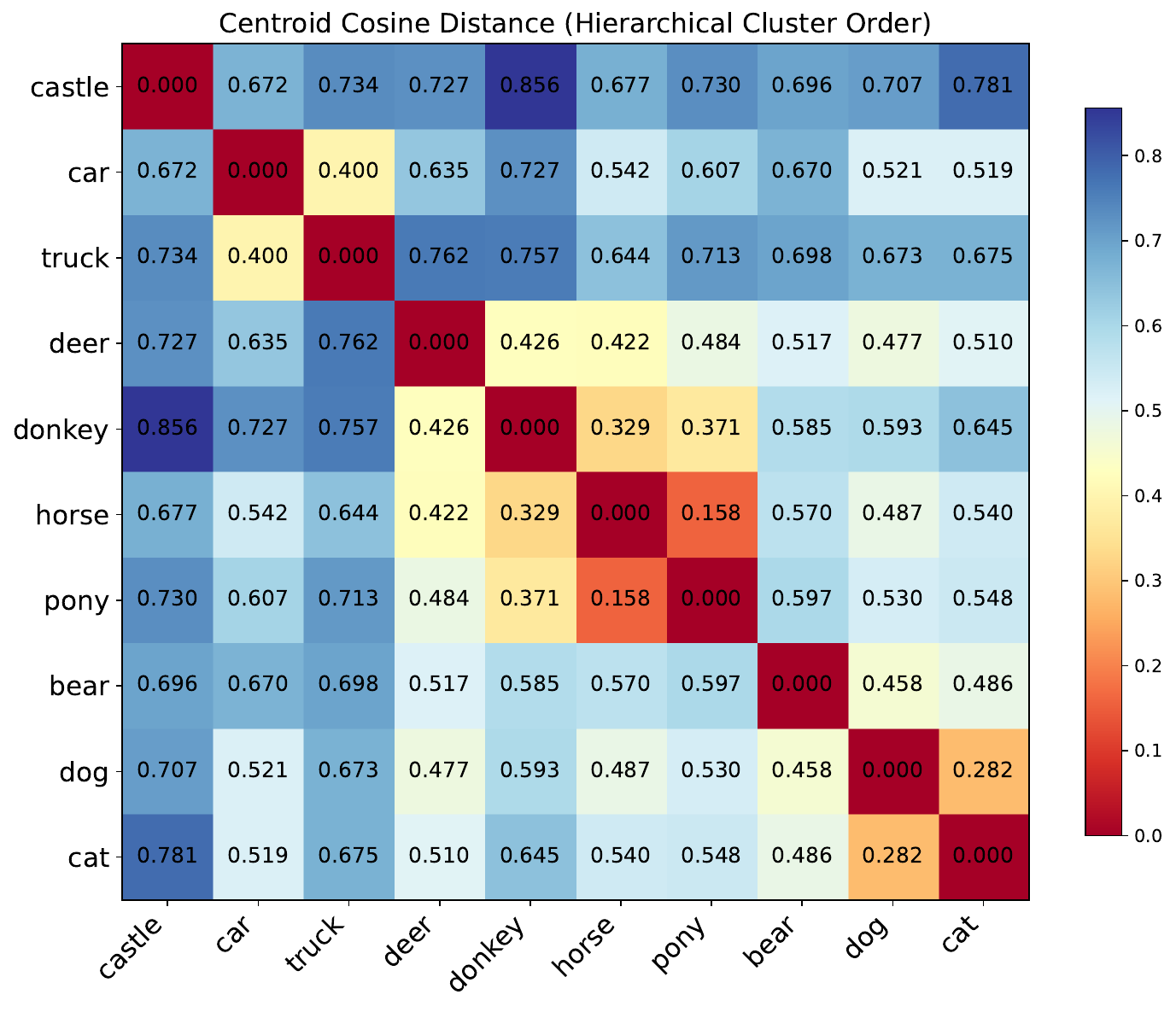}
\caption{
Pairwise centroid cosine distance between concept activation regions at \texttt{mid\_block.attentions.0}, ordered by hierarchical clustering. Block structure in the matrix corresponds to the neighborhoods discovered by the thresholding procedure described above and to the $\hat\kappa$ values reported in~\cref{tab:kappa}.
}
\label{fig:distance_matrix}
\end{figure}

\paragraph{On the use of centroid distance as a proxy.}
The centroid cosine distance plays a different role from $\hat\kappa$ itself, and the two should not be conflated. 
\Cref{def:overlap} is stated in terms of the volume of fattened activation regions, whose membership is determined by an \emph{infimum} over per-sample distances (the closed $\rho$-neighborhood $\mathcal{R}_c^{(\rho)}$). The centroid, by contrast, is a single \emph{average} summary of each region.

Centroid distance is used only for \emph{neighborhood discovery} (\cref{def:semantic_neighborhood}), an upstream step of deciding which pairs of concepts are close enough to warrant an overlap measurement. We use it here for two reasons. First, it is a stable summary statistic that degrades gracefully under the finite-sample, high-dimensional regime in which we estimate activation regions.
Second, an outlier-robust statistic at this stage is conservative: a concept that is filtered {out} of $\mathcal{N}(c_u)$ contributes zero to the resulting estimator, so under-inclusion only weakens the bound. The estimator $\hat\kappa$ is therefore a lower bound on $\kappa(c_u, \rho)$ both because $\mathcal{N}(c_u) \subseteq \mathcal{C} \setminus \{c_u\}$ and because the discovery step itself is conservative. Empirical comparisons against the bound of~\cref{thm:tradeoff} therefore remain valid.

The $\hat\kappa$ statistic itself does not use centroid distance. As specified in the overlap estimation step above, $\hat\kappa$ is computed via a nearest-neighbor estimator: for each activation sample $\mathbf{a} \in \mathcal{S}_{c_u}$, we check whether there exists \emph{any} sample $\mathbf{b}$ from \emph{any} concept in $\mathcal{N}(c_u)$ with $d_{\cos}(\mathbf{a}, \mathbf{b}) \le \rho$.
This is the natural finite-sample analogue of the fattened-region membership condition $\mathbf{a} \in \mathcal{R}_{c'}^{(\rho)}$ (with the radius $\rho$ converted to cosine units, see ``On the relationship between cosine and $\ell_2$ metrics'' above), and the fraction of target samples satisfying this condition is a Monte Carlo estimate of the volume ratio in~\cref{def:overlap} restricted to $\mathcal{N}(c_u)$. The centroid-distance visualization in~\cref{fig:distance_matrix} therefore serves only as an auxiliary consistency check: blocks of small centroid distance should coincide with the discovered $\mathcal{N}(c_u)$, and the high-$\hat\kappa$ targets should sit inside such blocks.

\paragraph{Interpretation.}
Under this estimator, $\hat\kappa(c_u) \in [0, 1]$ captures the sample-level  coverage of the target region by its neighbors. The results in~\cref{tab:kappa} exhibit a bimodal structure. The four animal-subtype concepts embedded in dense semantic neighborhoods, \texttt{dog} ($\hat\kappa = 0.89$), \texttt{bear} ($\hat\kappa = 0.82$), \texttt{horse} ($\hat\kappa = 0.81$), and \texttt{cat} ($\hat\kappa = 0.77$), exhibit the  highest overlap, consistent with their small inter-centroid distances to fine-grained subtypes (\cref{fig:distance_matrix}). 
Under our procedure, these concepts each have multiple discovered neighbors, and individual samples from these neighbors fall within the fattening radius $\rho$ of a substantial fraction of target samples. The high overlap is also \emph{robust} to which neighbors are included: in the \texttt{horse} pool, several individual neighbors (\texttt{mare}, \texttt{pony}, \texttt{stallion}) each cover 60 to 80\% of target samples on their own, and the union across all seven discovered neighbors saturates at $\hat\kappa = 0.81$, indicating that overlap is supported by a representational manifold shared across the equine subtype family rather than driven by a single sensitive neighbor choice.

The architectural concept \texttt{castle}, by contrast, exhibits substantially lower overlap ($\hat\kappa = 0.27$) despite having six discovered neighbors within $\delta_{c_u}$. The distinction is qualitative: castle's neighbors are semantically related but represent \emph{distinct} architectural types rather than fine-grained subtypes of a shared category. They occupy nearby but non-overlapping regions of activation space, so the centroid distances cross the $\delta_{c_u}$ threshold while individual sample-level overlap remains limited. Controls like \texttt{jellyfish}, \texttt{melon}, \texttt{police car} are correctly filtered out by the $\delta_{c_u}$ step across all five targets and contribute zero to $\hat\kappa$, confirming that the estimator is not driven by pool-wide sample density.

This bimodal structure, high-$\hat\kappa$ animal subtypes clustered in $[0.77, 0.89]$ and moderate-$\hat\kappa$ architecture at $0.27$ reflects a property of the model's internal representations at the probed layer: text-to-image diffusion models represent fine-grained subtypes of a common taxonomic category (breeds of dog, equine subtypes) as overlapping regions of a shared activation manifold, while distinct-but-related concepts in other domains maintain more separable representations. \Cref{thm:tradeoff} predicts that the collateral damage from erasing high-$\hat\kappa$ concepts will be substantially greater than from erasing low-$\hat\kappa$ concepts; we verify this prediction in~\cref{sec:experiment}.


\noindent{\bf $\hat\kappa$ measures activation-space overlap, not semantic similarity.}
A natural concern is whether $\hat\kappa$ proxies for some other concept-level property such as semantic category (animal vs.\ architecture). Our estimator (\Cref{app:kappa_estimation}) is computed from sample-level overlap of activation regions, not from category labels or human-judged similarity. \texttt{Castle} ($\hat\kappa = 0.27$) and the four animal concepts ($\hat\kappa \in [0.77, 0.89]$) differ in $\hat\kappa$ not because they belong to different taxonomic categories but because castle's discovered neighbors 
occupy sub-regions of activation space despite being semantically related, while animal subtypes 
densely overlap their target's region.

\subsection{Per-Target Results}
\label{app:kappa_per_target}
We apply the procedure of~\cref{app:kappa_estimation} to the five target concepts used throughout~\cref{sec:experiment}. Each target's pool consists of the target, its ten fine-grained candidate neighbors (equine, feline, canid, ursine, or architectural; Table~\ref{tab:vocab}), and three control concepts drawn from \{\texttt{cactus}, \texttt{jellyfish}, \texttt{melon}, \texttt{police car}\}, with \texttt{umbrella} in place of \texttt{police car} when the target is \texttt{car}. 
\Cref{tab:kappa_per_target} reports the resulting $\hat\kappa$ values together with the size of each discovered neighborhood $|\mathcal{N}(c_u)|$ (\cref{def:semantic_neighborhood}), the per-target fattening radius $\rho_{c_u}$, and the discovered neighbors.

\begin{table}[ht]
\centering
\small
\setlength{\tabcolsep}{6pt}
\begin{tabular}{lcccl}
\toprule
\textbf{Target} & $\hat\kappa$ & $|\mathcal{N}(c_u)|$ & $\rho_{c_u}$ & \textbf{Discovered neighbors} \\
\midrule
\texttt{dog}    & 0.89 & 5 & 0.13 & puppy, labrador, husky, beagle, retriever \\
\texttt{bear}   & 0.82 & 5 & 0.14 & grizzly, panda, polar bear, koala, cub \\
\texttt{horse}  & 0.81 & 7 & 0.15 & pony, mare, donkey, stallion, foal, zebra, mule \\
\texttt{cat}    & 0.77 & 5 & 0.15 & kitten, lynx, leopard, tiger, cheetah \\
\texttt{castle} & 0.27 & 6 & 0.18 & fortress, palace, citadel, keep, manor, cathedral \\
\bottomrule
\end{tabular}
\caption{
\textbf{Per-target $\hat\kappa$ estimation results.} For each target $c_u$, we report the estimated entanglement coefficient $\hat\kappa$, the number of discovered neighbors within the centroid-distance threshold $\delta_{c_u}$, the fattening radius $\rho_{c_u}$ (median intra-concept cosine distance), and the discovered neighbor set. \texttt{castle} has a discovered neighborhood of comparable size to the animal targets but $\hat\kappa$ roughly $3\times$ lower, isolating semantic density rather than neighborhood cardinality as the driver of $\hat\kappa$. }
\label{tab:kappa_per_target}
\end{table}

\paragraph{Per-neighbor coverage decomposition.}
For \texttt{horse}, the per-neighbor coverage of target samples (fraction of $\mathcal{S}_{\texttt{horse}}$ with a within-$\rho$ activation in the neighbor's sample set) decomposes as: \texttt{mare} ($0.78$), \texttt{pony} ($0.72$), \texttt{stallion} ($0.66$), \texttt{donkey} ($0.41$), \texttt{foal} ($0.38$), \texttt{zebra} ($0.30$), \texttt{mule} ($0.22$), with the union saturating at $\hat\kappa = 0.81$. 
The strongest individual neighbors (\texttt{mare}, \texttt{pony}) already cover most of the target region on their own, and several others contribute substantially, indicating that overlap is supported by a representational manifold shared across the equine subtype family rather than being sensitive to which specific neighbor is picked. By contrast, \texttt{castle}'s six discovered neighbors each cover only $0.05$ to $0.12$ of the target region, and their union saturates at $\hat\kappa = 0.27$, consistent with castle's neighbors representing distinct architectural types rather than fine-grained subtypes of a shared category.

\paragraph{Sensitivity to the neighborhood.}\label{app:kappa-ablation} For fixed target samples and radius, $\mathcal N_1\subseteq\mathcal N_2$ implies $\hat\kappa(c;\mathcal N_1)\le\hat\kappa(c;\mathcal N_2)$, but a new neighbor raises $\hat\kappa$ only if it covers target samples not already covered. Adding neighbors to \texttt{horse} in order of centroid distance, \texttt{mare} alone gives $\hat\kappa=0.78$, adding \texttt{pony} gives $0.81$, and the remaining five neighbors leave it unchanged. Varying the percentile that defines the discovery threshold from the 10th to the 50th changes $|\mathcal N(\texttt{horse})|$ from 3 to 9 and $|\mathcal N(\texttt{castle})|$ from 1 to 9, while $\hat\kappa$ stays at $0.81$ and $0.27$. $\hat\kappa$ therefore reflects union coverage of the target region rather than neighborhood cardinality.

\subsection{Sensitivity of $\hat\kappa$ to the choice of probe layer}
\label{app:layer_sensitivity}

The $\hat\kappa$ estimator fixes the probe layer at $\ell^* = \texttt{mid\_block.attentions.0}$. To assess whether the bimodal $\hat\kappa$ structure reported in~\cref{tab:kappa} is an artifact of this choice, we re-ran the estimator at five additional cross-attention layers spanning the U-Net depth, holding all other estimator hyperparameters (target-specific candidate pools, $\delta_{c_u}$ percentile, fattening radius rule, prompt templates, timestep $t^*=25$) fixed.

\begin{table}[ht]
\centering
\small
\setlength{\tabcolsep}{2pt}
\begin{tabular}{lccccc|cc}
\toprule
\textbf{Layer} & \texttt{castle} & \texttt{cat} & \texttt{horse} & \texttt{bear} & \texttt{dog} & $\rs$ vs.\ mid & Animal std \\
\midrule
\texttt{down\_blocks.2.attentions.0} & 0.34 & 0.77 & 0.86 & 0.92 & 0.92 & $+0.97$ & 0.061 \\
\texttt{down\_blocks.2.attentions.1} & 0.26 & 0.71 & 0.74 & 0.87 & 0.85 & $+0.90$ & 0.069 \\
\textbf{\texttt{mid\_block.attentions.0}} & \textbf{0.27} & \textbf{0.77} & \textbf{0.81} & \textbf{0.82} & \textbf{0.89} & --- & \textbf{0.043} \\
\texttt{up\_blocks.1.attentions.0} & 0.09 & 0.59 & 0.48 & 0.88 & 0.79 & $+0.80$ & 0.158 \\
\texttt{up\_blocks.1.attentions.1} & 0.39 & 0.36 & 0.61 & 0.91 & 0.76 & $+0.80$ & 0.203 \\
\texttt{up\_blocks.2.attentions.0} & 0.00 & 0.78 & 0.63 & 0.89 & 0.88 & $+0.80$ & 0.105 \\
\bottomrule
\end{tabular}
\caption{\textbf{$\hat\kappa$ across six candidate probe layers.} All five alternative layers produce a $\hat\kappa$ ranking strongly consistent with mid-block ($\rs \geq 0.80$). Mid-block additionally yields the tightest cluster of animal-subtype $\hat\kappa$ values (std $=0.043$, vs.\ 0.061 to 0.203 elsewhere), making it the layer at which the ``animal-subtype shared-manifold'' structure is most coherently expressed.}
\label{tab:layer_sensitivity}
\end{table}

\paragraph{Conclusion.} Three observations support our default choice. First, the $\hat\kappa$ \emph{ranking} of the five target concepts is stable across all probed layers ($\rs \geq 0.80$ vs.\ mid-block in every case): the ordering $\hat\kappa(\texttt{castle}) \ll \hat\kappa(\{\texttt{cat}, \texttt{horse}, \texttt{bear}, \texttt{dog}\})$ is a property of the underlying concept geometry, not of mid-block specifically. Second, mid-block produces the cleanest bimodal separation between high-entanglement animal subtypes and the low-entanglement architectural concept: the smallest animal $\hat\kappa$ exceeds $\hat\kappa(\texttt{castle})$ by $0.50$ at mid-block, vs.\ $0.39$ to $0.45$ at down-block / up-block-1.0 and an inverted ordering at up-block-1.1 where $\hat\kappa(\texttt{castle}) = 0.39 > \hat\kappa(\texttt{cat}) = 0.36$. Third, mid-block has the lowest within-cluster spread among the four animal concepts (std $0.043$, less than half that of every up-block layer), consistent with mid-block being the layer at which fine-grained subtypes are most coherently represented as overlapping regions of a shared manifold. We therefore use mid-block as the default probe for $\hat\kappa$ estimation, the visualization in~\cref{fig:umap}, the Lipschitz checks in~\cref{app:lipschitz}, and the within-target displacement validation in~\cref{app:displacement}.

\paragraph{A note on \texttt{up\_blocks.1.attentions.1}.} The direct displacement measurements in~\cref{app:displacement_kappa} are reported at \texttt{up\_blocks.1.attentions.1} rather than mid-block. The $\hat\kappa$ ranking at this layer agrees with mid-block ($\rs = 0.80$, supporting the cross-target $\hat\kappa$-scaling reported there), but its bimodal separation is the weakest of the six layers. We use this up-block layer for direct displacement specifically because it lies downstream of the cross-attention edits applied by fine-tuning methods and therefore exhibits the largest absolute displacement signal under STEREO and AdvUnlearn; the $\kappa$-scaling claim in~\cref{app:displacement_kappa} relies on the Spearman ranking, which is preserved, not on the absolute $\hat\kappa$ values at that layer.

\paragraph{Sensitivity to the denoising timestep.} Step $t^*=25$ was chosen a
priori as the midpoint of the 50-step DDIM trajectory. Table~\ref{tab:timestep}
recomputes $\hat\kappa$ at steps 10, 25, and 40 with identical prompts and
seeds. Relative to step 25, the concept ranking has $\rs=0.93$ at step 10 and
$0.96$ at step 40, with mean neighbor-set Jaccard agreement of $0.98$ and $0.82$.
The ranking is stable, although absolute estimates vary, most notably for
\texttt{castle} at step 10.

\begin{table}[h]\centering\small
\caption{$\hat\kappa$ at three denoising steps (\texttt{mid\_block.attentions.0}).}
\label{tab:timestep}
\begin{tabular}{lccc}\toprule
Concept & Step 10 & Step 25 & Step 40\\\midrule
\texttt{castle} & 0.77 & 0.27 & 0.20\\
\texttt{cat} & 0.81 & 0.77 & 0.74\\
\texttt{horse} & 0.80 & 0.81 & 0.84\\
\texttt{bear} & 0.81 & 0.82 & 0.81\\
\texttt{dog} & 0.95 & 0.89 & 0.86\\
\texttt{sunglasses} & 0.99 & 0.99 & 0.99\\
\texttt{eye} & 0.99 & 0.97 & 0.97\\\bottomrule
\end{tabular}
\end{table}

\paragraph{Small and localized concepts.} To test whether mean-pooled
mid-block activations capture concepts occupying small image regions, we add
\texttt{sunglasses} and \texttt{eye} and probe cross-attention layers at
$8\times8$, $16\times16$, $32\times32$, and $64\times64$ resolution. At step 25,
\texttt{sunglasses} obtains $\hat\kappa=0.99$, $0.88$, $0.99$, $0.97$ and
\texttt{eye} obtains $0.97$, $0.86$, $0.94$, $0.94$. Discovered neighborhoods
remain semantic (\texttt{sunglasses}: \texttt{eyeglasses}, \texttt{goggles},
\texttt{shades}, \texttt{spectacles}; \texttt{eye}: \texttt{eyeball},
\texttt{eyelid}, \texttt{iris}, \texttt{pupil}) and no control is selected. The
estimator thus detects entanglement for localized concepts, while the exact
value of $\hat\kappa$ remains probe-dependent.

%% file: sections/app/04.tex
\section{Empirical Validation of the Minimum Displacement Assumption}
\label{app:displacement}

This section provides empirical support for~\cref{asm:edit} by measuring the activation displacement induced by six unlearning methods and verifying that the displacement propagates to semantic neighbors in proportion to their overlap with the target concept.

\paragraph{Setup.}
We extract cross-attention activations at \texttt{mid\_block.attentions.0} at step $t^*{=}25$ of 50 DDIM steps from six unlearning methods applied to erase \texttt{horse}: ESD, EDiff, SalUn, TRUST, AdvUnlearn, and STEREO. For each method, we compare activations between the original model $\mathcal{D}_\theta$ and the concept-erased model $\mathcal{D}_{\hat\theta}$ under matched random seeds, so that observed differences reflect the unlearning edit rather than sampling variance. We measure mean displacement $\|\Phi_{\hat\theta}(p) - \Phi_\theta(p)\|_2$ across three prompt categories:
\begin{itemize}
    \item \emph{Target} prompts containing the erased concept (\texttt{horse});
    \item \emph{Neighbor} prompts containing semantically related concepts (\texttt{pony}, \texttt{donkey}, \texttt{deer});
    \item \emph{Control} prompts containing semantically isolated concepts (\texttt{castle}, \texttt{car}).
\end{itemize}
For each prompt category, we use the same 50 shared templates as in~\cref{app:activation_regions}, differing only in the concept word. Activations are mean-pooled across spatial and token dimensions to produce a single vector per prompt, so the absolute magnitudes reported here are token-averaged and not directly comparable to the unpooled, full-tensor displacement values reported at a different layer in~\cref{app:displacement_kappa} (\cref{tab:displacement}); the two appendices probe different facets of the propagation mechanism: this one establishes the \emph{ordering} $\Delta_{\mathrm{target}}, \Delta_{\mathrm{neighbor}} > \Delta_{\mathrm{control}}$ within a single target, while~\cref{app:displacement_kappa} establishes the \emph{$\hat\kappa$-scaling} of $\Delta_{\mathrm{neighbor}}$ across targets. \Cref{fig:displacement} reports the mean displacement by method and prompt category. Two patterns emerge.

First, all six methods produce measurable target displacement (approximately $6$ to $16$ in mean-pooled activation $L_2$ distance), with TRUST producing the largest mean displacement. This pattern is consistent with the minimum-displacement assumption in~\cref{asm:edit}. The magnitudes show how
strongly each edit perturbs the probed representation; displacement alone does
not measure erasure quality.

TRUST, whose update is gradient-based, produces the largest target displacement ($15.82\pm3.82$) and displaces neighbors ($8.82\pm4.88$) far more than controls ($3.10\pm1.36$), confirming that Assumption~\ref{asm:edit} and the propagation mechanism hold for gradient-based fine-tuning.


Second, the displacement propagates to semantic neighbors: for every method, $\Delta_{\mathrm{neighbor}} \geq \Delta_{\mathrm{control}}$ with the gap ranging from $0.9$ (SalUn) to $6.9$ (STEREO). This gap is the empirical signature of the propagation mechanism in~\cref{thm:tradeoff}: displacement within $\mathcal{R}_{c_u}$ leaks into overlapping neighbor regions in proportion to $\kappa$, causing greater collateral displacement for high-$\kappa$ neighbors than for low-$\kappa$ controls. EraseDiff exhibits the smallest cross-category variation, suggesting it operates in a low-selectivity regime where $\kappa$-modulated propagation falls below our measurement granularity.

Third, STEREO is the only method for which $\Delta_{\mathrm{neighbor}} > \Delta_{\mathrm{target}}$, while simultaneously achieving the lowest control displacement ($2.1$). This produces a propagation gap of $6.9$, the largest of all six methods. This pattern is consistent with our analysis of STEREO in~\cref{rem:stereo}: by suppressing the entire subspace spanned by the adversarial tokens $v_1^*, v_2^*$ rather than the target alone, STEREO displaces neighbor concepts that lie within this shared subspace by even more than the target itself, while leaving semantically isolated concepts largely intact. 

These measurements support two claims central to our analysis. The uniform presence of nontrivial target displacement validates the precondition of~\cref{asm:edit}. The systematic neighbor, control gap, and STEREO's amplified version of it, validate the propagation claim in Step~2 of the proof: methods are not merely displacing the target in isolation but are displacing overlapping regions of activation space, with the magnitude of this collateral displacement modulated by the geometric overlap $\kappa$ predicted by~\cref{thm:tradeoff}. The robust prediction of the theorem is the \emph{ordering} $\Delta_{\mathrm{neighbor}} > \Delta_{\mathrm{control}}$ within each method and its scaling with $\kappa$ across concept types, not specific numerical values.


\begin{figure}[htbp]
\centering
\includegraphics[width=0.9\linewidth]{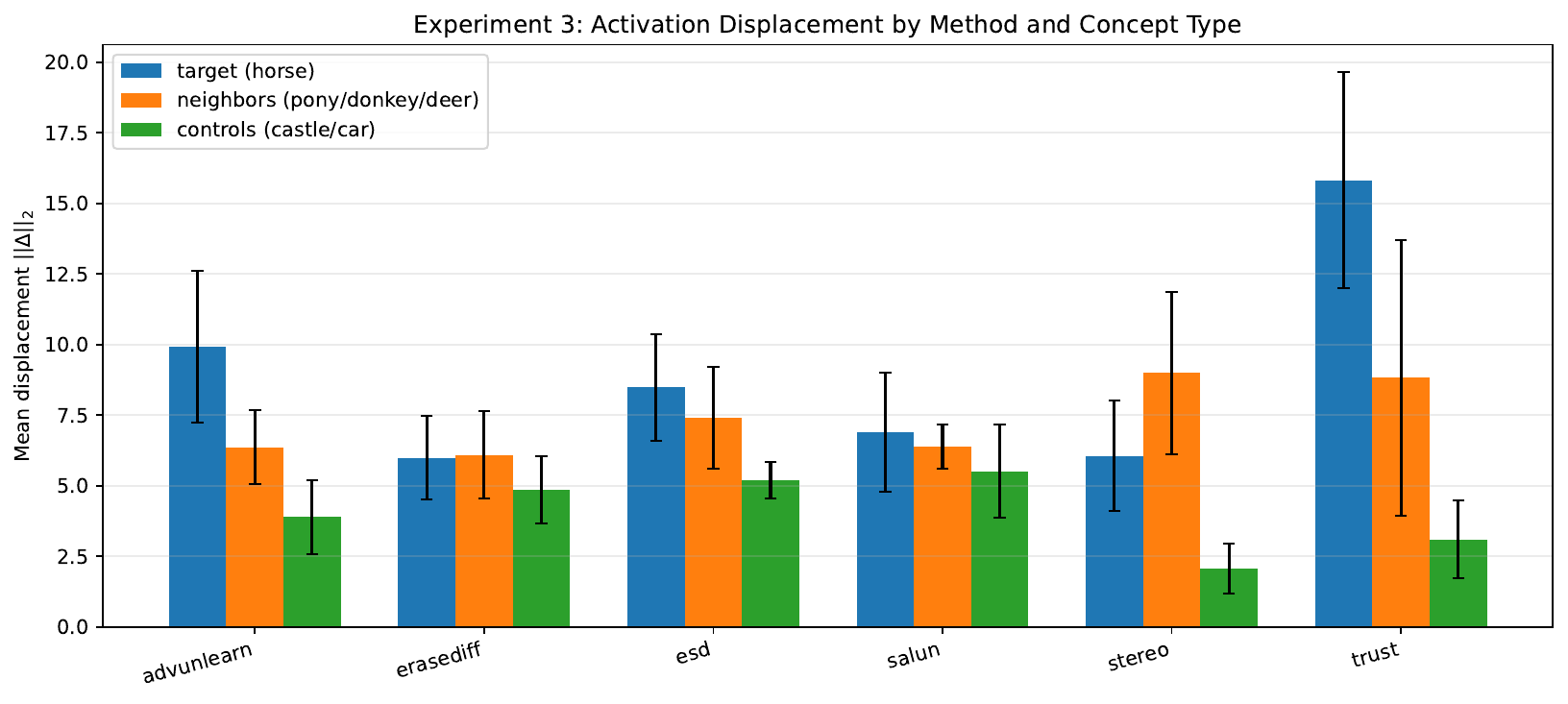}
\caption{
Mean activation displacement $\|\boldsymbol{\Phi}_{\hat\theta}(p) - \boldsymbol{\Phi}_\theta(p)\|_2$ for six unlearning methods erasing \texttt{horse}. Target prompts (blue) confirm~\cref{asm:edit}: erasure requires measurable displacement. Neighbor prompts (orange) are displaced more than control prompts (green) for all methods, consistent with the propagation mechanism in~\cref{thm:tradeoff}.
}
\label{fig:displacement}
\end{figure}

%% file: sections/app/05.tex
\section{Empirical Validation of the Lipschitz Assumption}
\label{app:lipschitz}
This section provides empirical support for~\cref{asm:lip} by measuring the relationship between activation distance and change in concept generation probability under natural prompt variations, and verifying that the observed relationship is consistent with a Lipschitz bound with a small empirical constant.

\paragraph{Setup.}
For each of the ten concepts in~\cref{app:activation_regions}, we construct a set of prompt pairs $(p_b, p_v)$ where $p_b$ is a base prompt containing the concept and $p_v$ is a variant prompt obtained by one of four perturbations at increasing semantic distance:
\begin{itemize}
    \item \emph{Synonym substitution:} replacing the concept word with a close synonym (e.g., \texttt{horse} $\to$ \texttt{stallion});
    \item \emph{Scene change:} altering the background or setting while preserving the concept (e.g., ``in a field'' $\to$ ``on a beach'');
    \item \emph{Related concept:} substituting a semantically related concept (e.g., \texttt{horse} $\to$ \texttt{pony});
    \item \emph{Unrelated concept:} substituting an unrelated concept (e.g., \texttt{horse} $\to$ \texttt{castle}).
\end{itemize}
For each pair, we extract cross-attention activations at \texttt{mid\_block.attentions.0} (step $t^*{=}25$) and compute the activation distance $\|\boldsymbol{\Phi}_\theta(p_v) - \boldsymbol{\Phi}_\theta(p_b)\|_2$. We then generate images from each prompt and compute the concept generation probability $P(p)$ using a CLIP-based classifier aligned to the base concept. The quantity of interest is the pair $\bigl(\|\Phi_\theta(p_v) - \Phi_\theta(p_b)\|_2,\;|P(p_v) - P(p_b)|\bigr)$. This experiment validates the case $\theta'=\theta''=\theta$ of Assumption~\ref{asm:lip}.

\Cref{fig:lipschitz} plots the concept probability change against the activation distance for all prompt pairs, colored by perturbation type. Two patterns emerge. First, the four perturbation types occupy approximately nested bands of activation distance, with synonyms producing the smallest displacements, scene changes and related-concept substitutions producing intermediate displacements, and unrelated-concept substitutions producing the largest. Second, the change in concept probability scales roughly linearly with activation distance within each band, with no prompt pair exceeding a bound of the form $|P(p_v) - P(p_b)| \leq \hat{L} \,\|\Phi_\theta(p_v) - \Phi_\theta(p_b)\|_2$ for empirical constant $\hat{L} \approx 0.174$.

We emphasize that the empirical constant $\hat{L}$ should be read as a conservative upper bound for this specific layer, timestep, and set of concepts, not as a universal property of Stable Diffusion. Varying the probe layer, the timestep, or the CLIP classifier would likely change the numerical value. The claim we extract from this measurement is weaker and more robust: the concept probability mapping is empirically smooth as a function of activation distance, with no observed violations of the Lipschitz condition across hundreds of prompt pairs spanning four perturbation types. This is the regularity our proof requires. For illustration, combining $\hat{L} \approx 0.174$ with $\varepsilon = 0.05$ yields a minimum displacement bound of $\Delta \geq (1-\varepsilon)/\hat{L} \approx 5.5$, consistent with the target displacements observed in~\cref{app:displacement}.

\begin{figure}[t]
\centering
\includegraphics[width=0.85\linewidth]{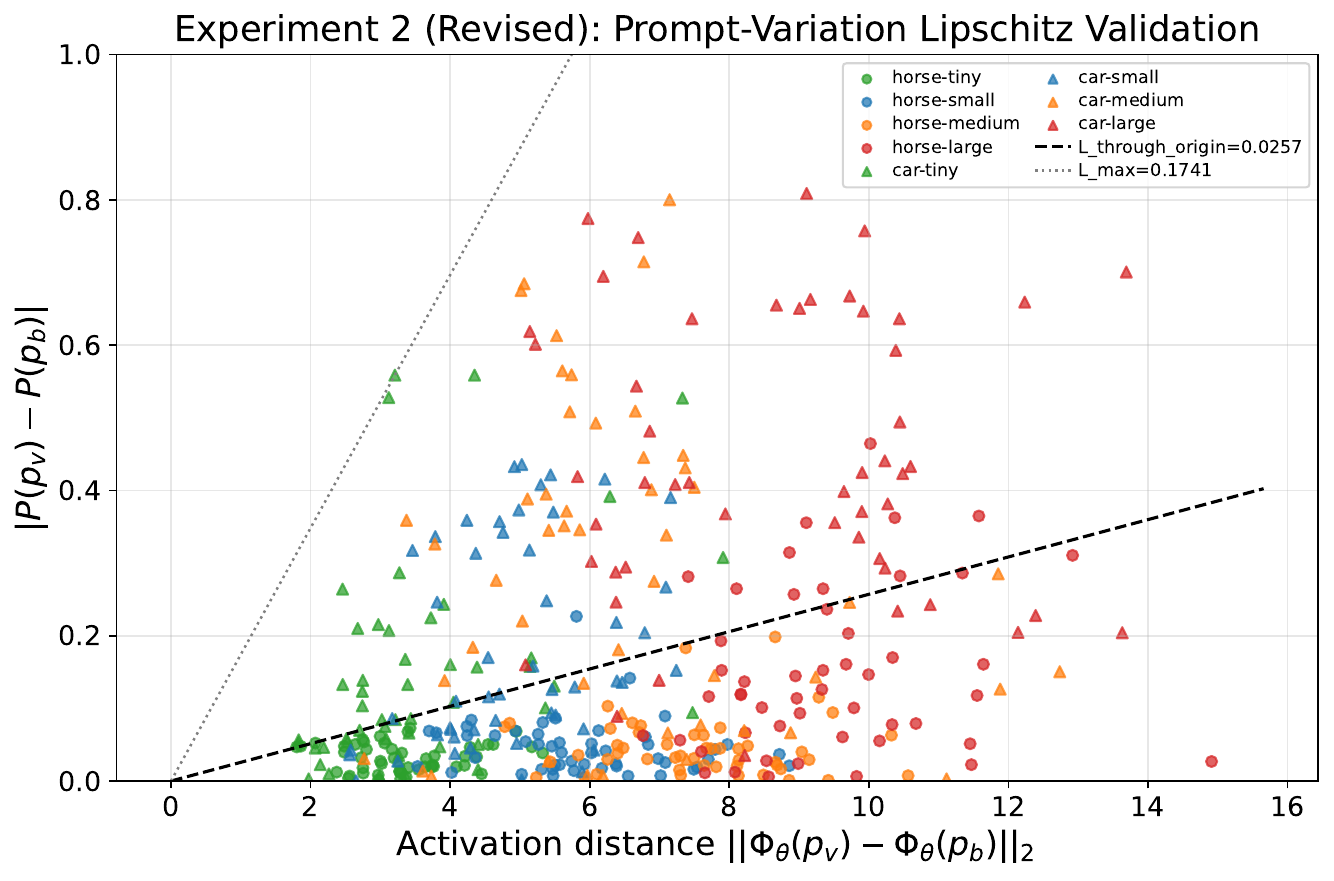}
\caption{
Lipschitz validation via prompt variation. Each point is a (base, variant) prompt pair; the $x$-axis shows activation distance and the $y$-axis shows the change in CLIP-based concept probability. Points are colored by semantic distance from the base prompt (green: synonyms; blue: scene changes; orange: related concepts; red: unrelated concepts). All points lie below the empirical bound $\hat{L} = 0.174$ (dotted line), supporting~\cref{asm:lip}.
}
\label{fig:lipschitz}
\end{figure}

%% file: sections/app/06.tex
\section{Interpreting Existing Methods Through the Trade-off Lens}
\label{app:method_analysis}
\paragraph{The role of $\kappa$: why the trade-off severity varies.}
The overlap coefficient $\kappa(c_u,\rho)$ determines the slope of the Pareto frontier and hence how severe the trade-off is for a given target concept. Concepts embedded in semantically dense neighborhoods (e.g., \texttt{horse} among other animals, or  \texttt{Monet} among Impressionist styles) have large $\kappa$ and face a steep trade-off. Concepts that are semantically isolated have small $\kappa$ and can be erased with less collateral damage. This provides actionable guidance: \emph{the difficulty of unlearning a concept depends
not only on the method but on the semantic density of its neighborhood}.

\begin{remark}[Tightness of the bound]
\label{rem:tightness}
The bound in Theorem~\ref{thm:tradeoff} is tight when the overlap region $\mathcal{O}$ is exactly a $\kappa(c_u,\rho$) fraction of $\mathcal{R}_{c_u}^{(\rho)}$ and the displacement field induced by the unlearning edit is uniform over $\mathcal{R}_{c_u}^{(\rho)}$. In practice, displacement is typically non-uniform, since methods may suppress some directions more aggressively than others, in which case the bound is conservative. Characterizing the displacement geometry more precisely is an interesting direction for future work.
\end{remark}

\paragraph{Reading existing methods as operating points on the frontier.}
We now apply this framing to specific unlearning methods, 
interpreting their empirical behavior as distinct operating points on the predicted Pareto frontier. The discussion is interpretive rather than derivational: we identify where on the frontier each method sits and connect this to the method's mechanism.
\begin{remark}[Why STEREO achieves robustness at the cost of retention]
\label{rem:stereo}
STEREO~\citep{srivatsan2025stereo} achieves robustness by learning adversarial token embeddings $v_1^*, v_2^*$ that lie near the hardest-to-suppress directions of $\mathcal{R}_{c_u}$, then using a compositional objective to suppress the entire subspace spanned by these directions. This broad suppression corresponds to a large displacement $\Delta$ across $\mathcal{R}_{c_u}$, which by Theorem~\ref{thm:tradeoff} directly implies large collateral displacement in $\mathcal{R}_{c'}$ for all neighbors $c'$ with $\kappa(c_u,\rho) > 0$. Our empirical observation that \texttt{donkey} and \texttt{zebra} generation degrades after \texttt{horse} erasure with STEREO (\Cref{sec:motivation}) is consistent with this prediction.
\end{remark}

\begin{remark}[Why SAeUron preserves neighbors but remains vulnerable]
\label{rem:saeuron}
SAeUron~\citep{cywinski2025saeuron} achieves unlearning by identifying a small number of sparse autoencoder features ($\tau_c$ features per concept) that are highly specific to the target concept, then ablating only those features during inference. Because the intervention is restricted to a sparse set of concept-specific features rather than applied across the full activation region $\mathcal{R}_{c_u}$, the effective displacement $\Delta$ is small and concentrated in a low-dimensional subspace. This limits collateral damage to neighbors (small $\gamma$), explaining SAeUron's superior retention performance in sequential unlearning. However, the sparsity of the intervention also means that portions of $\mathcal{R}_{c_u}$ remain unsuppressed, precisely the portions reachable through indirect correlated prompts, consistent with the concept leakage observed in~\cref{sec:motivation}. In the language of our framework, SAeUron achieves small $\gamma$ at the cost of large $\varepsilon$.
\end{remark}

\begin{remark}[Intervention breadth determines the operating point]
\label{rem:finetuning}
More generally, a method's operating point on the $(\varepsilon,\gamma)$ frontier is set by how broadly it displaces activations across $\mathcal{R}_{c_u}$, not by whether it edits weights or intervenes at inference time. Methods that displace activations broadly, whether by fine-tuning (SalUn, AdvUnlearn, STEREO) or at inference time (SLD), achieve small $\varepsilon$ at the cost of larger $\gamma$. Methods whose intervention is narrow, such as SEOT and SAeUron, achieve small $\gamma$ but leave larger $\varepsilon$ because the intervention does not cover the full activation region. This is a direct consequence of Theorem~\ref{thm:tradeoff}: the effective displacement $\Delta$ is the lever, and methods differ in how broadly they apply it.
\end{remark}

%% file: sections/app/07.tex
\section{Proof of~\cref{thm:tradeoff}}
\label{app:proof}

We restate the theorem and provide the proof. Throughout, distances in $\mathcal{H}$ are measured in $\ell_2$ norm.

\begin{proof}
We proceed in three steps.

\medskip
\noindent\textbf{Step 1: Required activation displacement for $\varepsilon$-robustness.}

By construction of the activation region $\mathcal{R}_{c_u}$ (\cref{def:activation}), every $\mathbf{h}\in\mathcal{R}_{c_u}$ corresponds to a prompt $p$ whose pre-edit generation contains $c_u$, i.e.\ $g_{c_u}(\theta,p)=1$. After an $\varepsilon$-robust edit (\cref{def:robust}), $g_{c_u}(\hat\theta,p)\le\varepsilon$. Applying~\cref{asm:lip} to the pre- and post-edit activations (with $\theta'=\theta$, $\theta''=\hat\theta$, $p'=p$) gives, for every $p$ with $\boldsymbol{\Phi}_\theta(p)\in\mathcal{R}_{c_u}$,
\begin{equation}
    1 - \varepsilon \;\le\; \bigl|g_{c_u}(\theta,p) - g_{c_u}(\hat\theta,p)\bigr|
    \;\le\; L\,\|\boldsymbol{\Phi}_{\hat\theta}(p)-\boldsymbol{\Phi}_\theta(p)\|_2.
\end{equation}
Combined with~\cref{asm:edit}, this gives a uniform displacement floor on $\mathcal{R}_{c_u}$:
\begin{equation}
\label{eq:proof_delta_floor}
    \Delta \;\geq\; \frac{1-\varepsilon}{L}.
\end{equation}

\medskip
\noindent\textbf{Step 2: Aggregate overlap induces aggregate collateral displacement.}

By~\cref{def:overlap},
\begin{equation}
    \kappa(c_u,\rho)
    =
    \frac{
        \mathrm{vol}\!\left(
        \mathcal{R}_{c_u}^{(\rho)}
        \cap
        \bigcup_{c'\in\mathcal{C}\setminus\{c_u\}}
        \mathcal{R}_{c'}^{(\rho)}
        \right)
    }{
        \mathrm{vol}\!\left(
        \mathcal{R}_{c_u}^{(\rho)}
        \right)
    } .
\end{equation}
Let $\mathcal{O} := \mathcal{R}_{c_u}^{(\rho)} \cap \bigcup_{c'\neq c_u}\mathcal{R}_{c'}^{(\rho)}$ denote the overlap region. By~\cref{def:neighbor}, the non-target prompt distribution $\mathcal{P}_{-u} = \tfrac{1}{|\mathcal{C}\setminus\{c_u\}|}\sum_{c'\neq c_u}\mathcal{P}_{c'}$ is supported on the set of activations that lie in $\bigcup_{c'\neq c_u}\mathcal{R}_{c'}^{(\rho)}$. We invoke the following regularity assumption, which we make explicit:

\smallskip
\noindent\emph{(Mass-volume regularity).} The push-forward measure of $\mathcal{P}_{-u}$ under $\boldsymbol{\Phi}_\theta$ places mass on $\mathcal{O}$ in proportion to its volume share of the support, i.e.\ $\Pr_{p\sim\mathcal{P}_{-u}}[\boldsymbol{\Phi}_\theta(p) \in \mathcal{O}] \geq \kappa(c_u,\rho)$. This holds when activations within each $\mathcal{R}_{c'}^{(\rho)}$ are sampled approximately uniformly under $\mathcal{P}_{c'}$.
\smallskip

For any prompt $p\sim\mathcal{P}_{-u}$ with $\boldsymbol{\Phi}_\theta(p)\in\mathcal{O}$, we have $\boldsymbol{\Phi}_\theta(p)\in\mathcal{R}_{c_u}^{(\rho)}$, so the displacement floor from Step~1 applies (extending~\cref{asm:edit} from $\mathcal{R}_{c_u}$ to its closed $\rho$-neighborhood by continuity of the displacement field): $\|\boldsymbol{\Phi}_{\hat\theta}(p)-\boldsymbol{\Phi}_\theta(p)\|_2 \geq \Delta$. Combining with the mass-volume regularity gives
\begin{align}
    \mathbb{E}_{p\sim \mathcal{P}_{-u}}
    \!\left[\|\boldsymbol{\Phi}_{\hat{\theta}}(p)-\boldsymbol{\Phi}_{\theta}(p)\|_2\right]
    &\;\geq\; \Pr_{p\sim\mathcal{P}_{-u}}\!\left[\boldsymbol{\Phi}_\theta(p)\in\mathcal{O}\right]\cdot\Delta \notag\\
    &\;\geq\; \kappa(c_u,\rho)\cdot\Delta. \label{eq:proof_step2}
\end{align}

\medskip
\noindent\textbf{Step 3: Deriving the trade-off inequality.}

Substituting the displacement floor~\eqref{eq:proof_delta_floor} into~\eqref{eq:proof_step2} gives
\begin{equation}
    \mathbb{E}_{p\sim \mathcal{P}_{-u}}
    \left[
    \|\boldsymbol{\Phi}_{\hat{\theta}}(p)-\boldsymbol{\Phi}_{\theta}(p)\|_2
    \right]
    \geq
    \kappa(c_u,\rho)\frac{1-\varepsilon}{L}.
\end{equation}
Since~\cref{def:neighbor} bounds the left-hand side above by $\gamma$, we obtain
\begin{equation}
    \gamma
    \geq
    \kappa(c_u,\rho)\frac{1-\varepsilon}{L}.
\end{equation}
Rearranging (for $\kappa(c_u,\rho) > 0$) yields
\begin{equation}
    \varepsilon
    \geq
    1-\frac{L\gamma}{\kappa(c_u,\rho)}.
\end{equation}
This completes the proof.
\end{proof}

\paragraph{Remark on metric compatibility.}
The proof uses the $\ell_2$ metric throughout because~\cref{asm:lip} is stated in $\ell_2$. Our $\hat\kappa$ estimator (\cref{app:kappa_estimation}) instead uses cosine distance to characterize region membership. These two choices are compatible at the level of the geometric objects in the bound: for unit-normalized activations, $\|\mathbf{a} - \mathbf{b}\|_2^2 = 2\,d_{\cos}(\mathbf{a}, \mathbf{b})$, so the two metrics induce the same topology on the activation sphere and define the same fattened regions $\mathcal{R}_c^{(\rho)}$ and entanglement coefficient $\kappa(c_u,\rho)$ up to a monotone re-parameterization of the radius $\rho$. This re-parameterization is absorbed into the choice of $\rho$ in our empirical $\hat\kappa$ estimator. Consequently, estimating $\kappa$ via cosine distance while applying the theoretical bound under an $\ell_2$ Lipschitz assumption is internally consistent: the bound's set-theoretic content (volume overlap of $\rho$-neighborhoods) transfers between metrics, while the Lipschitz constant $L$ remains tied to the $\ell_2$ formulation in which~\cref{asm:lip} is stated.

%% file: sections/app/08.tex
\section{Implementation Details}
\label{app:implementation}
Here we introduce the implementation of each unlearning method used in~\cref{sec:experiment}, including hyperparameters, training protocols, and compute budgets. All methods are applied to CompVis Stable Diffusion v1.4~\citep{rombach2022high}. The full implementation, including code, prompt CSVs, and evaluation scripts, is available at our public repository:
\url{https://github.com/Jazzssmine/concept-entangle}

\subsection{Method-Specific Hyperparameters}
We group the 13 benchmarked methods into three mechanism families.
\emph{Closed-form editing methods} (UCE, RECE) modify model weights through analytic updates. 
\emph{Fine-tuning methods} (ESD, SalUn, EDiff, CPE, TRUST, AdvUnlearn, STEREO) modify model weights through gradient-based optimization.
\emph{Inference-time methods} (SEOT, SAeUron, SLD, SAFREE) intervene during generation without permanently modifying the base model. AdvUnlearn and STEREO additionally incorporate adversarial training, explicitly targeting the aggressive-erasure end of the trade-off; their inclusion tests whether robustness gains incur measurable cost to neighbor preservation.

Unless otherwise noted, we use the hyperparameters specified in each method's original publication. We document deviations and choices not fully specified by the original papers in the per-method descriptions below.

\paragraph{UCE~\citep{gandikota2024unified}.}
Unified Concept Editing uses closed-form updates to text-to-image attention projections to erase target concepts without gradient training. Configuration: guide concept \textbf{unconditional}, erase scale \textbf{1.0}, preserve scale \textbf{1.0}, regularization $\lambda$ \textbf{0.5}, no explicit preserve concepts; each concept is erased independently.

\paragraph{RECE~\citep{gong2024reliable}.}
Reliable and Efficient Concept Erasure iteratively discovers embeddings that recover the target concept and removes their effect through closed-form cross-attention edits.
Configuration: initialized from base SD v1.4, edit epochs \textbf{1}, erase scale \textbf{1.0}, preserve scale \textbf{0.1}, $\lambda$ \textbf{0.1}, adversarial-embedding regularization \textbf{0.1}, guide concept \textbf{unconditional}.

\paragraph{ESD~\citep{gandikota2023erasing}.} 
Erased Stable Diffusion fine-tunes the cross-attention layers to suppress the target concept via a negative-guidance objective. We use the ESD-sd (stable-diffusion) variant. 
Configuration: learning rate \textbf{5e-5}, training steps \textbf{200}, negative guidance scale $\eta = \textbf{5}$, 
batch size \textbf{1}. Training uses the target concept name as the single forget prompt.
For the cross-architecture experiments (\cref{app:arch}), we adapt ESD to each backbone.
On SDXL (base-1.0) we use the ESD-x variant, which fine-tunes only the U-Net cross-attention layers (\texttt{attn2} query, key, value, and output projections; about 546M parameters). Configuration: Adam, learning rate \textbf{2e-4}, training steps \textbf{200}, batch size \textbf{1}, negative guidance scale $\eta = \textbf{1}$.
On FLUX.1-schnell we use a stricter ESD-x variant that fine-tunes only the attention key and value projections (\texttt{to\_k}, \texttt{to\_v}) in all 19 joint and 38 single transformer blocks (about 1.08B parameters), keeping the text-side projections (\texttt{add\_k\_proj}, \texttt{add\_v\_proj}) frozen. Configuration: learning rate \textbf{1e-5}, training steps \textbf{500}, negative guidance scale $\eta = \textbf{2}$.

\paragraph{SalUn~\citep{fan2023salun}.} 
Saliency-guided Unlearning identifies salient weights via gradient magnitudes and applies random labeling on the forget set to those weights only. We use saliency threshold sparsity \textbf{0.5}, forget-set size \textbf{800}, learning rate \textbf{1e-5}, and training epochs \textbf{5}. The forget and retain sets consist of 800 target and 800 non-target images generated by Stable Diffusion v1.4.
\paragraph{EDiff~\citep{wu2024erasediff}.} 
EraseDiff fine-tunes the UNet using a bi-level optimization that explicitly balances erasure and retention objectives. Configuration: learning rate \textbf{1e-5}, inner steps per outer update \textbf{2}, total outer iterations \textbf{300}.
\paragraph{CPE~\citep{lee2025concept}}
Concept Pinpoint Eraser trains nonlinear residual attention gates with an anchoring loss to suppress target concepts while preserving other concepts.
Configuration: \textbf{5} stages of \textbf{450} iterations each (stage 1 uses $4\times$ the iterations and $10\times$ the learning rate), learning rate \textbf{3e-5}, gate rank \textbf{16}, erasure guidance \textbf{0.3}, PAL weight \textbf{1e5}, noise \textbf{1e-3}; adversarial learning with learning rate \textbf{1e-2}, \textbf{450} iterations, and \textbf{16} added prompts per stage.

\paragraph{TRUST~\citep{kori2026selective}}
TRUST dynamically identifies target-associated neurons and selectively fine-tunes them with Hessian-based regularization.
Configuration: concept neurons are heads whose $|z\text{-score}| > $ \textbf{2.0} on the CLIP-loss attribution; learning rate \textbf{1e-4}, batch size \textbf{2}, \textbf{2} epochs of \textbf{60} steps (final checkpoint), \textbf{25} sampling steps for neuron localization.

\paragraph{AdvUnlearn~\citep{zhang2024defensive}.} 
Adversarial Unlearning augments the erasure objective with adversarial prompt generation to improve robustness against indirect concept recovery. We use adversarial prompt budget 30 per training step, adversarial learning rate \textbf{1e-3}, main-model learning rate \textbf{1e-5}, and training steps \textbf{1000}. The attack prompts are updated per training steps.
\paragraph{STEREO~\citep{srivatsan2025stereo}.} 
STEREO proceeds in two phases: a Search for Trainable Embeddings via RObust optimization (STE) phase that learns adversarial token embeddings $v_1^*, v_2^*$, followed by a Robust Erasure via Orthogonalization (REO) phase that suppresses the subspace spanned by these embeddings. STE phase: \textbf{200} steps, learning rate 
\textbf{5e-6}. REO phase: \textbf{200} steps, learning rate \textbf{2e-5}, $K = \textbf{2}$ adversarial embeddings per target.
\paragraph{SEOT~\citep{li2024get}.} 
Soft Embedding Optimization for Target-concept erasure modifies the text embedding at inference time to suppress the target concept without altering model weights. We use embedding regularization strength \textbf{0.01} and optimization steps per generation \textbf{10}.
\paragraph{SAeUron~\citep{cywinski2025saeuron}.} 
SAeUron uses sparse autoencoder feature ablation applied at inference: for each target, $\tau_c$ concept-specific SAE features are identified from activation statistics, and these features are zeroed during generation. We use the pre-trained SAE released by the authors, with 
$\tau_c = \textbf{1}$ features ablated per target.

\paragraph{SLD~\citep{schramowski2023safe}.}
Safe Latent Diffusion adds safety guidance during denoising to suppress target content without retraining the model.
Configuration: the \textbf{SLD-Max} preset, i.e.\ safety guidance scale \textbf{5000}, warm-up steps \textbf{0}, threshold \textbf{1.0}, momentum scale \textbf{0.5}, momentum $\beta$ \textbf{0.7}; \textbf{50} sampling steps, CFG scale \textbf{7.5}.

\paragraph{SAFREE~\citep{yoon2025safree}.}
SAFREE filters prompt embeddings away from a target-concept subspace and adapts its intervention during denoising without updating model weights.
Configuration: concept subspace from \textbf{12} per-concept negative terms (synonyms, sub-breeds, and phrasings such as ``a photo of a dog''), $\alpha$ \textbf{-0.5}, re-attention up to step \textbf{20}, FreeU hyperparameters \textbf{(1.0, 1.0, 0.9, 0.2)}; \textbf{50} sampling steps, CFG scale \textbf{7.5}.

\subsection{Compute and Infrastructure}
\label{app:compute}
All experiments were run on GPU A100 40GB. Unlearning per method-target pair requires approximately 1 GPU-hours for fine-tuning methods and 2 GPU-hours for inference-time methods. Evaluation (image generation + classification) requires approximately 0.5 GPU-hours per method-target pair.

\subsection{Reproducibility}
\label{app:reproducibility}
For each (method, target) pair, we generate 100 images from direct prompts, 100 to 235 from indirect prompts (depending on how many valid indirect prompts the target yields; 235 for \texttt{castle}, 200 for \texttt{dog}, 195 for \texttt{horse}, and 100 for every other target), 50 per neighbor candidate (500 total), and 50 per control concept (150 total); IRA is computed over the 650 neighbor and control images. All images use fixed random seeds, with seed equal to the row index in the target's prompt CSV. This ensures seed-level variation is controlled when comparing methods on the same prompt.
The base model $\mathcal{D}_\theta$ is loaded from the HuggingFace checkpoint \texttt{CompVis/stable-diffusion-v1-4}. Code, prompt CSVs, and the classification vocabulary (\cref{tab:vocab}) will be released to enable full reproduction of~\cref{tab:main_results_avg}.

\paragraph{Use of existing assets.}
Our experiments use publicly released assets from prior work: the CompVis Stable Diffusion v1.4 checkpoint, released under the CreativeML OpenRAIL-M license; LAION metadata/captions, released under CC-BY 4.0 while the underlying images remain subject to their original copyrights; and the released SAeUron sparse-autoencoder features, released under Apache-2.0. We cite the original sources for these assets and use them only for research evaluation under their stated release terms and licenses.

%% file: sections/app/09.tex
\section{Evaluation Setup and Metric Definitions}
\label{app:evaluation_setup}
This appendix provides the full evaluation protocol summarized in~\cref{subsec:setup}: the prompt construction pipeline for direct, indirect, neighbor, and control evaluation; formal definitions of the six metrics used in~\cref{sec:experiment}; and the classification vocabulary shared across all experiments.

\subsection{Prompt Construction and Scoring Pipeline}
\label{app:eval_pipeline}
Our automated evaluation protocol is designed to measure two coupled consequences of concept entanglement: \emph{indirect recovery}, where an erased concept remains recoverable through correlated contextual cues, and \emph{collateral forgetting}, where suppressing a target concept degrades performance on nearby concepts that should remain intact. To distinguish these effects, we separately construct (i) direct prompts that explicitly name $c$, (ii) indirect prompts constructed from correlated context without naming $c$, (iii) neighbor prompts that explicitly name each concept in the neighborhood $\mathcal{N}(c)$, and (iv) control prompts that name each concept in the control set $\mathcal{C}(c)$.

\paragraph{Correlated context extraction.}
For each target concept $c$, we collect a target-matched caption subset from LAION~\citep{schuhmann2022laion} aligned with diffusion-model pretraining. Specifically, we identify captions that explicitly contain $c$ or its lexical variants, and extract co-occurring content words and short phrases from this subset. We remove stopwords, punctuation, low-information function words, and lexical forms that directly mention the target concept. The remaining candidates are ranked using corpus-level association statistics and filtered to obtain a final set of correlated context words that captures the contextual footprint of $c$ in the training distribution. For example, when $c=\texttt{horse}$, the resulting set includes terms such as \texttt{jockey}, \texttt{stable}, \texttt{saddle}, \texttt{racetrack}, and \texttt{polo}, for $c = \texttt{castle}$, it includes \texttt{moat}, \texttt{turret}, \texttt{medieval}, \texttt{knight}, and \texttt{drawbridge}.

To visualize how these correlated prompts organize in text-embedding space, we pool prompts that explicitly mention the target concept and prompts that omit the target token but retain correlated context, then project their CLIP text embeddings to two dimensions with PCA. \Cref{fig:bow_context_span} shows that context-rich prompts occupy a nearby region rather than a disjoint cluster, supporting our bag-of-words motivation that concept evidence is distributed across correlated tokens rather than isolated in a single concept word.

\begin{figure}[t]
\centering
\includegraphics[width=\linewidth]{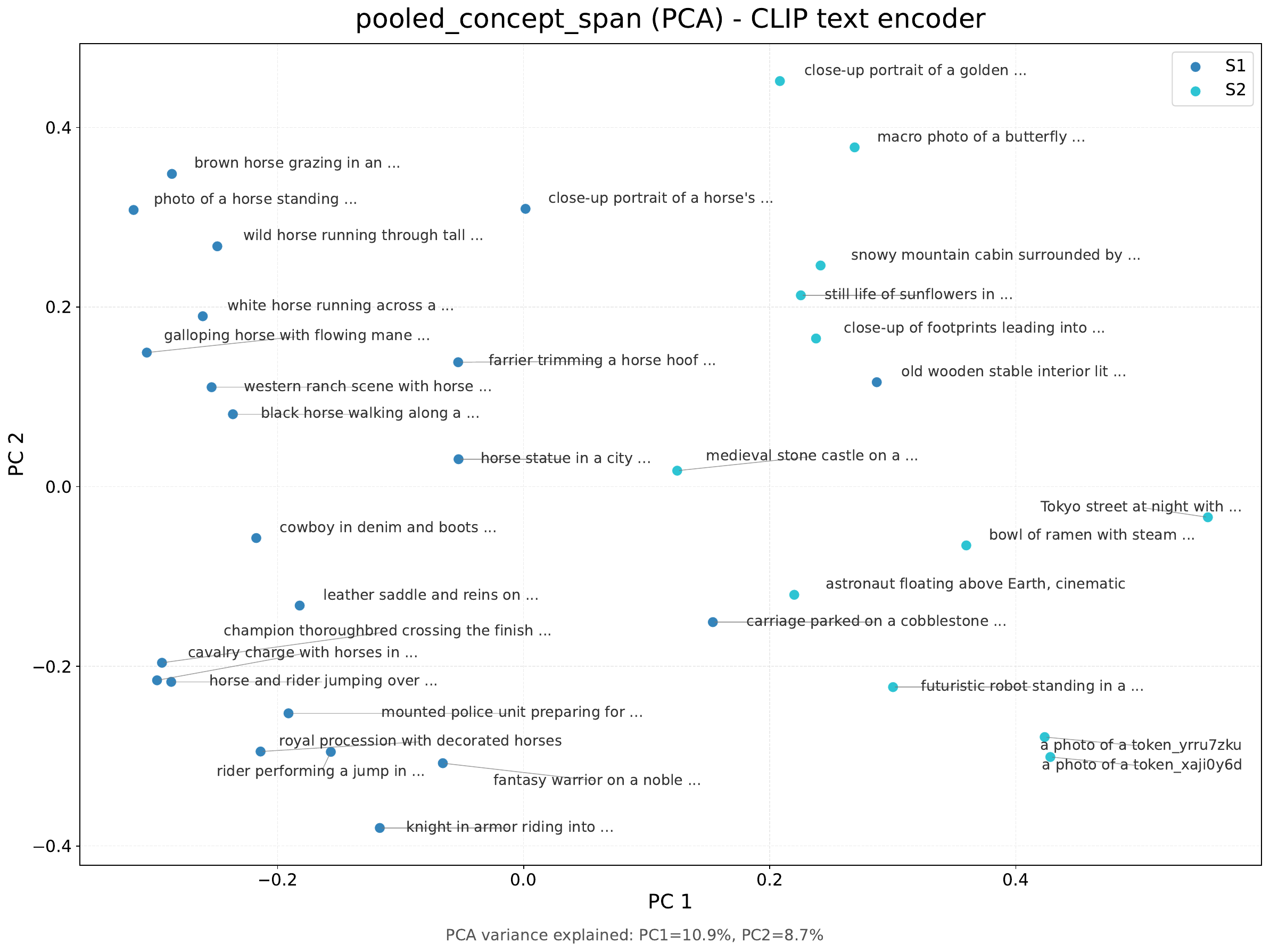}
\caption{
\textbf{Context-only prompts remain close to target prompts in CLIP text-embedding space.}
PCA projection of pooled prompts used for the \texttt{horse} case study. One group explicitly contains the concept token, while the other omits it but retains correlated context words (e.g., \texttt{jockey}, \texttt{saddle}, \texttt{racetrack}). The proximity between these groups illustrates why token-level erasure can remain vulnerable to indirect contextual prompts.
}
\label{fig:bow_context_span}
\end{figure}

\paragraph{Indirect recovery prompts.}
We use the correlated context set for $c$ to construct a set of indirect prompts $\mathcal{P}_{\mathrm{ind}}(c)$. These prompts are generated under an explicit lexical exclusion constraint: they must not contain the target token or excluded lexical variants of $c$. Intuitively, $\mathcal{P}_{\mathrm{ind}}(c)$ probes whether the erased concept can still be elicited through contextual associations alone, without direct lexical mention. To improve prompt diversity while preserving control, we generate multiple prompts from different subsets of the correlated context words and retain only prompts that satisfy the exclusion rule.

\paragraph{Semantic neighbors and control concepts.}
The semantic neighborhood $\mathcal{N}(c)$ is determined by the procedure of Definition~\ref{def:semantic_neighborhood}: $\mathcal{N}(c)$ collects the candidates in $c$'s own pool whose centroid cosine distance to $c$ falls below the 25th percentile $\delta_c$ of pairwise distances in that pool. This produces concept-specific neighborhoods of five to seven discovered neighbors per target (\cref{tab:kappa_per_target}).
The control set $\mathcal{C}(c)$ of each target consists of three control concepts drawn from \{\texttt{cactus}, \texttt{jellyfish}, \texttt{melon}, \texttt{police car}\}, with \texttt{umbrella} in place of \texttt{police car} when the target is \texttt{car}; these are concepts whose centroids lie well above $\delta_c$ for every target, verified to contribute zero to $\hat\kappa$ across all five targets. For each neighbor $c' \in \mathcal{N}(c)$ and control $\tilde c \in \mathcal{C}(c)$, we generate prompts that explicitly mention the concept while excluding lexical forms of the target $c$. Per-target prompt files are released in the supplementary material.

\paragraph{Classifier-based scoring.}
All images are classified using a CLIP-based classifier over a fixed vocabulary of 65 concept labels (\cref{app:vocab}). For each image, the classifier returns the top-1 label among the vocabulary. All metrics defined below are computed from these top-1 assignments. In several random instances, we also perform a manual evaluation to ensure consistency of the results.


\subsection{Evaluation Metrics}
\label{app:metric_defs}
Each metric is defined with respect to a base model 
$\mathcal{D}_\theta$ and an unlearned model 
$\mathcal{D}_{\hat\theta}$. We write $\mathrm{Acc}(\mathcal{D}, c, \mathcal{P})$ for the fraction of images generated by model $\mathcal{D}$ from prompts in set $\mathcal{P}$ that are classified as concept $c$ by the CLIP evaluator.

\paragraph{Erasure metrics (measuring $\varepsilon$).}
These metrics measure how effectively the target concept $c$ is suppressed in the unlearned model.
\begin{itemize}
    \item \emph{Unlearning Accuracy (UA)}: The fraction of images generated from direct prompts $\mathcal{P}_{\mathrm{dir}}(c)$ by $\mathcal{D}_{\hat\theta}$ that 
    are \emph{not} classified as $c$. 
    \begin{equation}
    \label{eq:ua}
    \mathrm{UA}(c) = 1 - \mathrm{Acc}(\mathcal{D}_{\hat\theta}, c, \mathcal{P}_{\mathrm{dir}}(c)).
    \end{equation}
    Higher UA indicates more complete erasure under explicit lexical prompts.
    \item \emph{Indirect Recovery Rate (IRR)}: The fraction of images generated from indirect recovery prompts $\mathcal{P}_{\mathrm{ind}}(c)$ by 
    $\mathcal{D}_{\hat\theta}$ that \emph{are} classified as $c$: 
    \begin{equation}
    \label{eq:irr}
    \mathrm{IRR}(c) = \mathrm{Acc}(\mathcal{D}_{\hat\theta}, c, \mathcal{P}_{\mathrm{ind}}(c)).
    \end{equation}
    Higher IRR indicates greater vulnerability to concept recovery through correlated contextual cues. In contrast to UA, IRR measures the $\varepsilon$-robust erasure property formalized in~\cref{def:robust}.
\end{itemize}

\paragraph{Retention metrics (measuring $\gamma$).}
We measure retention at three levels of granularity:
\begin{itemize}
    \item \emph{In-domain Retain Accuracy (IRA)}: The fraction of images whose intended concept is not the target, i.e., all images from neighbor and control prompts, that are classified as the concept they were meant to show. Direct and indirect recovery prompts are excluded because their intended label is the target itself. IRA therefore measures how well the unlearned model still renders every non-target concept it is asked for, without normalizing by the base model.
    \item \emph{Neighbor Preservation (NP)}: This is our primary trade-off metric. The overlap with remaining concepts is often dominated by the overlap with a few concepts which are mostly semantically similar with the target concept. Therefore, as a proxy metric for utility preservation, we evaluate utility preservation in the semantic neighborhood $\mathcal{N}(c)$ of the target concept (\cref{def:semantic_neighborhood}) and define
    \begin{equation}
    \label{eq:neighbor_pres}
    \mathrm{NP}(c) = \frac{1}{|\mathcal{N}(c)|} \sum_{c' \in \mathcal{N}(c)} \frac{\mathrm{Acc}(\mathcal{D}_{\hat\theta}, c', \mathcal{P}_{\mathrm{dir}}(c'))}
    {\mathrm{Acc}(\mathcal{D}_\theta, c', \mathcal{P}_{\mathrm{dir}}(c'))}.
    \end{equation}
    An NP score of $1.0$ indicates perfect preservation of neighbors after unlearning; lower values indicate proportional collateral damage. The ratio form normalizes out the base-model recognition accuracy, which differs across concepts (and which is generally below 1.0 under a 65-class vocabulary).
    \item \emph{Control preservation (CP)}: To distinguish neighbor-specific collateral damage from broader degradation, we additionally evaluate performance on a non-neighbor control set $\mathcal{C}(c)$. We define
    \begin{equation}
    \label{eq:cp}
    \mathrm{CP}(c) = \frac{1}{|\mathcal{C}(c)|} \sum_{\tilde c \in \mathcal{C}(c)} 
    \frac{\mathrm{Acc}(\mathcal{D}_{\hat\theta}, \tilde c, \mathcal{P}_{\mathrm{dir}}(\tilde c))}
     {\mathrm{Acc}(\mathcal{D}_\theta, \tilde c, \mathcal{P}_{\mathrm{dir}}(\tilde c))}.
    \end{equation}
    Comparing NP and CP isolates neighbor-specific collateral damage from broader degradation: a method with $\mathrm{NP} \ll \mathrm{CP}$ is damaging concepts selectively in the target's neighborhood, consistent with the propagation mechanism in~\cref{thm:tradeoff}. A method with $\mathrm{NP} \approx \mathrm{CP} \ll 1$ is causing uniform degradation across the concept space.
\end{itemize}


\paragraph{DamageGap.} A summary statistic we introduce to quantify neighbor-selective damage: $\mathrm{DamageGap}(c) = \mathrm{CP}(c) - \mathrm{NP}(c)$. A positive DamageGap indicates that neighbors are damaged more than controls, consistent with the trade-off predicted by~\cref{thm:tradeoff}. A DamageGap near zero indicates uniform degradation (or uniform preservation) across both sets.

\subsection{Classification Vocabulary}
\label{app:vocab}
All CLIP-based classification uses the following 65-concept vocabulary, which is the union of the five targets, their candidate neighbors, and ten out-of-domain control labels, which include the controls used for $\hat\kappa$ estimation (\cref{app:kappa_estimation}). The same vocabulary is used for every metric and every one of these five targets, ensuring cross-target comparability. For the five additional targets of~\cref{app:additional_concepts}, the vocabulary is extended with each new target and its candidate neighbors (\cref{tab:new_concept_neighbors}).

\begin{table}[h]
\centering
\small
\begin{tabular}{p{3.2cm} p{10cm}}
\toprule
Category & Concepts \\
\midrule
Targets & \texttt{horse}, \texttt{cat}, \texttt{dog}, \texttt{bear}, \texttt{castle} \\
Equine  & \texttt{pony}, \texttt{donkey}, \texttt{zebra}, \texttt{mustang}, \texttt{mare}, \texttt{foal}, \texttt{stallion}, \texttt{mule}, \texttt{bison}, \texttt{camel} \\
Feline  & \texttt{kitten}, \texttt{tiger}, \texttt{lion}, \texttt{leopard}, \texttt{panther}, \texttt{cheetah}, \texttt{lynx}, \texttt{jaguar}, \texttt{cougar}, \texttt{bobcat} \\
Canid  & \texttt{puppy}, \texttt{fox}, \texttt{coyote}, \texttt{jackal}, \texttt{dingo}, \texttt{hyena}, \texttt{husky}, \texttt{labrador}, \texttt{beagle}, \texttt{retriever} \\
Ursine & \texttt{grizzly}, \texttt{polar bear}, \texttt{panda}, \texttt{cub}, \texttt{koala}, \texttt{black bear}, \texttt{brown bear}, \texttt{teddy bear}, \texttt{sloth bear}, \texttt{sun bear} \\
Architectural  & \texttt{fortress}, \texttt{tower}, \texttt{cathedral}, \texttt{palace}, \texttt{keep}, \texttt{citadel}, \texttt{watchtower}, \texttt{stronghold}, \texttt{fortified wall}, \texttt{manor} \\
Controls  & \texttt{airplane}, \texttt{bicycle}, \texttt{cactus}, \texttt{jellyfish}, \texttt{lighthouse}, \texttt{melon}, \texttt{police car}, \texttt{teapot}, \texttt{violin}, \texttt{volcano} \\
\bottomrule
\end{tabular}
\caption{The $65$-concept classification vocabulary used for all CLIP-based metric evaluations in~\cref{sec:experiment}. The vocabulary is the union of the five targets, their respective fine-grained subtype candidates spanning four animal category and one architectural category, and ten out-of-domain controls. See~\cref{tab:kappa_per_target} for the per-target subset $\mathcal{N}(c)$ that survives the $\delta_{c}$-filter and forms the averaging set for NP.}
\label{tab:vocab}
\end{table}

\subsection{Classifier reliability}\label{app:classifier}
We evaluate the fixed CLIP classifier on 100 labeled images per target (500 in
total), drawn from the base model's generations, and report per-concept true-positive and false-positive rates
(Table~\ref{tab:clf}). Scoring is one-vs-rest among the five targets: for each
target, its 100 images are the positives and the 400 images of the other four
targets are the negatives, so TPR is out of 100 and FPR is out of 400. The classifier attains a micro-averaged TPR of $97.0\%$
and FPR of $0.55\%$, ranging from $90\%$ TPR for \texttt{dog} to $1.25\%$ FPR
for \texttt{horse}. All methods share the same classifier, vocabulary, prompts,
and decision rule, so classifier error does not differ across methods.

\begin{table}[h]\centering\small
\caption{CLIP classifier reliability on labeled images.}\label{tab:clf}
\begin{tabular}{lcc}\toprule
Target & TPR (\%) $\uparrow$ & FPR (\%) $\downarrow$\\\midrule
\texttt{dog} & 90 & 0.25\\
\texttt{cat} & 100 & 0.25\\
\texttt{bear} & 96 & 1.00\\
\texttt{horse} & 99 & 1.25\\
\texttt{castle} & 100 & 0.00\\\midrule
Average & 97.0 & 0.55\\\bottomrule
\end{tabular}
\end{table}

%% file: sections/app/10.tex
\section{Per-Concept Results}
\label{app:per_concept_tables}

This appendix provides the per-concept evaluation tables underlying the averaged results in \Cref{tab:main_results_avg}. For each of the five target concepts, we report all metrics across the seven original unlearning methods.
The concepts span estimated entanglement coefficients $\hat\kappa \in [0.27, 0.89]$ (\Cref{tab:kappa}), enabling the $\kappa$-scaling analysis in \Cref{subsec:kappa_scaling}.

We order tables by $\hat\kappa$ in descending order to make the predicted scaling visually apparent: collateral damage (low IRA, low NP, high DamageGap) is most pronounced for high-$\hat\kappa$ concepts and attenuates as $\hat\kappa$ decreases.

\paragraph{SAeUron coverage.} 
SAeUron's released sparse autoencoder features cover the four animal concepts (\texttt{dog}, \texttt{bear}, \texttt{horse}, \texttt{cat}) but not \texttt{castle}. We mark the corresponding row with dashes in \Cref{tab:per_concept_castle} and exclude SAeUron from the averaged castle entries. Attempts to substitute SAeUron's \texttt{architectures} feature for \texttt{castle} did not yield meaningful suppression of castle generation (verified by visual inspection at the default ablation multiplier); we therefore report SAeUron only on concepts where its released features apply.

\paragraph{Reading the tables.}
All tables follow the same column structure as \Cref{tab:main_results_avg}: erasure metrics (UA, IRR), retention metrics (IRA, NP, CP), and the composite DamageGap. The \emph{Original} row reports baseline rates for the unedited $\mathcal{D}_\theta$ on each concept's evaluation set; note that baseline IRA varies substantially across concepts (e.g., {0.85} for \texttt{cat} vs. {0.54} for \texttt{tower/castle}) due to differences in CLIP evaluator accuracy on each concept's in-domain object set. This per-concept baseline variation motivates our use of normalized retention ratios in~\cref{subsec:kappa_scaling}.

\input{sections/app/pareto_per_concept}


\subsection{Dog}
\label{app:per_concept_dog}

\begin{table}[ht]
\centering
\small
\setlength{\tabcolsep}{5pt}
\begin{tabular}{llcccccc}
\toprule
& & \multicolumn{2}{c}{\textbf{Erasure}} & \multicolumn{3}{c}{\textbf{Retention}} & \\
\cmidrule(lr){3-4} \cmidrule(lr){5-7}
\textbf{Method} & \textbf{Type} & UA$\uparrow$ & IRR$\downarrow$ & IRA$\uparrow$ & NP$\uparrow$ & CP$\uparrow$ & DamageGap$\downarrow$ \\
\midrule
Original    & --          & 0     & 27.0  & 78.92 & 100   & 100   & --    \\
\midrule
ESD         & FT          & 75.0  & 17.5  & 74.15 & 93.51 & 96.0  & 2.48  \\
SalUn       & FT          & 90.0  & 8.0   & 50.92 & 58.01 & 76.0  & 17.98 \\
EDiff       & FT          & 62.0  & 23.5  & 66.61 & 80.29 & 92.99 & 12.70 \\
\midrule
AdvUnlearn  & Robust FT   & 96.0  & 20.0  & 74.30 & 63.05 & 99.0  & 35.94 \\
STEREO      & Robust FT   & 100.0 & 1.0   & 4.76  & 1.81  & 64.0  & 62.18 \\
\midrule
SEOT        & Inf.        & 68.0  & 30.0  & 79.07 & 84.20 & 99.0  & 14.80 \\
SAeUron     & Inf.        & 78.0  & 18.5  & 8.76  & 4.27  & 73.0  & 68.72 \\
\bottomrule
\end{tabular}
\caption{\textbf{Per-concept evaluation for \texttt{dog} ($\hat\kappa = 0.89$).} Discovered neighbors include 
\texttt{puppy}, \texttt{labrador}, \texttt{beagle}, \texttt{husky}, and \texttt{retriever} (full list in~\Cref{app:eval_pipeline}). As the highest-$\hat\kappa$ concept in our evaluation, \texttt{dog} exhibits the most severe robustness-retention trade-off: robust fine-tuning methods (STEREO, AdvUnlearn) achieve high UA but cause substantial IRA and NP degradation.}
\label{tab:per_concept_dog}
\end{table}

\paragraph{Observations.}
\texttt{Dog} exhibits the most severe trade-off in our evaluation, consistent with its position as the highest-$\hat\kappa$ concept. STEREO achieves complete erasure (UA = $100$) but collapses NP to $1.81$ and IRA to $4.76$, effectively destroying the model's ability to generate dog-related neighbors. SAeUron shows analogous catastrophic damage (NP = $4.27$, IRA = $8.76$) despite operating at inference time, suggesting that even modest displacement can wreck retention when neighbor activation regions densely overlap the target. ESD stands out as the only method achieving meaningful erasure (UA = $75$) while preserving neighbors well (NP = $93.51$, DamageGap = $2.48$), at the cost of higher residual leakage (IRR = $17.5$). The Pareto separation between ESD ($75, 93.51$) and STEREO ($100, 1.81$) is a $\approx 92$-point NP gap for a $25$-point UA gain, the steepest trade-off observed across our concepts and a direct empirical instantiation of the $\kappa$-scaled lower bound predicted by~\Cref{thm:tradeoff}.


\subsection{Bear}
\label{app:per_concept_bear}

\begin{table}[ht]
\centering
\small
\setlength{\tabcolsep}{5pt}
\begin{tabular}{llcccccc}
\toprule
& & \multicolumn{2}{c}{\textbf{Erasure}} & \multicolumn{3}{c}{\textbf{Retention}} & \\
\cmidrule(lr){3-4} \cmidrule(lr){5-7}
\textbf{Method} & \textbf{Type} & UA$\uparrow$ & IRR$\downarrow$ & IRA$\uparrow$ & NP$\uparrow$ & CP$\uparrow$ & DamageGap$\downarrow$ \\
\midrule
Original    & --          & 0     & 85.5  & 62.75 & 100   & 100   & --    \\
\midrule
ESD         & FT          & 94.5  & 9.0   & 44.00 & 58.02 & 95.0  & 36.97 \\
SalUn       & FT          & 99.5  & 5.5   & 51.51 & 53.89 & 89.0  & 35.10 \\
EDiff       & FT          & 96.0  & 6.0   & 38.61 & 56.75 & 94.0  & 37.24 \\
\midrule
AdvUnlearn  & Robust FT   & 99.0  & 1.0   & 42.46 & 67.37 & 100   & 32.62 \\
STEREO      & Robust FT   & 99.5  & 2.0   & 15.88 & 13.75 & 77.0  & 63.25 \\
\midrule
SEOT        & Inf.        & 94.0  & 13.5  & 50.84 & 64.14 & 96.48 & 32.34 \\
SAeUron     & Inf.        & 85.5  & 15.0  & 22.37 & 63.50 & 93.8  & 30.30 \\
\bottomrule
\end{tabular}
\caption{\textbf{Per-concept evaluation for \texttt{bear} ($\hat\kappa = 0.82$).} Discovered neighbors include \texttt{grizzly}, \texttt{panda}, \texttt{polar bear}, \texttt{koala}, and \texttt{cub} (full list in~\Cref{app:eval_pipeline}). Image counts differ for two runs: SalUn uses 272 neighbor images and SEOT uses 462 neighbor images.}
\label{tab:per_concept_bear}
\end{table}

\paragraph{Observations.}
\texttt{Bear} sits in the high-$\hat\kappa$ regime alongside horse and dog, and exhibits the predicted Pareto structure. STEREO again occupies the high-erasure / low-retention extreme (UA = $99.5$, NP = $13.75$, DamageGap = $63.25$), while AdvUnlearn achieves comparable erasure (UA = $99$) with substantially better neighbor preservation (NP = $67.37$). The clustering pattern is unusually compressed on bear: standard fine-tuning methods (ESD, SalUn, EDiff) and the inference-time methods cluster tightly between NP = $53.89$ to $67.37$, with SEOT and AdvUnlearn essentially tied at NP $\approx 64$ to $67$. Notably, no method on bear matches the (high UA, high NP) operating point achieved on lower-$\hat\kappa$ concepts: even ESD, the best performer on \texttt{dog} and \texttt{castle}, reaches only NP = $58.02$ here. This compression of the achievable retention region is a direct manifestation of the $\hat\kappa$-scaled Pareto frontier predicted by~\Cref{thm:tradeoff}.
\subsection{Horse}
\label{app:per_concept_horse}

\begin{table}[ht]
\centering
\small
\setlength{\tabcolsep}{5pt}
\begin{tabular}{llcccccc}
\toprule
& & \multicolumn{2}{c}{\textbf{Erasure}} & \multicolumn{3}{c}{\textbf{Retention}} & \\
\cmidrule(lr){3-4} \cmidrule(lr){5-7}
\textbf{Method} & \textbf{Type} & UA$\uparrow$ & IRR$\downarrow$ & IRA$\uparrow$ & NP$\uparrow$ & CP$\uparrow$ & DamageGap$\downarrow$ \\
\midrule
Original    & --          & 0     & 75.5  & 71.23 & 100   & 100   & --    \\
\midrule
ESD         & FT          & 80.0  & 33.5  & 62.92 & 81.12 & 97.0  & 15.87 \\
SalUn       & FT          & 95.0  & 28.5  & 40.92 & 39.42 & 91.99 & 52.57 \\
EDiff       & FT          & 81.0  & 31.0  & 63.84 & 76.48 & 94.0  & 17.51 \\
\midrule
AdvUnlearn  & Robust FT   & 97.5  & 23.0  & 65.38 & 71.42 & 100   & 28.57 \\
STEREO      & Robust FT   & 100.0 & 7.5   & 20.74 & 14.71 & 27.0  & 12.28 \\
\midrule
SEOT        & Inf.        & 81.5  & 38.5  & 69.69 & 97.06 & 99.0  & 1.93  \\
SAeUron     & Inf.        & 78.0  & 39.5  & 76.92 & 82.48 & 94.0  & 11.51 \\
\bottomrule
\end{tabular}
\caption{\textbf{Per-concept evaluation for \texttt{horse} ($\hat\kappa = 0.81$).} Discovered neighbors include 
\texttt{pony}, \texttt{mare}, \texttt{donkey}, \texttt{zebra}, \texttt{stallion}, \texttt{foal} and \texttt{mule} (full list in~\Cref{app:eval_pipeline}). Image counts differ for two runs: \emph{Original} uses approximately 480 neighbor and control images, and SEOT uses 58 direct images.}
\label{tab:per_concept_horse}
\end{table}

\paragraph{Observations.}
\texttt{Horse} is the running example used throughout \Cref{sec:experiment} and exhibits the canonical Pareto structure across method families. STEREO occupies the high-erasure / catastrophic-retention extreme (UA = $100$, NP = $14.71$, IRA = $20.74$); SEOT and SAeUron occupy the high-retention / weaker-erasure region (NP $\geq 82$ but IRR $\geq 38.5$); standard fine-tuning methods sit in between, with ESD reaching a clean $(80, 81.12)$ operating point at DamageGap = $15.87$. SalUn is an outlier among standard FT methods, achieving high UA = $95$ with NP = $39.42$, far below ESD and EDiff at comparable erasure, consistent with SalUn's gradient-saliency masking displacing more activation than necessary for erasure. A second notable observation is STEREO's collapsed CP = $27$, far below STEREO's CP on the other four concepts (range $64$ to $96$): in this high-$\hat\kappa$ regime, STEREO's adversarial subspace appears to spill beyond the intended object subspace into stylistically-correlated features, manifesting the broad subspace suppression discussed in~\Cref{rem:stereo}.
\subsection{Cat}
\label{app:per_concept_cat}

\begin{table}[ht]
\centering
\small
\setlength{\tabcolsep}{5pt}
\begin{tabular}{llcccccc}
\toprule
& & \multicolumn{2}{c}{\textbf{Erasure}} & \multicolumn{3}{c}{\textbf{Retention}} & \\
\cmidrule(lr){3-4} \cmidrule(lr){5-7}
\textbf{Method} & \textbf{Type} & UA$\uparrow$ & IRR$\downarrow$ & IRA$\uparrow$ & NP$\uparrow$ & CP$\uparrow$ & DamageGap$\downarrow$ \\
\midrule
Original    & --          & 0     & 96.0  & 85.50 & 100   & 100   & --    \\
\midrule
ESD         & FT          & 66.0  & 38.0  & 66.92 & 87.35 & 98.0  & 10.65  \\
SalUn       & FT          & 87.0  & 21.0  & 34.53 & 73.04 & 100   & 26.95 \\
EDiff       & FT          & 76.0  & 24.0  & 62.30 & 65.99 & 100   & 34.00 \\
\midrule
AdvUnlearn  & Robust FT   & 100.0 & 0.0   & 50.76 & 72.40 & 98.0  & 25.60 \\
STEREO      & Robust FT   & 93.0  & 7.0   & 27.68 & 27.60 & 90.0  & 62.40 \\
\midrule
SEOT        & Inf.        & 71.66 & 34.0  & 42.15 & 81.99 & 100   & 18.01 \\
SAeUron     & Inf.        & 72.0  & 37.5  & 60.25 & 85.63 & 96.0  & 10.37 \\
\bottomrule
\end{tabular}
\caption{\textbf{Per-concept evaluation for \texttt{cat} ($\hat\kappa = 0.77$).} Discovered neighbors include \texttt{kitten}, \texttt{lynx}, \texttt{leopard}, \texttt{tiger}, and \texttt{cheetah} (full list in~\Cref{app:eval_pipeline}). The original baseline IRA = $85.50$ reflects high CLIP-evaluator accuracy on cat's in-domain object set after correcting for kitten/cat label confusion (see~\Cref{app:evaluation_setup}). Image counts differ for two runs: \emph{Original} uses approximately 480 neighbor and control images, and SEOT uses 72 direct images.}
\label{tab:per_concept_cat}
\end{table}

\paragraph{Observations.}
\texttt{Cat} exhibits a trade-off structure consistent with the high-$\hat\kappa$ regime, though its base IRA = $85.50$ is the highest of the five concepts, providing more retention "headroom" than other targets. STEREO again occupies the high-erasure / catastrophic-retention extreme (UA = $93$, NP = $27.60$, IRA = $27.68$), with DamageGap = $62.40$ matching its damage on horse and bear. AdvUnlearn achieves complete erasure (UA = $100$, IRR = $0$) at NP = $72.40$, demonstrating that adversarial training can preserve neighbors more effectively than STEREO at comparable erasure strength. ESD preserves neighbors well (NP $=87.35$, DamageGap $=10.65$) but achieves the weakest erasure on this concept (UA $=66$, IRR $=38$). The STEREO IRA-retention ratio for cat ($27.68 / 85.50 = 0.32$) is consistent with the corresponding ratio on horse ($0.29$) and bear ($0.25$), reinforcing the $\hat\kappa$-scaling result in~\Cref{subsec:kappa_scaling}.
\subsection{Castle}
\label{app:per_concept_castle}

\begin{table}[ht]
\centering
\small
\setlength{\tabcolsep}{5pt}
\begin{tabular}{llcccccc}
\toprule
& & \multicolumn{2}{c}{\textbf{Erasure}} & \multicolumn{3}{c}{\textbf{Retention}} & \\
\cmidrule(lr){3-4} \cmidrule(lr){5-7}
\textbf{Method} & \textbf{Type} & UA$\uparrow$ & IRR$\downarrow$ & IRA$\uparrow$ & NP$\uparrow$ & CP$\uparrow$ & DamageGap$\downarrow$ \\
\midrule
Original    & --          & 0     & 9.5   & 53.84 & 100   & 100   & --    \\
\midrule
ESD         & FT          & 88.0  & 6.5   & 48.15 & 83.03 & 99.0  & 15.98 \\
SalUn       & FT          & 97.0  & 1.0   & 54.76 & 67.58 & 94.0  & 26.41 \\
EDiff       & FT          & 82.0  & 2.5   & 46.46 & 87.35 & 99.0  & 11.65 \\
\midrule
AdvUnlearn  & Robust FT   & 100.0 & 1.0   & 44.00 & 83.10 & 100   & 16.89 \\
STEREO      & Robust FT   & 100.0 & 0.0   & 29.38 & 31.96 & 96.0  & 64.03 \\
\midrule
SEOT        & Inf.        & 98.0  & 6.5   & 53.84 & 98.21 & 99.0  & 0.79  \\
SAeUron     & Inf.        & --    & --    & --    & --    & --    & --    \\
\bottomrule
\end{tabular}
\caption{\textbf{Per-concept evaluation for \texttt{castle} ($\hat\kappa = 0.27$).} Discovered neighbors include 
\texttt{fortress}, \texttt{palace}, \texttt{citadel}, \texttt{keep}, \texttt{cathedral} and \texttt{manor} (full list in~\Cref{app:eval_pipeline}). SAeUron's released sparse autoencoder features do not cover \texttt{castle}; we attempted to substitute its \texttt{architectures} feature, but the resulting ablation did not yield meaningful suppression of castle generation. Image counts differ for two runs: \emph{Original} uses approximately 480 neighbor and control images, and SEOT uses 60 direct and 100 indirect images.}
\label{tab:per_concept_castle}
\end{table}

\paragraph{Observations.}
\texttt{Castle} is the lowest-$\hat\kappa$ concept in our evaluation and demonstrates qualitatively different trade-off behavior from the high-$\hat\kappa$ animal concepts. Several methods achieve strong erasure with minimal retention damage: ESD reaches UA = $88$ at NP = $83.03$ (DamageGap = $15.98$), AdvUnlearn achieves complete erasure (UA = $100$) at NP = $83.10$ (DamageGap = $16.89$), and SEOT reaches UA = $98$ at NP = $98.21$ (DamageGap = $0.79$, the lowest of any (method, concept) pair in our evaluation). This is qualitatively distinct from the high-$\hat\kappa$ concepts, where no method achieves both UA $\geq 90$ and NP $\geq 80$. STEREO is the notable exception: despite the low predicted $\kappa$-floor, STEREO incurs NP = $31.96$ and DamageGap = $64.03$, comparable to its damage on the high-$\hat\kappa$ animal concepts. This is the empirical illustration of \Cref{thm:tradeoff} as a \emph{lower bound} rather than a ceiling: STEREO's REO objective broadly suppresses the spanned adversarial subspace regardless of whether neighbors actually occupy the displaced region (\Cref{rem:stereo}), producing damage well above the $\kappa$-predicted floor that other methods successfully hug. The separation between STEREO and the alternatives on \texttt{castle} suggests room for $\kappa$-aware unlearning methods that achieve robust erasure without STEREO's $\kappa$-blind over-suppression.

\subsection{Additional Target Concepts}
\label{app:additional_concepts}
To extend the five-target analysis in the main text, we evaluate \texttt{snake}, \texttt{fish}, \texttt{tree}, \texttt{butterfly}, and \texttt{car}. Their estimated entanglement coefficients span $\hat\kappa \in [0.37, 0.83]$, adding concepts between the low-overlap \texttt{castle} and the densely overlapping animal subtypes. \Cref{tab:new_concept_results} reports the unedited reference and results for EDiff, ESD, SalUn, and STEREO using the metrics defined in~\cref{sec:experiment}.
Averaged over the four methods, NP is $65.0$, $62.0$, $61.7$, $61.3$, and $58.8$ for \texttt{snake}, \texttt{fish}, \texttt{tree}, \texttt{butterfly}, and \texttt{car}, decreasing monotonically with $\hat\kappa$ ($\rs=-1.0$). Within-method correlations are $-0.4$ (EDiff), $-0.7$ (ESD), $-0.2$ (SalUn), and $-0.7$ (STEREO). STEREO again sits at the aggressive end, reaching UA $\geq 99.1$ on every new target with NP between $16.0$ and $26.6$.
\Cref{tab:new_concept_neighbors} lists the neighbors discovered for each new target with the procedure of~\cref{app:kappa_estimation}. The classification vocabulary for these targets is extended accordingly (\cref{app:vocab}).

\begin{table}[htbp]
\centering
\small
\setlength{\tabcolsep}{6pt}
\begin{tabular}{lccp{0.58\linewidth}}
\toprule
\textbf{Target} & $\hat\kappa$ & $|\mathcal{N}(c_u)|$ & \textbf{Discovered neighbors} \\
\midrule
\texttt{snake}     & 0.37 & 7 & anaconda, boa, cobra, garter snake, king snake, python, rattlesnake \\
\texttt{fish}      & 0.52 & 8 & bass, cod, eel, herring, mackerel, salmon, trout, tuna \\
\texttt{tree}      & 0.63 & 6 & cedar, cypress, maple, oak, pine, spruce \\
\texttt{butterfly} & 0.66 & 7 & birdwing, dragonfly, fritillary, monarch butterfly, moth, red admiral, swallowtail \\
\texttt{car}       & 0.83 & 7 & convertible, coupe, hatchback, minivan, sedan, suv, van \\
\bottomrule
\end{tabular}
\caption{\textbf{Discovered neighbors for the five additional targets.} As for the original targets (\cref{tab:kappa_per_target}), neighborhood size does not determine $\hat\kappa$: \texttt{fish} has the most neighbors but an intermediate $\hat\kappa$.}
\label{tab:new_concept_neighbors}
\end{table}
\begin{table*}[htbp]
\centering
\caption{Results for the five additional target concepts. Original denotes the pretrained model before unlearning. All evaluation metrics are percentages. Higher UA, IRA, NP, and CP are better; lower IRR and DamageGap are better. DamageGap is the difference between control preservation (CP) and neighbor preservation (NP).}
\label{tab:new_concept_results}
\resizebox{\textwidth}{!}{
\begin{tabular}{llccccccc}
\toprule
Concept & Method & $\hat{\kappa}$ & UA (\%) $\uparrow$ & IRR (\%) $\downarrow$ & IRA (\%) $\uparrow$ & NP (\%) $\uparrow$ & CP (\%) $\uparrow$ & DamageGap (\%) $\downarrow$ \\
\midrule
Snake
& Original  & 0.37 & 9.2 & 6.0 & 31.1 & 100.0 & 100.0 & 0.0 \\
& EraseDiff & 0.37 & 88.1 & 8.6 & 19.1 & 81.2 & 93.6 & 12.4 \\
& ESD       & 0.37 & 97.9 & 6.8 & 20.6 & 81.3 & 98.1 & 16.8 \\
& SalUn     & 0.37 & 100.0 & 0.0 & 5.5 & 72.4 & 86.2 & 13.8 \\
& STEREO    & 0.37 & 100.0 & 0.0 & 14.8 & 25.1 & 96.0 & 71.1 \\
\midrule
Fish
& Original  & 0.52 & 6.2 & 11.0 & 40.5 & 100.0 & 100.0 & 0.0 \\
& EraseDiff & 0.52 & 92.3 & 5.3 & 23.2 & 78.0 & 92.1 & 14.1 \\
& ESD       & 0.52 & 94.5 & 2.7 & 34.5 & 79.2 & 98.0 & 18.8 \\
& SalUn     & 0.52 & 100.0 & 0.0 & 14.9 & 67.5 & 97.8 & 30.3 \\
& STEREO    & 0.52 & 100.0 & 0.0 & 5.8 & 23.2 & 38.4 & 15.2 \\
\midrule
Tree
& Original  & 0.63 & 4.4 & 61.8 & 51.8 & 100.0 & 100.0 & 0.0 \\
& EraseDiff & 0.63 & 32.0 & 55.2 & 53.8 & 75.1 & 98.0 & 22.9 \\
& ESD       & 0.63 & 39.7 & 48.0 & 50.3 & 82.4 & 98.9 & 16.5 \\
& SalUn     & 0.63 & 100.0 & 0.0 & 5.8 & 62.7 & 88.0 & 25.3 \\
& STEREO    & 0.63 & 100.0 & 0.0 & 4.9 & 26.6 & 32.0 & 5.4 \\
\midrule
Butterfly
& Original  & 0.66 & 7.1 & 12.2 & 36.0 & 100.0 & 100.0 & 0.0 \\
& EraseDiff & 0.66 & 46.2 & 37.3 & 34.6 & 74.7 & 92.5 & 17.8 \\
& ESD       & 0.66 & 72.6 & 28.9 & 31.2 & 78.1 & 83.3 & 5.2 \\
& SalUn     & 0.66 & 100.0 & 0.0 & 0.9 & 73.1 & 93.9 & 20.8 \\
& STEREO    & 0.66 & 100.0 & 0.0 & 10.5 & 19.2 & 68.0 & 48.8 \\
\midrule
Car
& Original  & 0.83 & 2.6 & 60.0 & 87.5 & 100.0 & 100.0 & 0.0 \\
& EraseDiff & 0.83 & 47.4 & 49.3 & 69.2 & 78.7 & 91.2 & 12.4 \\
& ESD       & 0.83 & 60.5 & 28.6 & 75.8 & 73.8 & 98.6 & 24.8 \\
& SalUn     & 0.83 & 79.3 & 16.1 & 17.8 & 66.8 & 85.6 & 18.8 \\
& STEREO    & 0.83 & 99.1 & 4.3 & 21.1 & 16.0 & 75.2 & 59.1 \\
\bottomrule
\end{tabular}
}
\end{table*}

\subsection{Which targets admit clean erasure}\label{app:clean-erasure}
Across the per-concept results we report (the seven original methods on the
five main targets, \cref{tab:per_concept_dog,tab:per_concept_bear,tab:per_concept_horse,tab:per_concept_cat,tab:per_concept_castle}, and four methods on the five additional
targets, \cref{tab:new_concept_results}), the only (method, concept) pairs that reach both
UA $>90$ and NP $>80$ are on the two lowest-$\hat\kappa$ targets:
\texttt{castle} ($\hat\kappa=0.27$; SEOT at $98.0/98.21$, AdvUnlearn at
$100.0/83.10$) and \texttt{snake} ($\hat\kappa=0.37$; ESD at $97.9/81.3$).
No method reaches this region on any target with $\hat\kappa\geq0.52$
(\cref{fig:pareto_per_concept,tab:new_concept_results}). Because $\hat\kappa$ is computed from base-model activations alone,
it identifies these targets before any unlearning is run, which is the sense in
which we propose it as a pre-deployment diagnostic (\cref{sec:conclusion}). The damage is
also neighbor-selective rather than global: control preservation stays above
$85$ for twelve of the thirteen methods in \cref{tab:main_results_avg} (all except
STEREO), while NP drops.

%% file: sections/app/pareto_per_concept.tex
\begin{figure}[t]
\centering
\begin{tikzpicture}
\begin{axis}[
  width=0.78\linewidth, height=0.52\linewidth,
  xmin=58, xmax=102, ymin=-3, ymax=103,
  xlabel={UA (\%) $\uparrow$}, ylabel={NP (\%) $\uparrow$},
  grid=major, grid style={line width=0.2pt, draw=black!10},
  axis line style={draw=black!40}, tick style={draw=black!40},
  legend style={at={(1.03,0.5)}, anchor=west, font=\footnotesize, draw=none, fill=none, row sep=1pt},
  legend cell align=left,
  mark size=2.6pt,
]
\fill[black!6] (axis cs:90,80) rectangle (axis cs:102,103);
\draw[black!45, dashed, line width=0.6pt] (axis cs:90,-3) -- (axis cs:90,103);
\draw[black!45, dashed, line width=0.6pt] (axis cs:58,80) -- (axis cs:102,80);
\addlegendimage{only marks, mark=*, famFT}\addlegendentry{Fine-tuning}
\addlegendimage{only marks, mark=*, famRob}\addlegendentry{Robust FT}
\addlegendimage{only marks, mark=*, famInf}\addlegendentry{Inference-time}
\addlegendimage{empty legend}\addlegendentry{}
\addlegendimage{only marks, mark=*, black!55}\addlegendentry{\texttt{dog}}
\addlegendimage{only marks, mark=triangle*, mark size=3.2pt, black!55}\addlegendentry{\texttt{bear}}
\addlegendimage{only marks, mark=diamond*, mark size=3.2pt, black!55}\addlegendentry{\texttt{horse}}
\addlegendimage{only marks, mark=square*, black!55}\addlegendentry{\texttt{cat}}
\addlegendimage{only marks, mark=star, mark size=3.4pt, line width=1pt, black!55}\addlegendentry{\texttt{castle}}
\addplot[only marks, mark=*, mark size=2.8pt, draw=white, line width=0.4pt, fill=famFT, forget plot] coordinates {(75,93.51) (90,58.01) (62,80.29)};
\addplot[only marks, mark=triangle*, mark size=3.4pt, draw=white, line width=0.4pt, fill=famFT, forget plot] coordinates {(94.5,58.02) (99.5,53.89) (96,56.75)};
\addplot[only marks, mark=diamond*, mark size=3.4pt, draw=white, line width=0.4pt, fill=famFT, forget plot] coordinates {(80,81.12) (95,39.42) (81,76.48)};
\addplot[only marks, mark=square*, mark size=2.6pt, draw=white, line width=0.4pt, fill=famFT, forget plot] coordinates {(66,87.35) (87,73.04) (76,65.99)};
\addplot[only marks, mark=star, mark size=4.2pt, draw=famFT, line width=1.1pt, forget plot] coordinates {(88,83.03) (97,67.58) (82,87.35)};
\addplot[only marks, mark=*, mark size=2.8pt, draw=white, line width=0.4pt, fill=famRob, forget plot] coordinates {(96,63.05) (100,1.81)};
\addplot[only marks, mark=triangle*, mark size=3.4pt, draw=white, line width=0.4pt, fill=famRob, forget plot] coordinates {(99,67.37) (99.5,13.75)};
\addplot[only marks, mark=diamond*, mark size=3.4pt, draw=white, line width=0.4pt, fill=famRob, forget plot] coordinates {(97.5,71.42) (100,14.71)};
\addplot[only marks, mark=square*, mark size=2.6pt, draw=white, line width=0.4pt, fill=famRob, forget plot] coordinates {(100,72.40) (93,27.60)};
\addplot[only marks, mark=star, mark size=4.2pt, draw=famRob, line width=1.1pt, forget plot] coordinates {(100,83.10) (100,31.96)};
\addplot[only marks, mark=*, mark size=2.8pt, draw=white, line width=0.4pt, fill=famInf, forget plot] coordinates {(68,84.20) (78,4.27)};
\addplot[only marks, mark=triangle*, mark size=3.4pt, draw=white, line width=0.4pt, fill=famInf, forget plot] coordinates {(94,64.14) (85.5,63.50)};
\addplot[only marks, mark=diamond*, mark size=3.4pt, draw=white, line width=0.4pt, fill=famInf, forget plot] coordinates {(81.5,97.06) (78,82.48)};
\addplot[only marks, mark=square*, mark size=2.6pt, draw=white, line width=0.4pt, fill=famInf, forget plot] coordinates {(71.66,81.99) (72,85.63)};
\addplot[only marks, mark=star, mark size=4.2pt, draw=famInf, line width=1.1pt, forget plot] coordinates {(98,98.21)};
\end{axis}
\end{tikzpicture}
\caption{\textbf{Per-concept operating points of the seven original methods.} Each point is one (method, concept) pair from~\cref{app:per_concept_tables}; color encodes method family and marker encodes the target concept. The shaded region marks strong erasure together with strong neighbor preservation (UA $>90$, NP $>80$). No point for the four high-$\hat\kappa$ animal targets enters it; the only points that do are \texttt{castle} ($\hat\kappa=0.27$) under SEOT and AdvUnlearn, consistent with the $\kappa$-scaled bound of~\cref{thm:tradeoff}. SAeUron has no \texttt{castle} point (\cref{app:per_concept_castle}).}
\label{fig:pareto_per_concept}
\end{figure}

%% file: sections/app/11.tex
\section{Direct Validation: Activation Displacement Scales with $\hat\kappa$}
\label{app:displacement_kappa}
\Cref{thm:tradeoff} predicts that activation displacement at neighbor prompts scales linearly with $\kappa$ for a method achieving $\varepsilon$-robust erasure. The Neighbor Preservation (NP) measure used in~\Cref{sec:experiment} is a probability-units proxy for this displacement, related through~\cref{asm:lip}. To test~\Cref{thm:tradeoff}'s prediction in the units the theorem is stated in, we measure activation displacement directly:
\[
\Delta_{\text{neighbor}}(c_u) \;=\; \mathbb{E}_{p_{c'} \sim \mathcal{N}(c_u)}\!\left[\, \big\| \Phi_{\hat\theta}(p_{c'}) - \Phi_\theta(p_{c'}) \big\|_2 \,\right]
\]
We measure $\Delta_{\text{neighbor}}$ for each fine-tuning method on each of the five target concepts using $200$ neighbor prompts per concept, with activations captured from \texttt{up\_blocks.1.attentions.1} at DDIM step $25$ of $50$. Activations are flattened across spatial and token dimensions before computing $L_2$ distances, so absolute magnitudes here are larger than the mean-pooled token-level magnitudes reported in~\cref{app:displacement} (\cref{fig:displacement}); both views are consistent in ordering and in the prediction they probe (within-target ordering in~\cref{app:displacement}, cross-target $\hat\kappa$-scaling here). Inference-time methods are omitted: SEOT's text-embedding intervention does not propagate measurably to this layer in $L_2$ distance, and SAeUron's sparse-feature ablation produces displacement values an order of magnitude larger than fine-tuning methods, reflecting a real mechanism difference rather than direct comparability with weight-update methods.

\paragraph{Displacement scales monotonically with $\hat\kappa$.}
\Cref{tab:displacement} reports mean neighbor displacement across the five concepts and five fine-tuning methods. All five methods exhibit a positive Spearman rank correlation between $\hat\kappa$ and displacement, four of them at $\rs\geq0.70$, with ESD showing strict monotonicity ($\rs = +1.00$) and STEREO closely tracking ($\rs = +0.90$ with a single inversion between \texttt{cat} and \texttt{horse}, whose $\hat\kappa$ values differ by only $0.04$). Across all five methods, displacement on \texttt{castle} ($\hat\kappa = 0.27$) is substantially lower than the average displacement across the four high-$\hat\kappa$ animal concepts: ratios range from $1.68\times$ (ESD) to $3.18\times$ (STEREO). The qualitative form predicted by \Cref{thm:tradeoff}, displacement increasing with $\hat\kappa$, is empirically supported.

\begin{table}[h]
\centering
\small
\setlength{\tabcolsep}{6pt}
\begin{tabular}{lccccc|c}
\toprule
\textbf{Method} & \texttt{castle} & \texttt{cat} & \texttt{horse} & \texttt{bear} & \texttt{dog} & $\rs$ \\
& ($\hat\kappa{=}0.27$) & ($0.77$) & ($0.81$) & ($0.82$) & ($0.89$) & vs.\ $\hat\kappa$ \\
\midrule
ESD        & 128.4 & 194.8 & 213.6 & 221.0 & 235.4 & $+1.00$ \\
STEREO     & \phantom{0}75.9 & 234.2 & 233.4 & 240.9 & 257.0 & $+0.90$ \\
AdvUnlearn & \phantom{0}71.6 & 110.5 & 150.9 & 123.2 & 123.7 & $+0.70$ \\
EDiff      & \phantom{0}67.8 & 163.0 & 178.1 & 172.1 & 175.6 & $+0.70$ \\
SalUn      & \phantom{0}88.1 & 220.7 & 215.6 & 220.4 & 215.3 & $+0.10$ \\
\bottomrule
\end{tabular}
\caption{\textbf{Mean activation displacement at neighbor prompts.} Each cell reports $\Delta_{\text{neighbor}}$ in $L_2$ activation distance, averaged over $200$ neighbor prompts. $\rs$ is computed across the five concepts. Note that $n = 5$ concepts limits the statistical power of significance testing on individual methods; we report $\rs$ as a descriptive monotonicity measure. The consistency of positive correlations across all five methods (range $+0.10$ to $+1.00$; four of five at $\geq 0.70$) provides aggregate evidence for the $\kappa$-scaling predicted by~\Cref{thm:tradeoff}.}
\label{tab:displacement}
\end{table}

\paragraph{STEREO's displacement is consistent with a single local Lipschitz constant.}
\Cref{thm:tradeoff} predicts that for a method operating exactly at the bound, the ratio $\kappa(1-\varepsilon)/(\Delta_{\text{neighbor}} + c_0)$ should equal a single Lipschitz constant $L$ across all concepts. We compute this ratio for each method using STEREO's saturated $\varepsilon \approx 0.035$ (mean IRR across the five concepts; see~\cref{tab:main_results_avg})  and an additive constant $c_0 = 0.05$.
 The resulting $L_{\text{tight}}$ values are reported in~\Cref{tab:l_tight}. STEREO's $L_{\text{tight}}$ varies by only $7.8\%$ across the five concepts ($L \in [0.0032, 0.0035]$); other methods exhibit spreads of $24\%$ to $57\%$. STEREO's single-$L$ consistency is precisely the quantitative prediction of \Cref{thm:tradeoff}: a method operating at the bound should exhibit a $\kappa$-and-concept-independent local Lipschitz constant. Among the five methods evaluated, only STEREO satisfies this prediction, consistent with STEREO occupying the high-$\Delta$, low-$\varepsilon$ regime where the bound is tightest.

\begin{table}[ht]
\centering
\small
\setlength{\tabcolsep}{6pt}
\begin{tabular}{lcccccc}
\toprule
\textbf{Method} & \texttt{castle} & \texttt{cat} & \texttt{horse} & \texttt{bear} & \texttt{dog} & Spread \\
\midrule
\textbf{STEREO} & 0.0035 & 0.0032 & 0.0034 & 0.0033 & 0.0034 & \textbf{7.8\%} \\
EDiff           & 0.0039 & 0.0046 & 0.0045 & 0.0047 & 0.0050 & 23.6\% \\
SalUn           & 0.0030 & 0.0034 & 0.0037 & 0.0036 & 0.0040 & 29.5\% \\
ESD             & 0.0021 & 0.0039 & 0.0037 & 0.0036 & 0.0037 & 53.4\% \\
AdvUnlearn      & 0.0037 & 0.0068 & 0.0053 & 0.0065 & 0.0070 & 57.1\% \\
\bottomrule
\end{tabular}
\caption{\textbf{Per-concept $L_{\text{tight}}$ values.} $L_{\text{tight}} = \kappa(1-\varepsilon)/(\Delta_{\text{neighbor}} + c_0)$ is the Lipschitz constant value that places \Cref{thm:tradeoff}'s bound exactly at the observed displacement. If a method operates at the bound, $L_{\text{tight}}$ should be approximately constant across concepts. STEREO is the only method exhibiting this consistency ($7.8\%$ spread), supporting its operation at the bound's saturated-erasure regime.}
\label{tab:l_tight}
\end{table}

\paragraph{Local versus global Lipschitz.}
The local $L \approx 0.0034$ inferred from STEREO's displacement is much smaller than the global $\hat L = 0.174$ reported in~\Cref{fig:lipschitz}. This reflects the well-known phenomenon that probability changes saturate near classification decision boundaries: the global Lipschitz constant measured across diverse prompt-pair perturbations averages over both saturated and unsaturated regimes, while the local constant near $\mathcal{R}_{c_u}$ during erasure reflects unsaturated behavior specifically. The two values are compatible measurements of different but related quantities; principled estimation of $L$ that distinguishes these regimes is a direction for future work.

%% file: sections/app/13.tex
\section{Text-Space versus Denoiser Entanglement}\label{app:textspace}
Our analysis measures entanglement inside the denoising network. To examine
whether overlap is already present in the text encoder, we apply the same
estimator, candidate pools, and prompts to text-encoder representations.

\paragraph{T5 (FLUX.1-schnell).} We extract padding-masked, mean-pooled token
representations from T5 blocks 0, 11, and 23 and from the final encoder output
for the five targets. At all four layers and for all five targets,
$\hat\kappa=0$. This is not caused by unrelated concepts entering the
neighborhoods: no control is selected at any layer, 19 of 20 target-layer
conditions have positive silhouette scores, and the nearest final-layer
centroids are semantically appropriate (\texttt{dog}/\texttt{puppy},
\texttt{cat}/\texttt{kitten}, \texttt{horse}/\texttt{pony},
\texttt{castle}/\texttt{palace}), yet lie $1.61$ to $2.97\times$ beyond the
discovery threshold. The same pools and estimator give
$\hat\kappa=0.27$, $0.77$, $0.81$, $0.82$, $0.89$ in the SD v1.4 denoiser.

\paragraph{CLIP text space versus denoiser within SDXL.} To compare the two
spaces within one pipeline, we estimate $\hat\kappa$ for a ten-concept set
spanning several semantic groups in SDXL's CLIP text-embedding space and in its
denoiser cross-attention space (\cref{tab:sdxl-text-vs-denoiser}). In CLIP space, nine of ten concepts obtain
$\hat\kappa=1.0$ with semantically related neighborhoods; only \texttt{castle}
obtains $0$. The denoiser produces heterogeneous values, including $0.72$ for
\texttt{horse}, $0.68$ for \texttt{car}, and $0.60$ for \texttt{dog}, with the
remaining concepts at $0$. Neighbors here are discovered only within this
ten-concept set, the visualization pool of~\cref{app:activation_regions},
rather than from each target's full candidate pool, so these values are not
comparable to $\hat\kappa_{\text{SDXL}}$ in~\cref{tab:sdxl}; they serve only to
contrast the two representation spaces under identical conditions.

\begin{table}[h]\centering\small
\caption{$\hat\kappa$ for the ten-concept set in SDXL's CLIP text-embedding
space and in its denoiser cross-attention space.}\label{tab:sdxl-text-vs-denoiser}
\begin{tabular}{lcc}\toprule
Concept & $\hat\kappa$ (CLIP text) & $\hat\kappa$ (denoiser)\\\midrule
\texttt{horse}  & 1.00 & 0.72\\
\texttt{pony}   & 1.00 & 0.00\\
\texttt{donkey} & 1.00 & 0.00\\
\texttt{deer}   & 1.00 & 0.00\\
\texttt{dog}    & 1.00 & 0.60\\
\texttt{cat}    & 1.00 & 0.00\\
\texttt{bear}   & 1.00 & 0.00\\
\texttt{car}    & 1.00 & 0.68\\
\texttt{truck}  & 1.00 & 0.00\\
\texttt{castle} & 0.00 & 0.00\\\bottomrule
\end{tabular}
\end{table}

\paragraph{Interpretation.} Text-space geometry varies across encoders: CLIP
shows substantial overlap under our estimator, whereas T5 shows none. Neither
comparison is a controlled variance decomposition (pooling differs between text
and denoiser features, and the T5 comparison spans model families), so we read
them as evidence that the source and severity of entanglement depend on both
the text representation and the denoiser computation, not as proof that one
encoder is less entangled in general.

%% file: sections/app/14.tex
\section{Cross-Architecture Experiments}\label{app:arch}

\paragraph{SDXL.} SDXL shares the cross-attention structure of SD v1.4, so
$\hat\kappa$ is directly computable. Table~\ref{tab:sdxl} reports ESD under the
same erasure and evaluation protocol for three concepts. \Cref{fig:umap_sdxl} shows that SDXL's cross-attention activations have qualitatively similar geometry to SD v1.4 (\cref{fig:umap_kappa}): related animal concepts occupy overlapping regions, and \texttt{castle} lies apart from them. Castle's own architectural neighbors are not in this visualization, and its estimated entanglement with them is higher on SDXL ($\hat\kappa_{\text{SDXL}}=0.72$) than on SD v1.4 ($0.27$). The absolute level of entanglement therefore shifts across architectures, but $\hat\kappa$ and observed collateral damage agree within each: on SDXL, NP orders \texttt{cat} $>$ \texttt{castle} $>$ \texttt{dog} exactly as $\hat\kappa_{\text{SDXL}}$ predicts (\cref{tab:sdxl}).

\begin{figure}[h]
\centering
\includegraphics[width=0.7\linewidth]{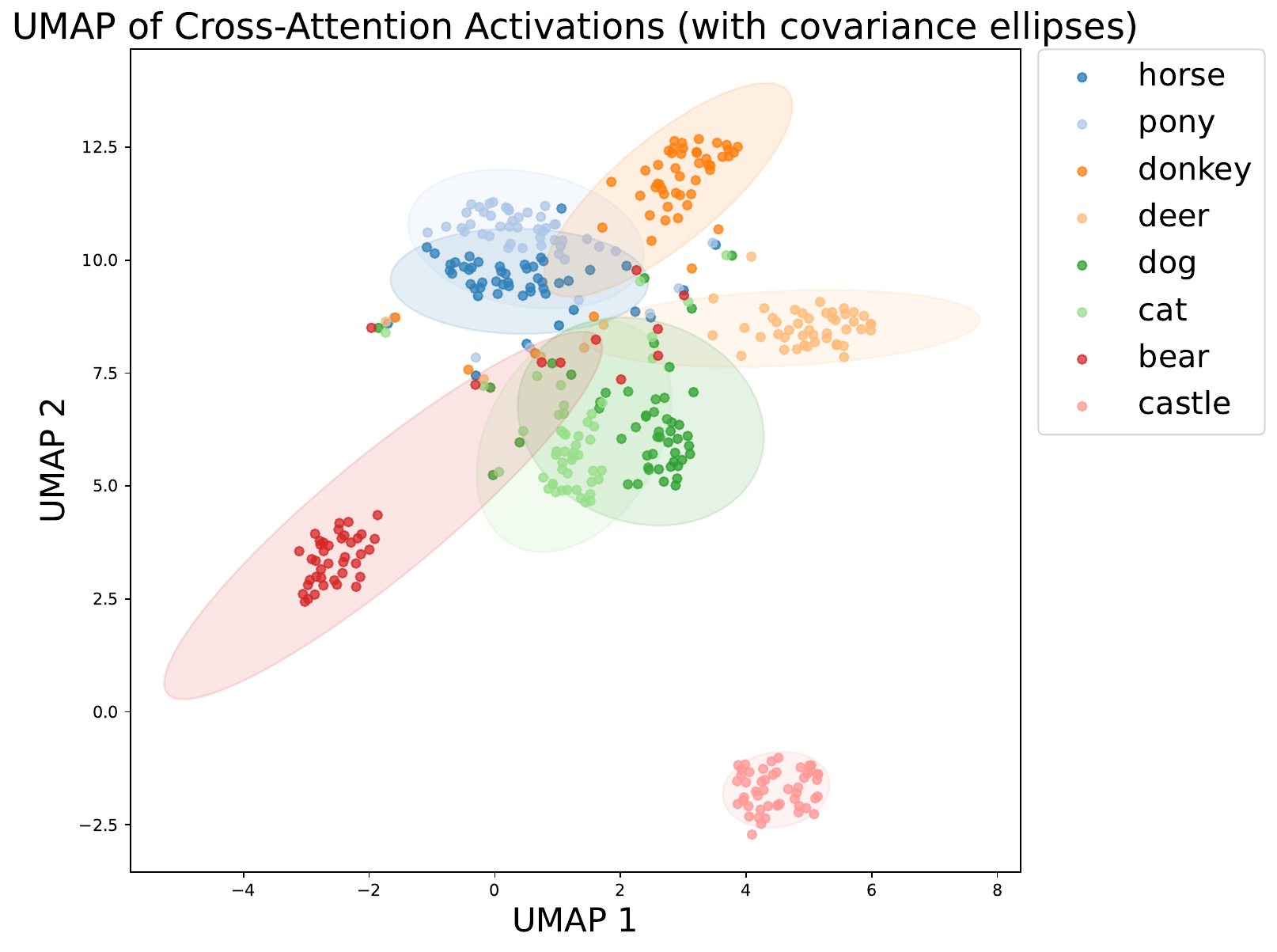}
\caption{UMAP projection of SDXL cross-attention activations for eight
concepts, with covariance ellipses. As in SD v1.4 (\cref{fig:umap_kappa}),
related animal concepts (\texttt{horse}/\texttt{pony}/\texttt{donkey},
\texttt{dog}/\texttt{cat}) overlap, while \texttt{castle} lies apart from the animal clusters. Castle's architectural neighbors are not shown; its entanglement with them is $\hat\kappa_{\text{SDXL}}=0.72$ (\cref{tab:sdxl}).}
\label{fig:umap_sdxl}
\end{figure}

\begin{table}[h]\centering\small
\caption{ESD on SDXL. NP decreases monotonically with $\hat\kappa_{\text{SDXL}}$.}
\label{tab:sdxl}
\begin{tabular}{lcccc}\toprule
Concept & $\hat\kappa_{\text{SDXL}}$ & UA $\uparrow$ & NP $\uparrow$ & DamageGap $\downarrow$\\\midrule
\texttt{cat} & 0.65 & 99 & 90.9 & 9.1\\
\texttt{castle} & 0.72 & 91 & 85.0 & 15.0\\
\texttt{dog} & 0.92 & 77 & 77.1 & 22.9\\\bottomrule
\end{tabular}
\end{table}

\paragraph{FLUX (MMDiT with T5).} FLUX uses joint text-image attention and has
no cross-attention block analogous to \texttt{mid\_block.attentions.0}. A proxy
$\kappa$ estimated from the joint-attention denoiser saturates near $1$ for all
targets (\texttt{dog} $1.00$, \texttt{cat} $0.97$, \texttt{castle} $0.97$) and
also for two unrelated controls (\texttt{airplane}$\to$\texttt{teapot} $0.99$,
bootstrap 95\% CI $[0.94,1.00]$; \texttt{teapot}$\to$\texttt{airplane} $1.00$,
CI $[0.95,1.00]$; centroid distance $0.011$). The proxy therefore does not
resolve concept structure at the layers we can access, and we treat
$\kappa$-scaling on MMDiT as untested. Table~\ref{tab:flux} nonetheless shows
that erasure produces neighbor-selective damage: NP collapses while unrelated
controls are preserved.

\begin{table}[h]\centering\small
\caption{ESD on FLUX (percent).}\label{tab:flux}
\begin{tabular}{lccccc}\toprule
Concept & UA $\uparrow$ & IRR $\downarrow$ & NP $\uparrow$ & CP $\uparrow$ & DamageGap $\downarrow$\\\midrule
Base (\texttt{dog}) & 8.2 & 26.5 & 100.0 & 100.0 & 0.0\\
\texttt{dog} & 78.0 & 8.5 & 35.6 & 96.9 & 61.3\\
\texttt{cat} & 83.0 & 9.5& 37.8 & 95.3 & 57.6\\
\texttt{castle} & 85.0 &20.0 & 40.5 & 98.9 & 58.3\\\bottomrule
\end{tabular}
\end{table}

\paragraph{Comparison across backbones.} For ESD on \texttt{dog}, erasure is
nearly matched across backbones (UA $75$, $77$, and $78$ on SD v1.4, SDXL, and
FLUX), while NP is $93.5$, $77.1$, and $35.6$. The robustness-retention
trade-off therefore persists on newer backbones rather than being specific to
SD v1.4. Absolute NP is not strictly comparable across architectures because
their image distributions differ.

%% file: sections/app/15.tex
\section{Broader Impacts}\label{app:impacts}
Our analysis is instantiated in diffusion models, but the mechanism only requires activation displacement, local smoothness of the activation map, and target-neighbor overlap in representation space. Similar $\kappa$-scaled trade-offs may arise in autoregressive language models, classifiers, and multimodal models, but verifying this requires task-specific definitions of concepts, activations, and preservation.